\newtheorem{theorem}{Theorem}
\newtheorem{lemma}{Lemma}
\title{Meta-Weight-Net: Learning an Explicit Mapping \\For Sample Weighting}
\author{%
	Jun Shu\\
	Xi'an Jiaotong University\\
	\texttt{xjtushujun@gmail.com} \\
	\And
	Qi Xie \\
	Xi'an Jiaotong University\\
	\texttt{xq.liwu@stu.xjtu.edu.cn} \\
	\And
	Lixuan Yi \\
	Xi'an Jiaotong University\\
	\texttt{yilixuan@stu.xjtu.edu.cn} \\
	\And
	Qian Zhao \\
	Xi'an Jiaotong University\\
	\texttt{timmy.zhaoqian@mail.xjtu.edu.cn} \\
	\And
	Sanping Zhou \\
	Xi'an Jiaotong University\\
	\texttt{sanpingzhou@stu.xjtu.edu.cn} \\
	\And
	Zongben Xu \\
	Xi'an Jiaotong University\\
	\texttt{zbxu@mail.xjtu.edu.cn} \\
	\And
	Deyu Meng\thanks{Corresponding author.}\\
	Xi'an Jiaotong University\\
	\texttt{dymeng@mail.xjtu.edu.cn} \\
}
\begin{document}
	\maketitle
	\vspace{0mm}
	\begin{abstract}
		Current deep neural networks (DNNs) can easily overfit to biased training data with corrupted labels or class imbalance. Sample re-weighting strategy is commonly used to alleviate this issue by designing a weighting function mapping from training loss to sample weight, and then iterating between weight recalculating and classifier updating. Current approaches, however, need manually pre-specify the weighting function as well as its additional hyper-parameters. It makes them fairly hard to be generally applied in practice due to the significant variation of proper weighting schemes relying on the investigated problem and training data. To address this issue, we propose a method capable of adaptively learning an explicit weighting function directly from data. The weighting function is an MLP with one hidden layer, constituting a universal approximator to almost any continuous functions, making the method able to fit a wide range of weighting functions including those assumed in conventional research. Guided by a small amount of unbiased meta-data, the parameters of the weighting function can be finely updated simultaneously with the learning process of the classifiers. Synthetic and real experiments substantiate the capability of our method for achieving proper weighting functions in class imbalance and noisy label cases, fully complying with the common settings in traditional methods, and more complicated scenarios beyond conventional cases. This naturally leads to its better accuracy than other state-of-the-art methods. Source code is available at \url{https://github.com/xjtushujun/meta-weight-net}.
	\end{abstract}\vspace{0mm}
	\setcounter{footnote}{0}
	\section{Introduction}
	\label{introduction}
	DNNs have recently obtained impressive good performance on various applications due to their powerful capacity for modeling complex input patterns. However, DNNs can easily overfit to biased training data\footnote{We call the training data biased when they are generated from a joint sample-label distribution deviating from the distribution of evaluation/test set\cite{ren2018learning}.}, like those containing corrupted labels \cite{zhang2016understanding} or with class imbalance\cite{he2008learning}, leading to their poor performance in generalization in such cases. This robust deep learning issue has been theoretically illustrated in multiple literatures~\cite{neyshabur2017exploring,arpit2017closer,kawaguchi2017generalization,novak2018sensitivity,galar2012review,buda2018systematic}.
	
	
	In practice, however, such biased training data are commonly encountered. For instance, practically collected training samples always contain corrupted labels~\cite{sukhbaatar2014learning,azadi2015auxiliary,goldberger2016training,li2017learning,vahdat2017toward,hendrycks2018using,han2018co,zhang2018generalized}. A typical example is a dataset roughly collected from a crowdsourcing system \cite{bi2014learning} or search engines \cite{liang2016learning,zhuang2017attend}, which would possibly yield a large amount of noisy labels.
	Another popular type of biased training data is those with class imbalance. Real-world datasets are usually depicted as skewed distributions, with a long-tailed configuration. A few classes account for most of the data, while most classes are under-represented. Effective learning with these biased training data, which is regarded to be biased from evaluation/test ones, is thus an important while challenging issue in machine learning~\cite{ren2018learning,jiang2018mentornet}.
	
	Sample reweighting approach is a commonly used strategy against this robust learning issue. The main methodology is to design a weighting function mapping from training loss to sample weight (with hyper-parameters), and then iterates between calculating weights from current training loss values and minimizing weighted training loss for classifier updating. There exist two entirely contradictive ideas for constructing such a loss-weight mapping. One makes the function monotonically increasing as depicted in Fig. \ref{fig1a}, i.e., enforce the learning to more emphasize samples with larger loss values since they are more like to be uncertain hard samples located on the classification boundary. Typical methods of this category include AdaBoost \cite{freund1997decision,sun2007cost}, hard negative mining \cite{malisiewicz2011ensemble} and focal loss \cite{lin2018focal}. This sample weighting manner is known to be necessary for class imbalance problems, since it can prioritize the minority class with relatively higher training losses.
	
	On the contrary, the other methodology sets the weighting function as monotonically decreasing, as shown in Fig. \ref{fig1b}, to take samples with smaller loss values as more important ones. The rationality lies on that these samples are more likely to be high-confident ones with clean labels. Typical methods include self-paced learning(SPL) \cite{kumar2010self}, iterative reweighting \cite{fernando2003reweight,zhang2018generalized} and multiple variants~\cite{jiang2014easy,jiang2014self,wang2017robust}. This weighting strategy has been especially used in noisy label cases, since it inclines to suppress the effects of samples with extremely large loss values, possibly with corrupted incorrect labels.
	
	\begin{figure}[t]
		\centering
		\subfigcapskip=-2mm
		\subfigure[Weight function in focal loss]{
			\label{fig1a} 
			\includegraphics[width=0.30\textwidth]{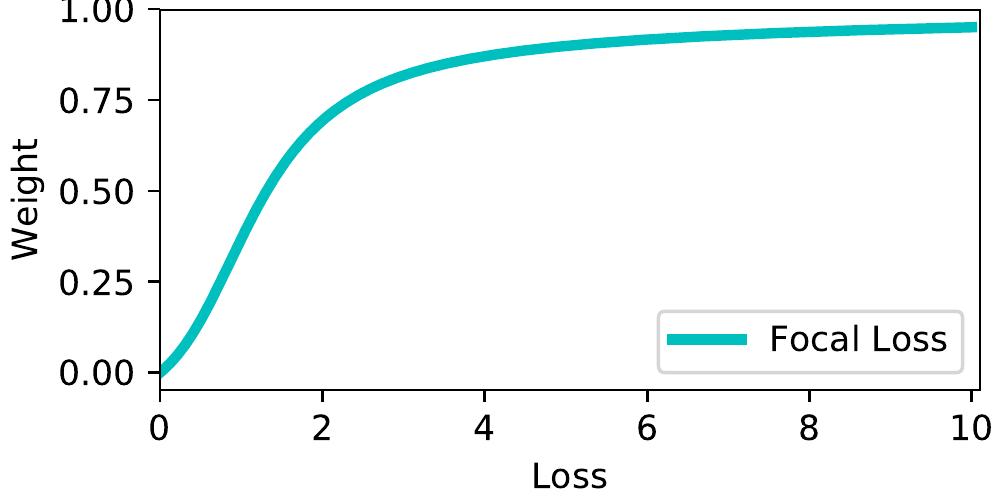}} \ \ \
		\subfigure[Weight function in SPL]{
			\label{fig1b} 
			\includegraphics[width=0.30\textwidth]{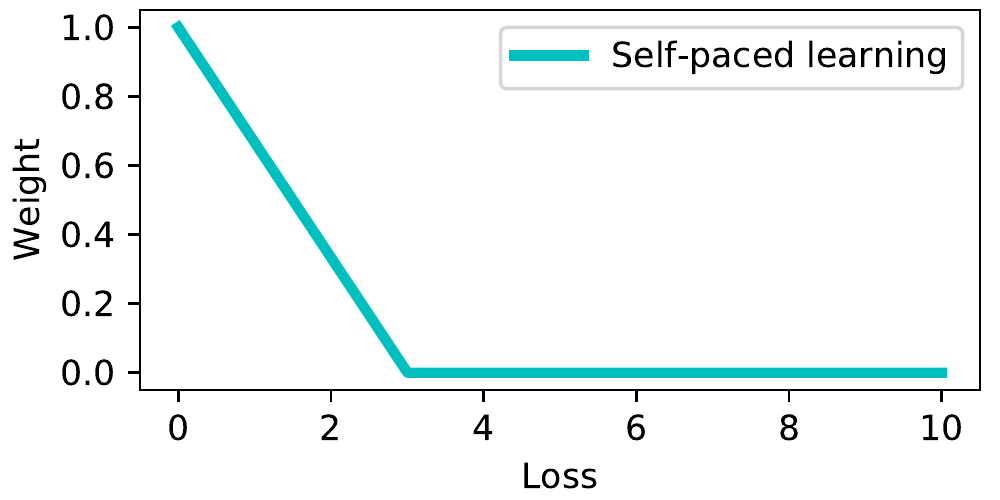}} \ \ \
		\subfigure[Meta-Weight-Net architecture]{
			\label{fig1c} 
			\includegraphics[width=0.30\textwidth]{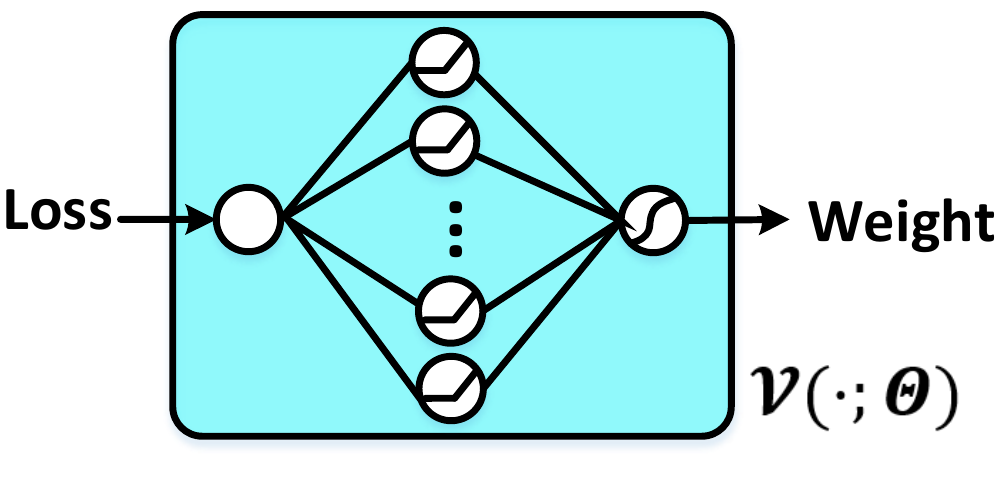}} \ \ \
		\\ \vspace{-0.2cm}
		\subfigure[MW-Net function learned in class imbalance case]{
			\label{fig1d} 
			\includegraphics[width=0.30\textwidth]{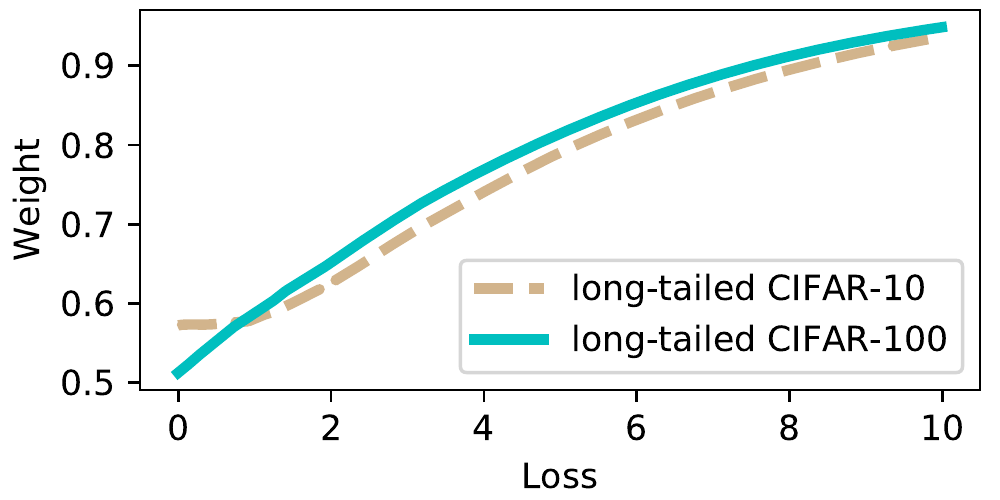}}\ \ \
		\centering
		\subfigure[MW-Net function learned in corrupter labels case]{
			\label{fig1e} 
			\includegraphics[width=0.30\textwidth]{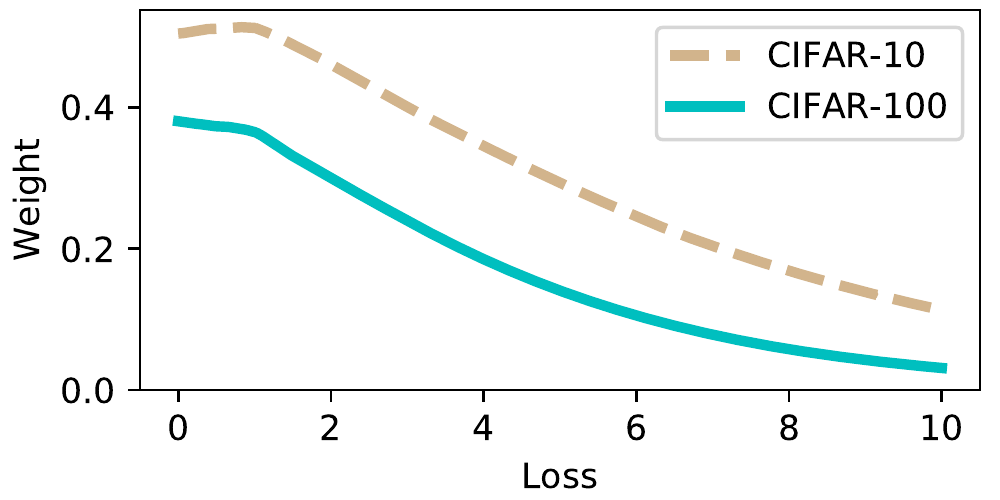}}\ \ \
		\centering
		\subfigure[MW-Net function learned in \ real Clothing1M dataset]{
			\label{fig1f} 
			\includegraphics[width=0.30\textwidth]{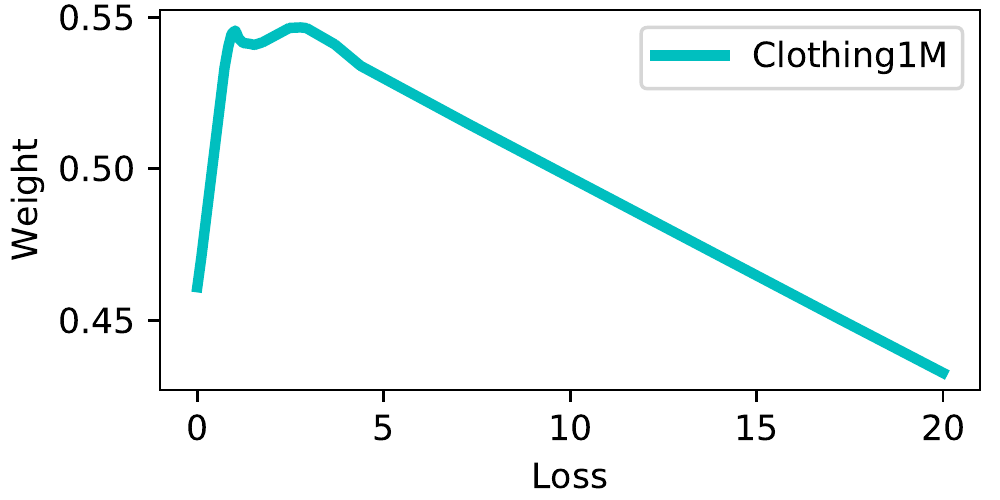}} \vspace{-0.3cm}
		\caption{(a)-(b) weight functions set in focal loss and self-paced learning (SPL). (c) Meta-Weighting-Net architecture. (d)-(f) Meta-Weighting-Net functions learned in class imbalance (imbalanced factor 100), noisy label (40\% uniform noise), and real dataset, respectively, by our method.}\label{ss} 
		\vspace{0cm}
	\end{figure}
	
	Although these sample reweighting methods help improve the robustness of a learning algorithm on biased training samples, they still have evident deficiencies in practice. On the one hand, current methods need to manually set a specific form of weighting function based on certain assumptions on training data. This, however, tends to be infeasible when we know little knowledge underlying data or the label conditions are too complicated, like the case that the training set is both imbalanced and noisy. On the other hand, even when we specify certain weighting schemes, like focal loss~\cite{lin2018focal} or SPL~\cite{kumar2010self}, they inevitably involve hyper-parameters, like focusing parameter in the former and age parameter in the latter, to be manually preset or tuned by cross-validation. This tends to further raise their application difficulty and reduce their performance stability in real problems.
	
	To alleviate the aforementioned issue, this paper presents an adaptive sample weighting strategy to automatically learn an explicit weighting function from data. The main idea is to parameterize the weighting function as an MLP (multilayer perceptron) network with only one hidden layer (as shown in Fig. \ref{fig1c}), called \textbf{\emph{Meta-Weight-Net}}, which is theoretically a universal approximator for almost any continuous function~\cite{csaji2001approximation}, and then use a small unbiased validation set (meta-data) to guide the training of all its parameters. The explicit form of the weighting function can be finally attained specifically suitable to the learning task.
	
	In summary, this paper makes the following three-fold contributions:\vspace{0mm}
	
	1) We propose to automatically learn an explicit loss-weight function, parameterized by an MLP from data in a meta-learning manner. Due to the universal approximation capability of this weight net, it can finely fit a wide range of weighting functions including those used in conventional research.\vspace{0mm}
	
	2) Experiments verify that the weighting functions learned by our method highly comply with manually preset weighting manners used in tradition in different training data biases, like class imbalance and noisy label cases as shown in Fig. \ref{fig1d} and \ref{fig1e}), respectively. This shows that the weighting scheme learned by the proposed method inclines to help reveal deeper understanding for data bias insights, especially in complicated bias cases where the extracted weighting function is with complex tendencies (as shown in Fig. \ref{fig1f}).\vspace{0mm}
	
	3) The insights of why the proposed method works can be well interpreted. Particularly, the updating equation for Meta-Weight-Net parameters can be explained by that the sample weights of those samples better complying with the meta-data knowledge will be improved, while those violating such meta-knowledge will be suppressed. This tallies with our common sense on the problem: we should reduce the influence of those highly biased ones, while emphasize those unbiased ones.
	
	The paper is organized as follows. Section \ref{section2} presents the proposed meta-learning method as well as the detailed algorithm and analysis of its convergence property. Section \ref{section3} discusses related work. Section \ref{section4} demonstrates experimental results and the conclusion is finally made.\vspace{0mm}

	\section{The Proposed Meta-Weight-Net Learning Method}\label{section2}
	
	\subsection{The Meta-learning Objective}\vspace{-1mm}
	
	Consider a classification problem with the training set $\{x_i,y_i\}_{i=1}^N$,  where $x_i$ denotes the $i$-th sample, $y_i \in \{0,1\}^c$ is the label vector over $c$ classes, and $N$ is the number of the entire training data. $f(x,\mathbf{w})$ denotes the classifier, and $\mathbf{w}$ denotes its parameters. In current applications, $f(x,\mathbf{w})$ is always set as a DNN. We thus also adopt DNN, and call it the classifier network for convenience in the following.
	
	Generally, the optimal classifier parameter $\mathbf{w^*}$ can be extracted by minimizing the loss $\frac{1}{N}\sum_{i=1}^N $ \ ${\ell}(y_i,f(x_i,\mathbf{w}))$ calculated on the training set. For notation convenience, we denote that $L_i^{train}(\mathbf{w})= {\ell}(y_i,f(x_i,\mathbf{w}))$.
	In the presence of biased training data, sample re-weighting methods enhance the robustness of training by imposing weight $\mathcal{V}(L^{train}_i(\mathbf{w});\Theta)$ on the $i$-th sample loss, where $\mathcal{V}(\ell;\Theta)$ denotes the weight net, and $\Theta$ represents the parameters contained in it. The optimal parameter $\mathbf{w}$ is calculated by minimizing the following weighted loss: \vspace{-1.5mm}
	\begin{align}\label{eq3}
	\mathbf{w}^*(\Theta) =  \mathop{\arg\min}_{\mathbf{w}} \ \mathcal{L}^{train} (\mathbf{w};\Theta) \triangleq \frac{1}{N} \sum_{i=1}^N \mathcal{V}(L_i^{train}(\mathbf{w});\Theta)L_i^{train}(\mathbf{w}). \vspace{-2mm}
	\end{align}
	\textbf{Meta-Weight-Net:} Our method aims to automatically learn the hyper-parameters $\Theta$ in a meta-learning manner. To this aim, we formulate $\mathcal{V}(L_i(\mathbf{w});\Theta)$ as a MLP network with only one hidden layer containing 100 nodes, as shown in Fig. \ref{fig1c}. We call this weight net as \textbf{\emph{Meta-Weight-Net}} or \textbf{\emph{MW-Net}} for easy reference. Each hidden node is with ReLU activation function, and the output is with the Sigmoid activation function, to guarantee the output located in the interval of $[0,1]$. Albeit simple, this net is known as a universal approximator for almost any continuous function~\cite{csaji2001approximation}, and thus can fit a wide range of weighting functions including those used in conventional research.
	
	\textbf{Meta learning process.}
	The parameters contained in MW-Net can be optimized by using the meta learning idea \cite{wu2018learning,andrychowicz2016learning,dehghani2017learning,franceschi2018bilevel}.
	Specifically, assume that we have a small amount unbiased meta-data set (i.e., with clean labels and balanced data distribution) $\{x_i^{(meta)},y_i^{(meta)}\}_{i=1}^M$, representing the meta-knowledge of ground-truth sample-label distribution, where $M$ is the number of meta-samples and $M\ll N$. The optimal parameter $\Theta^*$ can be obtained by minimizing the following meta-loss: \vspace{-2mm}
	\begin{align}\label{eq4}
	{\Theta}^* = \mathop{\arg\min}_{{\Theta}} \  \mathcal{L}^{meta}(\mathbf{w}^*(\Theta)) \triangleq \frac{1}{M} \sum_{i=1}^M L_i^{meta}(\mathbf{w}^*(\Theta)),
	\end{align}  
	where $L_i^{meta}(\mathbf{w}) = \ell\left(y_i^{(meta)},f(x_i^{(meta)},\mathbf{w})\right)$ is calculated on meta-data. \vspace{-2mm}
	
	\subsection{The Meta-Weight-Net Learning Method}  
	Calculating the optimal $\Theta^*$ and $\mathbf{w}^*$ require two nested loops of optimization. Here we adopt an online strategy to update $\Theta$ and $\mathbf{w}$ through a single optimization loop, respectively, to guarantee the efficiency of the algorithm.

	\textbf{Formulating learning manner of classifier network.}
	As general network training tricks, we employ SGD to optimize the training loss (\ref{eq3}).
	Specifically, in each iteration of training, a mini-batch of training samples $\{(x_i,y_i), 1\leq i \leq n\}$ is sampled, where $n$ is the mini-batch size. Then the updating equation of the classifier network parameter can be formulated by moving the current $\mathbf{w}^{(t)}$ along the descent direction of the objective loss in Eq. (\ref{eq3}) on a mini-batch training data:   \vspace{-4mm}
	\begin{align}\label{eq6}
	\begin{split}
	\hat{\mathbf{w}}^{(t)}(\Theta) = \mathbf{w}^{(t)} - \alpha \frac{1}{n}\times
	\sum_{i=1}^n    \mathcal{V}(L_i^{train}(\mathbf{w}^{(t)});\Theta)\nabla_{\mathbf{w}} L_i^{train}(\mathbf{w})\Big|_{\mathbf{w}^{(t)}},
	\end{split} 
	\end{align}
	where $\alpha$ is the step size.

	\begin{figure*}[t]
		\includegraphics[width=\textwidth]{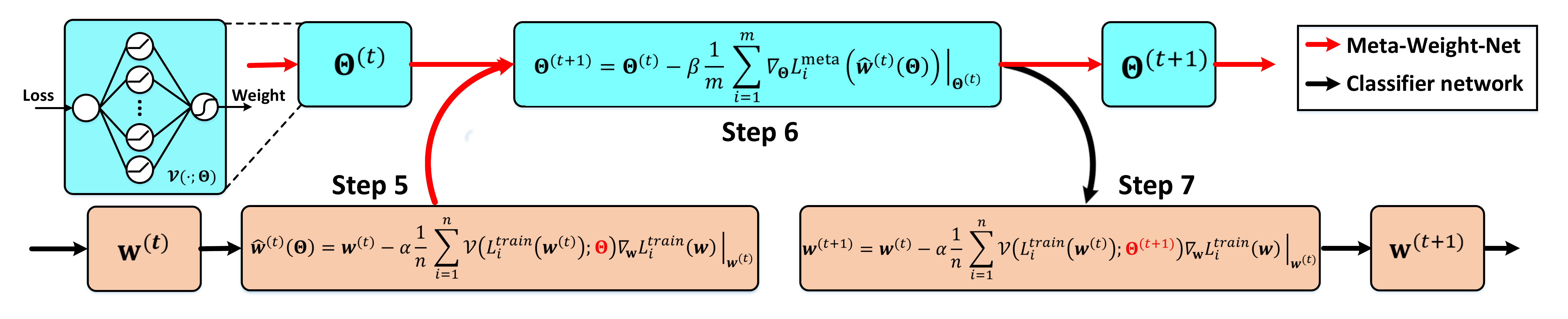}\\ \vspace{-4mm}
		\caption{Main flowchart of the proposed MW-Net Learning algorithm (steps 5-7 in Algorithm \ref{alg:example}).}\label{fig1}  \vspace{0mm}
	\end{figure*}
	
	\begin{algorithm}[h]
		\vspace{0mm}
		\renewcommand{\algorithmicrequire}{\textbf{Input:}}
		\renewcommand{\algorithmicensure}{\textbf{Output:}}
		\caption{The MW-Net Learning Algorithm}
		\label{alg:example}
		\begin{algorithmic}[1]  \small
			\REQUIRE  Training data $\mathcal{D}$, meta-data set $\widehat{\mathcal{D}}$, batch size $n,m$, max iterations $T$.
			\ENSURE  Classifier network parameter $\mathbf{w}^{(T)}$
			\STATE Initialize classifier network parameter $\mathbf{w}^{(0)}$ and Meta-Weight-Net parameter $\Theta^{(0)}$.
			\FOR{$t=0$ {\bfseries to} $T-1$}
			\STATE $\{x,y\} \leftarrow$ SampleMiniBatch($\mathcal{D},n$).
			\STATE $\{x^{(meta)},y^{(meta)}\} \leftarrow$ SampleMiniBatch($\widehat{\mathcal{D}},m$).
			\STATE Formulate the classifier learning function $\hat{\mathbf{w}}^{(t)}(\Theta)$ by Eq. (\ref{eq6}).
			\STATE Update $\Theta^{(t+1)}$ by Eq. (\ref{eq7}).
			\STATE Update $\mathbf{w}^{(t+1)}$ by Eq. (\ref{eq8}).
			\ENDFOR
		\end{algorithmic}
	\end{algorithm}
	\vspace{2mm}
	
	\textbf{Updating parameters of Meta-Weight-Net:} After receiving the feedback of the classifier network parameter updating formulation $\hat{\mathbf{w}}^{(t)}(\Theta)$ \footnote{Notice that $\Theta$ here is a variable instead of a quantity, which makes $\hat{\mathbf{w}}^{t}(\Theta)$ a function of $\Theta$ and the gradient in Eq. (\ref{eq7}) be able to be computed.}from the Eq .(\ref{eq6}), the parameter $\Theta$ of the Meta-Weight-Net can then be readily updated guided by Eq. (\ref{eq4}), i.e., moving the current parameter $\Theta^{(t)}$ along the objective gradient of Eq. (\ref{eq4}) calculated on the meta-data:
	\begin{align}\label{eq7}
	\Theta^{(t+1)} =  \Theta^{(t)} -\beta \frac{1}{m}\sum_{i=1}^{m} \nabla_{ \Theta} L_i^{meta}(\hat{\mathbf{w}} ^{(t)}(\Theta))\Big|_{\Theta^{(t)}},
	\end{align}
	\vspace{-2mm}
	where $\beta$ is the step size.\vspace{2mm}

	\textbf{Updating parameters of classifier network:} Then, the updated $\Theta^{(t+1)}$ is employed to ameliorate the parameter $\mathbf{w}$ of the classifier network, i.e., \vspace{-6mm}
	\begin{align}\label{eq8}
	\begin{split}
	\mathbf{w}^{(t+1)}= \mathbf{w}^{(t)} - \alpha \frac{1}{n}\times
	\sum_{i=1}^n   \mathcal{V}(L_i^{train}(\mathbf{w}^{(t)});\Theta^{(t+1)}) \nabla_{\mathbf{w}} L_i^{train}(\mathbf{w})\Big|_{\mathbf{w}^{(t)}}.
	\end{split}
	\end{align}

	The MW-Net Learning algorithm can then be summarized in Algorithm \ref{alg:example}, and Fig. \ref{fig1} illustrates its main implementation process (steps 5-7).
	All computations of gradients can be efficiently implemented by automatic differentiation techniques and generalized to any deep learning architectures of classifier network. The algorithm can be easily implemented using popular deep learning frameworks like PyTorch \cite{paszke2017automatic}. It is easy to see that both the classifier network and the MW-Net gradually ameliorate their parameters during the learning process based on their values calculated in the last step, and the weights can thus be updated in a stable manner, as clearly shown in Fig. \ref{curve}.

	\subsection{Analysis on the Weighting Scheme of Meta-Weight-Net}
	
	The computation of Eq. (\ref{eq7}) by backpropagation can be rewritten as\footnote{Derivation can be found in supplementary materials.}:
	\begin{align}\label{eq10}
	\Theta^{(t+1)} =  \Theta^{(t)} + \frac{\alpha\beta}{n}\sum_{j=1}^n \left(\frac{1}{m} \sum_{i=1}^{m}G_{ij} \right)  \frac{\partial \mathcal{V}(L_j^{train}(\mathbf{w}^{(t)});\Theta)}{\partial \Theta}\Big|_{ \Theta^{(t)}},
	\end{align}
	where $G_{ij}=\frac{\partial L_i^{meta} (\hat{\mathbf{w}})}{\partial \hat{\mathbf{w}}}\Big|_{\hat{\mathbf{w}}^{(t)}}^T \frac{\partial L_j^{train} (\mathbf{w})}{\partial \mathbf{w}} \Big|_{\mathbf{w}^{(t)}}$.
	Neglecting the coefficient $\frac{1}{m} \sum_{i=1}^{m} G_{ij} $, it is easy to see that each term in the sum orients to the ascend gradient of the weight function $\mathcal{V}(L^{train}_j(\mathbf{w}^{(t)});\Theta)$. $\frac{1}{m} \sum_{i=1}^{m} G_{ij}$, the coefficient imposed on the $j$-th gradient term, represents the similarity between the gradient of the $j$-th training sample computed on training loss and the average gradient of the mini-batch meta data calculated on meta loss. That means if the learning gradient of a training sample is similar to that of the meta samples, then it will be considered as beneficial for getting right results and its weight tends to be more possibly increased. Conversely, the weight of the sample inclines to be suppressed. This understanding is consistent with why well-known MAML works \cite{finn2017model,nichol2018reptile,eshratifar2018gradient}.

	\subsection{Convergence of the MW-Net Learning algorithm}
	Our algorithm involves optimization of two-level objectives, and therefore we show theoretically that our method converges to the critical points of both the meta and training loss function under some mild conditions in Theorem \ref{th1} and \ref{th2}, respectively. The proof is listed in the supplementary material.
	\begin{theorem} \label{th1}
		Suppose the loss function $\ell$ is Lipschitz smooth with constant $L$, and $\mathcal{V}(\cdot)$ is differential with a $\delta$-bounded gradient and twice differential with its Hessian bounded by $\mathcal{B}$, and the loss function $\ell$ have $\rho$-bounded gradients with respect to training/meta data. Let the learning rate $\alpha_t$ satisfies $\alpha_t=\min\{1,\frac{k}{T}\}$, for some $k>0$, such that $\frac{k}{T}<1$, and $\beta_t, 1\leq t\leq N$ is a monotone descent sequence, $\beta_t =\min\{\frac{1}{L},\frac{c}{\sigma\sqrt{T}}\} $ for some $c>0$, such that $\frac{\sigma\sqrt{T}}{c}\geq L$ and $\sum_{t=1}^\infty \beta_t \leq \infty,\sum_{t=1}^\infty \beta_t^2 \leq \infty $. Then the proposed algorithm can achieve $\mathbb{E}[ \|\nabla G(\Theta^{(t)})\|_2^2] \leq \epsilon$ in $\mathcal{O}(1/\epsilon^2)$ steps. More specifically,
		\begin{align}
		\min_{0\leq t \leq T} \mathbb{E}[ \|\nabla \mathcal{L}^{meta}(\Theta^{(t)})\|_2^2] \leq \mathcal{O}(\frac{C}{\sqrt{T}}),
		\end{align}
		where $C$ is some constant independent of the convergence process, and $\sigma$ is the variance of drawing uniformly mini-batch sample at random.
	\end{theorem}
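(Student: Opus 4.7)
The plan is to treat the update on $\Theta$ as a stochastic gradient descent scheme on the meta objective $G(\Theta) := \mathcal{L}^{meta}(\hat{\mathbf{w}}(\Theta))$ evaluated along the trajectory of classifier iterates $\mathbf{w}^{(t)}$, and then to invoke the standard non-convex SGD convergence argument. The proof will therefore decompose into three conceptual pieces: (i) establishing smoothness of the meta surrogate $G$ in $\Theta$, (ii) controlling the bias and variance of the mini-batch gradient estimator used in Eq.~(\ref{eq7}), and (iii) combining (i) and (ii) in a descent lemma that telescopes over $t=0,\dots,T-1$ to yield the $\mathcal{O}(1/\sqrt{T})$ rate.

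First I would differentiate the one-step look-ahead $\hat{\mathbf{w}}^{(t)}(\Theta)$ in Eq.~(\ref{eq6}) with respect to $\Theta$; since the only $\Theta$-dependence enters through $\mathcal{V}(L_i^{train}(\mathbf{w}^{(t)});\Theta)$, the Jacobian $\nabla_\Theta \hat{\mathbf{w}}^{(t)}(\Theta)$ is a sum of outer products whose operator norm is bounded by $\alpha_t\,\delta\,\rho$ by the $\delta$-bounded gradient of $\mathcal{V}$ and the $\rho$-bounded gradient of $\ell$. Applying the chain rule, $\nabla_\Theta G(\Theta) = \nabla_{\hat{\mathbf{w}}}\mathcal{L}^{meta}(\hat{\mathbf{w}}^{(t)}(\Theta))^\top \nabla_\Theta \hat{\mathbf{w}}^{(t)}(\Theta)$, so boundedness of the meta-loss gradient combined with the above gives a bounded first derivative of $G$. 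Differentiating once more and using the $\mathcal{B}$-bounded Hessian of $\mathcal{V}$ plus the $L$-Lipschitz smoothness of $\ell$ (which controls $\nabla^2_{\mathbf{w}}L_i^{train}$), I would extract a Lipschitz constant $L_G$ for $\nabla G$, depending polynomially on $\alpha, \delta, \rho, \mathcal{B}, L$.

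Next I would write the stochastic meta-gradient actually used in Eq.~(\ref{eq7}) as $\hat g^{(t)} = \nabla G(\Theta^{(t)}) + \xi^{(t)}$, where $\xi^{(t)}$ collects the mini-batch noise from the meta batch of size $m$; by the assumption of uniform sampling its conditional mean is zero and its second moment is bounded by some $\sigma^2$ (this is exactly where the $\sigma$ in the statement enters). Plugging into the $L_G$-smoothness inequality applied to $G$ yields the descent lemma
\begin{equation}
\mathbb{E}\bigl[G(\Theta^{(t+1)})\bigr] \le \mathbb{E}\bigl[G(\Theta^{(t)})\bigr] - \beta_t\,\mathbb{E}\bigl[\|\nabla G(\Theta^{(t)})\|_2^2\bigr] + \tfrac{L_G \beta_t^2}{2}\bigl(\mathbb{E}\|\nabla G(\Theta^{(t)})\|_2^2 + \sigma^2\bigr),
\end{equation}
and because the step size condition $\beta_t \le 1/L \le 1/L_G$ (absorbing constants) makes the coefficient of $\|\nabla G\|^2$ on the right strictly negative, rearranging gives $\tfrac{\beta_t}{2}\mathbb{E}\|\nabla G(\Theta^{(t)})\|_2^2 \le \mathbb{E}[G(\Theta^{(t)}) - G(\Theta^{(t+1)})] + \tfrac{L_G\sigma^2}{2}\beta_t^2$. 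Summing from $t=0$ to $T-1$, telescoping the $G$-gap against $G(\Theta^{(0)}) - \inf G$, and plugging in the prescribed $\beta_t = \min\{1/L, c/(\sigma\sqrt{T})\}$ yields the claimed $\min_{0\le t \le T}\mathbb{E}\|\nabla\mathcal{L}^{meta}(\Theta^{(t)})\|_2^2 \le \mathcal{O}(C/\sqrt{T})$, where $C$ absorbs $G(\Theta^{(0)}) - \inf G$, $c$, $L_G$, and the constants from (i).

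The main obstacle I anticipate is step (i): the smoothness constant $L_G$ depends on second derivatives of $\hat{\mathbf{w}}^{(t)}(\Theta)$ in $\Theta$ which in turn depend on both $\nabla^2_\Theta\mathcal{V}$ and cross terms with $\nabla^2_{\mathbf{w}}L^{train}$, so bookkeeping to show a uniform-in-$t$ bound (independent of the drifting $\mathbf{w}^{(t)}$) is the most delicate part. A secondary subtlety is that the stochastic estimator $\hat g^{(t)}$ is unbiased for the gradient of the single-step surrogate $G$ evaluated at the current $\mathbf{w}^{(t)}$, not for the gradient of the true bilevel objective $\mathcal{L}^{meta}(\mathbf{w}^*(\Theta))$; the statement of Theorem~\ref{th1} therefore has to be read as convergence of the online surrogate, and I would be careful to phrase the final bound in exactly that form.
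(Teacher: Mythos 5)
Your overall strategy --- a smoothness lemma for the meta-gradient in $\Theta$ (your step (i) is essentially the paper's Lemma on Lipschitz continuity of $\nabla_\Theta \mathcal{L}^{meta}(\hat{\mathbf{w}}^{(t)}(\Theta))$, with the same bound via $\delta$, $\rho$, $L$, $\mathcal{B}$ and the same operator-norm estimate $\alpha_t\delta\rho$ on the Jacobian of the look-ahead), followed by a zero-mean bounded-variance noise decomposition and a non-convex SGD descent lemma --- is the same route the paper takes. However, there is a genuine gap in your step (iii). Your descent inequality and telescope treat $G(\Theta)=\mathcal{L}^{meta}(\hat{\mathbf{w}}(\Theta))$ as a \emph{fixed} function of $\Theta$, but the surrogate is time-varying: at step $t$ it is $G_t(\Theta)=\mathcal{L}^{meta}(\hat{\mathbf{w}}^{(t)}(\Theta))$ with $\hat{\mathbf{w}}^{(t)}$ anchored at the drifting classifier iterate $\mathbf{w}^{(t)}$. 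Consequently $G_t(\Theta^{(t)})-G_t(\Theta^{(t+1)})$ does not telescope against $G_{t+1}(\Theta^{(t+1)})-G_{t+1}(\Theta^{(t+2)})$, and the sum of the mismatch terms $G_{t+1}(\Theta^{(t+1)})-G_t(\Theta^{(t+1)})$ could a priori grow like $T$, destroying the $\mathcal{O}(1/\sqrt{T})$ rate. You flag this as a ``secondary subtlety'' and propose to reinterpret the statement, but reinterpretation does not repair the telescope.

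The paper closes exactly this hole with the two-term decomposition
\begin{equation*}
\mathcal{L}^{meta}(\hat{\mathbf{w}}^{(t+1)}(\Theta^{(t+1)}))-\mathcal{L}^{meta}(\hat{\mathbf{w}}^{(t)}(\Theta^{(t)}))
=\bigl\{\mathcal{L}^{meta}(\hat{\mathbf{w}}^{(t+1)}(\Theta^{(t+1)}))-\mathcal{L}^{meta}(\hat{\mathbf{w}}^{(t)}(\Theta^{(t+1)}))\bigr\}
+\bigl\{\mathcal{L}^{meta}(\hat{\mathbf{w}}^{(t)}(\Theta^{(t+1)}))-\mathcal{L}^{meta}(\hat{\mathbf{w}}^{(t)}(\Theta^{(t)}))\bigr\},
\end{equation*}
bounding the first (drift) term by $\alpha_t\rho^2(1+\tfrac{\alpha_t L}{2})$ using $L$-smoothness of $\ell$ in $\mathbf{w}$ and the $\rho$-bounded gradients, and applying your descent lemma only to the second term. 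The cumulative drift $\sum_{t=1}^{T}\alpha_t\rho^2(2+\alpha_t L)$ is then $\mathcal{O}(k\rho^2(2+L))$ --- a constant --- precisely because $\alpha_t=\min\{1,\tfrac{k}{T}\}$ makes $\sum_{t=1}^T\alpha_t\le k$. This is the step where the hypothesis on $\alpha_t$ is actually consumed; in your proposal that hypothesis never enters the telescoping argument, which is the signal that a piece is missing. To complete your proof you need to add the per-step drift bound and verify that the prescribed $\alpha_t$ schedule keeps its sum $\mathcal{O}(1)$ uniformly in $T$.
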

	
	\begin{theorem}\label{th2}
		The condions in Theorem \ref{th1} hold, then we have:
		\begin{align}
		\lim_{t\rightarrow \infty} \mathbb{E}[\| \nabla \mathcal{L}^{train}(\mathbf{w}^{(t)};\Theta^{(t+1)})  \|_2^2]=0.
		\end{align}
	\end{theorem}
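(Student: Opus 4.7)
The plan is to view the classifier update in Eq.~(\ref{eq8}) as mini-batch SGD on the time-varying objective $f_t(\mathbf{w}) := \mathcal{L}^{train}(\mathbf{w};\Theta^{(t+1)})$ and to show that its slow drift---inherited from the convergence of $\Theta^{(t)}$ already established in Theorem~\ref{th1}---is compatible with a standard descent-lemma argument. The skeleton is the classical Robbins--Monro SGD convergence proof, augmented by one extra drift term coming from the change $\Theta^{(t+1)} \to \Theta^{(t+2)}$.

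First I would verify that $f_t$ is $\tilde L$-smooth in $\mathbf{w}$ with a constant independent of $t$. Each summand $\mathcal{V}(L_i^{train}(\mathbf{w});\Theta)\,L_i^{train}(\mathbf{w})$ is the product of the sigmoid output $\mathcal{V}\in[0,1]$, which has bounded first and second derivatives in $\mathbf{w}$ (combining the $\delta$-bounded gradient and $\mathcal{B}$-bounded Hessian of $\mathcal{V}$ with the Lipschitz smoothness and $\rho$-bounded gradient of $\ell$), with $\ell$ itself; the chain and product rules then give a uniform smoothness constant. Applying the descent lemma to $f_t$ along $\mathbf{w}^{(t+1)} = \mathbf{w}^{(t)} - \alpha_t \hat g_t$, together with the mini-batch identities $\mathbb{E}_t[\hat g_t] = \nabla f_t(\mathbf{w}^{(t)})$ and $\mathbb{E}_t\|\hat g_t\|^2 \leq \|\nabla f_t(\mathbf{w}^{(t)})\|^2 + \sigma^2$, yields the standard one-step inequality
\begin{align*}
\mathbb{E}_t[f_t(\mathbf{w}^{(t+1)})] \leq f_t(\mathbf{w}^{(t)}) - \bigl(\alpha_t - \tfrac{\tilde L}{2}\alpha_t^2\bigr)\|\nabla f_t(\mathbf{w}^{(t)})\|^2 + \tfrac{\tilde L}{2}\alpha_t^2\sigma^2 .
\end{align*}

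Second, I would control the objective drift $f_{t+1}(\mathbf{w}) - f_t(\mathbf{w})$. Since $\mathcal{V}$ is $\delta$-Lipschitz in $\Theta$ and, from Eq.~(\ref{eq7}) plus the bounded-gradient assumptions on $\ell$ and $\mathcal{V}$, we have $\|\Theta^{(t+2)} - \Theta^{(t+1)}\| = O(\beta_{t+1})$, this drift is $O(\beta_{t+1})$ whenever the iterates stay in a bounded region (which one justifies by a standard induction on the recurrence for $\mathbf{w}^{(t)}$). Taking total expectation, adding the drift, and telescoping from $0$ to $T-1$ with $f_t\geq 0$ gives a bound of the form
\begin{align*}
\sum_{t=0}^{T-1} \bigl(\alpha_t - \tfrac{\tilde L}{2}\alpha_t^2\bigr)\,\mathbb{E}\|\nabla f_t(\mathbf{w}^{(t)})\|^2 \leq f_0(\mathbf{w}^{(0)}) + \tfrac{\tilde L\sigma^2}{2}\sum_{t}\alpha_t^2 + C\sum_{t}\beta_t .
\end{align*}
Under the Robbins--Monro-type step sizes of Theorem~\ref{th1} ($\sum\beta_t^2<\infty$, $\sum\alpha_t^2<\infty$, $\alpha_t\to 0$), a Robbins--Siegmund / monotone-limit argument then forces $\mathbb{E}\|\nabla f_t(\mathbf{w}^{(t)})\|^2 \to 0$, which is exactly the claim.

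The main obstacle I anticipate is the drift term $C\sum_t\beta_t$, which is \emph{not} summable under the stated conditions and would naively let the right-hand side diverge. To close this gap one must either absorb the drift into a compound Lyapunov function $V_t = f_t(\mathbf{w}^{(t)}) + \eta\,g(\Theta^{(t+1)})$ that is a supermartingale once the two levels are combined, or invoke Theorem~\ref{th1} more finely: along the convergent meta-trajectory, $\beta_{t+1}\|\nabla\mathcal{L}^{meta}(\Theta^{(t)})\|$ is square-summable, so by Cauchy--Schwarz the effective drift becomes square-summable rather than merely summable. Either route reduces the perturbation to one that Robbins--Siegmund handles, after which the remaining ingredients---uniform smoothness of the weighted loss, unbiasedness of the mini-batch, and the descent inequality---follow the textbook SGD template.
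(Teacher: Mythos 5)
Your decomposition is the same one the paper uses---split the one-step change of the training objective into a drift in $\Theta$ (controlled by $\beta_t$ and the $\delta$-bounded gradient of $\mathcal{V}$) plus a descent step in $\mathbf{w}$ (controlled by the descent lemma and the mini-batch noise bound)---and your telescoped conclusion $\sum_t \alpha_t\,\mathbb{E}\|\nabla \mathcal{L}^{train}(\mathbf{w}^{(t)};\Theta^{(t+1)})\|_2^2 < \infty$ is exactly the intermediate result the paper reaches. The genuine gap is the last step. Writing $b_t:=\mathbb{E}\|\nabla \mathcal{L}^{train}(\mathbf{w}^{(t)};\Theta^{(t+1)})\|_2^2$, the facts $\sum_t\alpha_t b_t<\infty$ and $\sum_t\alpha_t=\infty$ only give $\liminf_t b_t=0$; neither Robbins--Siegmund nor a ``monotone-limit argument'' upgrades this to $\lim_t b_t=0$, since nothing so far prevents $b_t$ from spiking on a sparse set of iterations. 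The missing ingredient is a bounded-increment condition $|b_{t+1}-b_t|\le C\alpha_t$, which combined with the two summability facts forces $b_t\to 0$ (this is precisely Lemma~\ref{lemma2}, quoted from Mairal). Establishing that increment bound is a nontrivial final block of the paper's proof: it uses $|(\|a\|+\|b\|)(\|a\|-\|b\|)|\le\|a+b\|\,\|a-b\|$, the $\rho$-bounded gradients and $L$-smoothness to control $\|\nabla\mathcal{L}^{train}(\mathbf{w}^{(t+1)};\Theta^{(t+2)})-\nabla\mathcal{L}^{train}(\mathbf{w}^{(t)};\Theta^{(t+1)})\|$ by the size of the parameter move, and the two update rules to bound that move by $O(\alpha_t\beta_t)$, yielding $|b_{t+1}-b_t|\le 2\sqrt{2(\sigma^2+\rho^2)}L\rho\beta_1\alpha_t$. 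Your sketch never identifies this requirement, so as written it delivers only the $\liminf$ statement, not the claimed limit.

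A secondary point: the obstacle you single out---that the drift contributes a non-summable $C\sum_t\beta_t$---does not arise under the hypotheses as the paper actually uses them. The condition written as $\sum_t\beta_t\le\infty$ is invoked in the proof as genuine summability, so the first-order drift term is bounded by $M\delta\rho\sum_t\beta_t<\infty$ directly, and no compound Lyapunov function or Cauchy--Schwarz refinement via Theorem~\ref{th1} is needed. Your proposed fixes would be the right move under the standard Robbins--Monro scaling $\sum_t\beta_t=\infty$, but they address a difficulty that the paper's (admittedly unusual) step-size assumptions have already assumed away.
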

	\vspace{0mm}
	\section{Related Work} \label{section3}\vspace{0mm}
	
	\textbf{Sample Weighting Methods.}
	The idea of reweighting examples can be dated back to dataset resampling \cite{chawla2002smote,dong2017class} or instance re-weight \cite{zadrozny2004learning}, which pre-evaluates the sample weights as a pre-processing step by using certain prior knowledge on the task or data. To make the sample weights fit data more flexibly, more recent researchers focused on pre-designing a weighting function mapping from training loss to sample weight, and dynamically ameliorate weights during training process \cite{elkan2001foundations,khan2018cost}. There are mainly two manners to design the weighting function. One is to make it monotonically increasing, specifically effective in class imbalance case. Typical methods include the boosting algorithm (like AdaBoost \cite{freund1997decision}) and multiple of its variations \cite{johnson2019survey},
	hard example mining \cite{malisiewicz2011ensemble} and focal loss \cite{lin2018focal}, which impose larger weights to ones with larger loss values.
	On the contrary, another series of methods specify the weighting function as monotonically decreasing, especially used in noisy label cases. For example, SPL \cite{kumar2010self} and its extensions \cite{jiang2014easy,jiang2014self}, iterative reweighting ~\cite{fernando2003reweight,zhang2018generalized} and other recent work \cite{chang2017active,wang2017robust}, pay more focus on easy samples with smaller losses.
	The limitation of these methods are that they all need to manually pre-specify the form of weighting function as well as their hyper-parameters, raising their difficulty to be readily used in real applications.
	
	\textbf{Meta Learning Methods.} Inspired by meta-learning developments \cite{lake2015human,shu2018small,ravi2016optimization,finn2017model,snell2017prototypical}, recently some methods were proposed to learn an adaptive weighting scheme from data to make the learning more automatic and reliable.
	Typical methods along this line include FWL \cite{dehghani2017fidelity}, learning to teach \cite{fan2018learning,wu2018learning} and MentorNet \cite{jiang2018mentornet} methods, whose weight functions are designed as a Bayesian function approximator, a DNN with attention mechanism, a bidirectional LSTM network, respectively. Instead of only taking loss values as inputs as classical methods, the weighting functions they used (i.e., the meta-learner), however, are with much more complex forms and required to input complicated information (like sample features). This makes them not only hard to succeed good properties possessed by traditional methods, but also to be easily reproduced by general users. \vspace{0mm}
	
	A closely related method, called L2RW \cite{ren2018learning}, adopts a similar meta-learning mechanism compared with ours. The major difference is that the weights are implicitly learned there, without an explicit weighting function. This, however, might lead to unstable weighting behavior during training and unavailability for generalization. In contrast, with the explicit yet simple Meta-Weight-Net, our method can learn the weight in a more stable way, as shown in Fig. \ref{curve}, and can be easily generalized from a certain task to related other ones (see in the supplementary material).
	
	\textbf{Other Methods for Class Imbalance.} Other methods for handling data imbalance include: \cite{wang2017learning,cui2018large} tries to transfer the knowledge learned from major classes to minor classes. The metric learning based methods have also been developed to effectively exploit the tailed data to improve the generalization ability, e.g., triple-header loss \cite{huang2016learning} and range loss \cite{zhang2017range}.
	
	\textbf{Other Methods for Corrupted Labels.} For handling noisy label issue, multiple methods have been designed by correcting noisy labels to their true ones via a supplemental clean label inference step \cite{azadi2015auxiliary,vahdat2017toward,veit2017learning,li2017learning,jiang2018mentornet,ren2018learning,hendrycks2018using}.
	For example,
	GLC \cite{hendrycks2018using} proposed a loss correction approach to mitigate the effects of label noise on DNN classifiers. Other methods along this line include the Reed \cite{reed2014training}, Co-training \cite{han2018co}, D2L \cite{ma2018dimensionality} and S-Model \cite{goldberger2016training}.
	\vspace{0mm}
	\section{Experimental Results}  \label{section4}\vspace{0mm}
	To evaluate the capability of the proposed algorithm, we implement experiments on data sets with class imbalance and noisy label issues, and real-world dataset with more complicated data bias.
	\vspace{0mm}
	\subsection{Class Imbalance Experiments}\label{imbalance}\vspace{0mm}
	We use Long-Tailed CIFAR dataset \cite{cui2019class}, that reduces the number of training samples per class according to an exponential function $n = n_i \mu^i$, where $i$ is the class index, $n_i$ is the original number of training images and $\mu \in (0,1)$. The imbalance factor of a dataset is defined as the number of training samples in the largest class divided by the smallest.  We trained ResNet-32 \cite{he2016deep} with softmax cross-entropy loss by SGD with a momentum 0.9, a weight decay $5 \times 10^{-4}$, an initial learning rate 0.1. The learning rate of ResNet-32 is divided by 10 after 80 and 90 epoch (for a total 100 epochs), and the learning rate of WN-Net is fixed as $10^{-5}$. We randomly selected 10 images per class in validation set as the meta-data set. The compared methods include: 1) \textbf{BaseModel}, which uses a softmax cross-entropy loss to train ResNet-32 on the training set; 2) \textbf{Focal loss} \cite{lin2018focal} and \textbf{Class-Balanced} \cite{cui2019class} represent the state-of-the-arts of the predefined sample reweighting techniques; 3) \textbf{Fine-tuning}, fine-tune the result of BaseModel on the meta-data set;
	4) \textbf{L2RW} \cite{ren2018learning}, which leverages an additional meta-dataset to adaptively assign weights on training samples.\vspace{0mm}
	
	Table \ref{classim} shows the classification accuracy of ResNet-32 on the test set and confusion matrices are displayed in Fig. \ref{conmatrix} (more details are listed in the supplementary material). It can be observed that: 1) Our algorithm evidently outperforms other competing methods on datasets with class imbalance, showing its robustness in such data bias case; 2) When imbalance factor is 1, i.e., all classes are with same numbers of samples, fine-tuning runs best, and our method still attains a comparable performance; 3) When imbalance factor is 200 on long-tailed CIFAR-100, the smallest class has only two samples. An extra fine-tuning achieves performance gain, while our method still perform well in such extreme data bias.\vspace{0mm}
	
	To understand the weighing scheme of MW-Net, we depict the tendency curve of weight with respect to loss by the learned MW-Net in Fig. \ref{fig1d}, which complies with the classical optimal weighting manner to such data bias. i.e., larger weights should be imposed on samples with relatively large losses, which are more likely to be minority class sample.

	\begin{table}
		\caption{Test accuracy (\%) of ResNet-32 on long-tailed CIFAR-10 and CIFAR-100, and the best and the second best results are highlighted in \textbf{bold} and \textbf{\underline{\emph{ italic bold}}}, respectively.}\label{classim} \vspace{0mm}
		\centering
		\begin{scriptsize}
			\begin{tabular}{c|c|c|c|c|c|c|c|c|c|c|c|c}
				\toprule
				Dataset Name & \multicolumn{6}{c|}{Long-Tailed CIFAR-10} & \multicolumn{6}{c}{Long-Tailed CIFAR-100}\\
				\hline
				Imbalance & 200 & 100 & 50 & 20 & 10 &1& 200 & 100 & 50 & 20 & 10 &1 \\
				\hline
				BaseModel & 65.68& 70.36& 74.81& 82.23& 86.39& 92.89&34.84& 38.32&43.85&51.14& 55.71& 70.50\\
				Focal Loss& 65.29& 70.38& 76.71& 82.76& 86.66& \textbf{\underline{\emph{93.03}}} & 35.62& 38.41& 44.32&51.95& 55.78& \textbf{\underline{\emph{70.52}}}\\
				Class-Balanced &\textbf{\underline{\emph{68.89}}}& \textbf{\underline{\emph{74.57}}}& \textbf{\underline{\emph{79.27}}}& \textbf{\underline{\emph{84.36}}}& \textbf{\underline{\emph{87.49}}}& 92.89& 36.23& 39.60 & 45.32&     \textbf{\emph{\underline{ 52.59}}}& \textbf{\underline{\emph{57.99}}}& 70.50\\
				\hline
				\hline
				Fine-tuning&66.08&71.33&77.42&83.37&86.42&\textbf{93.23}&\textbf{38.22}&\textbf{\emph{\underline{41.83}}}&\emph{\textbf{\underline{46.40}}}&52.11&57.44&\textbf{70.72}\\
				L2RW&66.51 & 74.16&78.93 &82.12 &85.19 & 89.25& 33.38&40.23 &44.44 &51.64 & 53.73 & 64.11\\
				Ours&\textbf{68.91} &\textbf{75.21} &\textbf{80.06} & \textbf{84.94}&\textbf{87.84 }&92.66 &\emph{\textbf{\underline{37.91}}} & \textbf{42.09}&\textbf{46.74} &\textbf{54.37 }& \textbf{58.46} &70.37\\
				\bottomrule
			\end{tabular}
		\end{scriptsize} \vspace{0mm}
	\end{table}
	

	\vspace{0mm}
	\subsection{Corrupted Label Experiment}\label{noise}\vspace{0mm}
	We study two settings of corrupted labels on the training set: 1) \textbf{Uniform noise.} The label of each sample is independently changed to a random class with probability $p$ following the same setting in \cite{zhang2016understanding}.
	2) \textbf{Flip noise.} The label of each sample is independently flipped to similar classes with total probability $p$. In our experiments, we randomly select two classes as similar classes with equal probability.
	Two benchmark datasets are employed: CIFAR-10 and CIFAR-100 \cite{krizhevsky2009learning}. Both are popularly used for evaluation of noisy labels \cite{ma2018dimensionality,han2018co}.$1000$ images with clean labels in validation set are randomly selected as the meta-data set.
	We adopt a Wide ResNet-28-10 (WRN-28-10) \cite{zagoruyko2016wide} for uniform noise and ResNet-32 \cite{he2016deep} for flip noise as our classifier network models\footnote{We have tried different classifier network architectures as classifier networks under each noise setting to show our
		algorithm is suitable to different deep learning architectures. We show this effect in Fig.\ref{fignet:a}, verifying the consistently good performance of our method in two classifier network settings.}.

	The comparison methods include: \textbf{BaseModel}, referring to the similar classifier network utilized in our method, while directly trained on the biased training data; the robust learning methods \textbf{Reed} \cite{reed2014training}, \textbf{S-Model} \cite{goldberger2016training} , \textbf{SPL} \cite{kumar2010self}, \textbf{Focal Loss} \cite{lin2018focal}, \textbf{Co-teaching} \cite{han2018co}, \textbf{D2L} \cite{ma2018dimensionality}; \textbf{Fine-tuning}, fine-tuning the result of \textbf{BaseModel} on the meta-data with clean labels to further enhance its performance; typical meta-learning methods \textbf{MentorNet} \cite{jiang2018mentornet}, \textbf{L2RW} \cite{ren2018learning}, \textbf{GLC} \cite{hendrycks2018using}.
	We also trained the baseline network only on 1000 meta-images. The performance are evidently worse than the proposed method due to the neglecting of the knowledge underlying large amount of training samples. We thus have not involved its results in comparison.
	
	\begin{table*}[t]
		\caption{Test accuracy comparison on CIFAR-10 and CIFAR-100 of WRN-28-10 with varying noise rates under uniform noise. Mean accuracy ($\pm$std) over 5 repetitions are reported (`---' means the method fails).}
		\label{result-table}
		\vspace{2mm}
		\vskip 0in
		\centering
		\begin{tiny}
			\resizebox{\textwidth}{8mm}{
				\begin{tabular}{lr|ccccccc|ccccc}
					\toprule
					\multicolumn{2}{c|}{Datasets / Noise Rate } &  BaseModel & Reed-Hard &S-Model&Self-paced &Focal Loss &Co-teaching &D2L & Fine-tining&MentorNet &  L2RW &GLC&Ours\\
					\hline
					\multirow{3}{*}{CIFAR-10} &0\% &95.60$\pm$0.22 &94.38$\pm$0.14&83.79$\pm$0.11 & 90.81$\pm$0.34& \textbf{95.70$\pm$0.15}& 88.67$\pm$0.25& 94.64$\pm$0.33&\textbf{\underline{\emph{95.65$\pm$0.15}}}&94.35$\pm$0.42& 92.38$\pm$0.10
					&94.30$\pm$0.19 &94.52$\pm$0.25\\
					&40\% &68.07$\pm$1.23&81.26$\pm$0.51&79.58$\pm$0.33&86.41$\pm$0.29 &75.96$\pm$1.31&  74.81$\pm$0.34&85.60$\pm$0.13&80.47$\pm$0.25&87.33$\pm$0.22& 86.92$\pm$0.19 & \textbf{\underline{\emph{88.28$\pm$0.03}}}&\textbf{89.27$\pm$0.28}\\
					& 60\%&53.12$\pm$3.03&73.53$\pm$1.54&--- &53.10$\pm$1.78&51.87$\pm$1.19  & 73.06$\pm$0.25&68.02$\pm$0.41& 78.75$\pm$2.40&82.80$\pm$1.35& 82.24$\pm$0.36&  \textbf{\underline{\emph{83.49$\pm$0.24}}}  & \textbf{84.07$\pm$0.33}\\
					\hline
					\multirow{3}{*}{CIFAR-100}&0\% &79.95$\pm$1.26
					& 64.45$\pm$1.02&52.86$\pm$0.99 &59.79$\pm$0.46 & \textbf{81.04$\pm$0.24}& 61.80$\pm$0.25& 66.17$\pm$1.42&\textbf{\underline{\emph{80.88$\pm$0.21}}}&73.26$\pm$1.23 &72.99$\pm$0.58&73.75$\pm$0.51&78.76$\pm$0.24\\
					&40\% &51.11$\pm$0.42
					& 51.27$\pm$1.18&42.12$\pm$0.99 &46.31$\pm$2.45 & 51.19$\pm$0.46 &46.20$\pm$0.15& 52.10 $\pm$0.97&52.49$\pm$0.74  &\textbf{\underline{\emph{61.39$\pm$3.99}}} &60.79$\pm$0.91&61.31$\pm$0.22 &\textbf{67.73$\pm$0.26}\\
					& 60\%& 30.92$\pm$0.33&   26.95$\pm$0.98& ---&19.08$\pm$0.57 &27.70$\pm$3.77  & 35.67$\pm$1.25
					& 41.11$\pm$0.30& 38.16$\pm$0.38  & 36.87$\pm$1.47&48.15$\pm$0.34&\textbf{\underline{\emph{50.81$\pm$1.00}}}
					& \textbf{58.75$\pm$0.11}\\
					\bottomrule
			\end{tabular}}
		\end{tiny} \vspace{0mm}
	\end{table*}
	
	\begin{figure}[pt] \vspace{2mm}
		\begin{minipage}{0.48\textwidth}
			\includegraphics[width=0.45\textwidth]{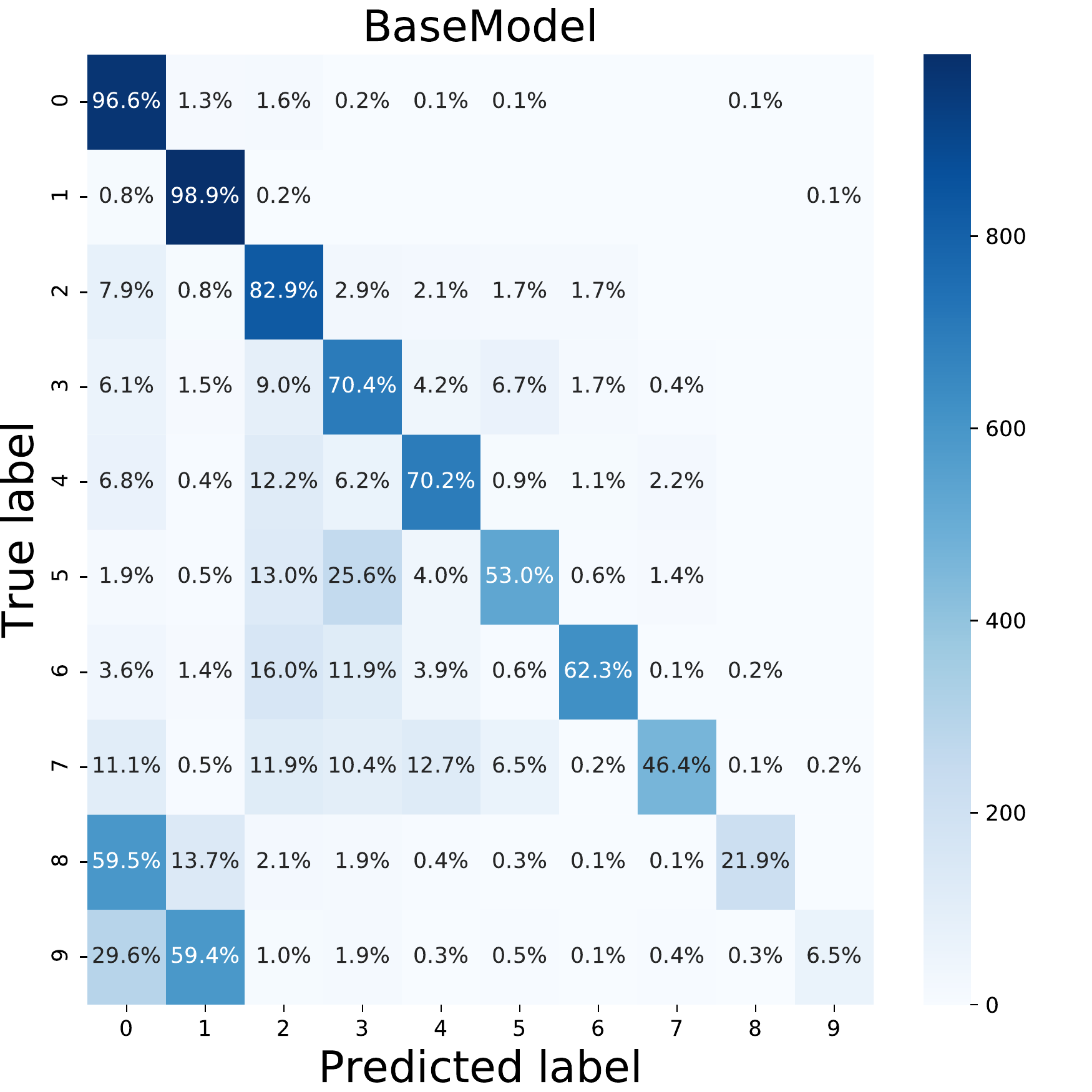} \ \ \ \ \
			\includegraphics[width=0.45\textwidth]{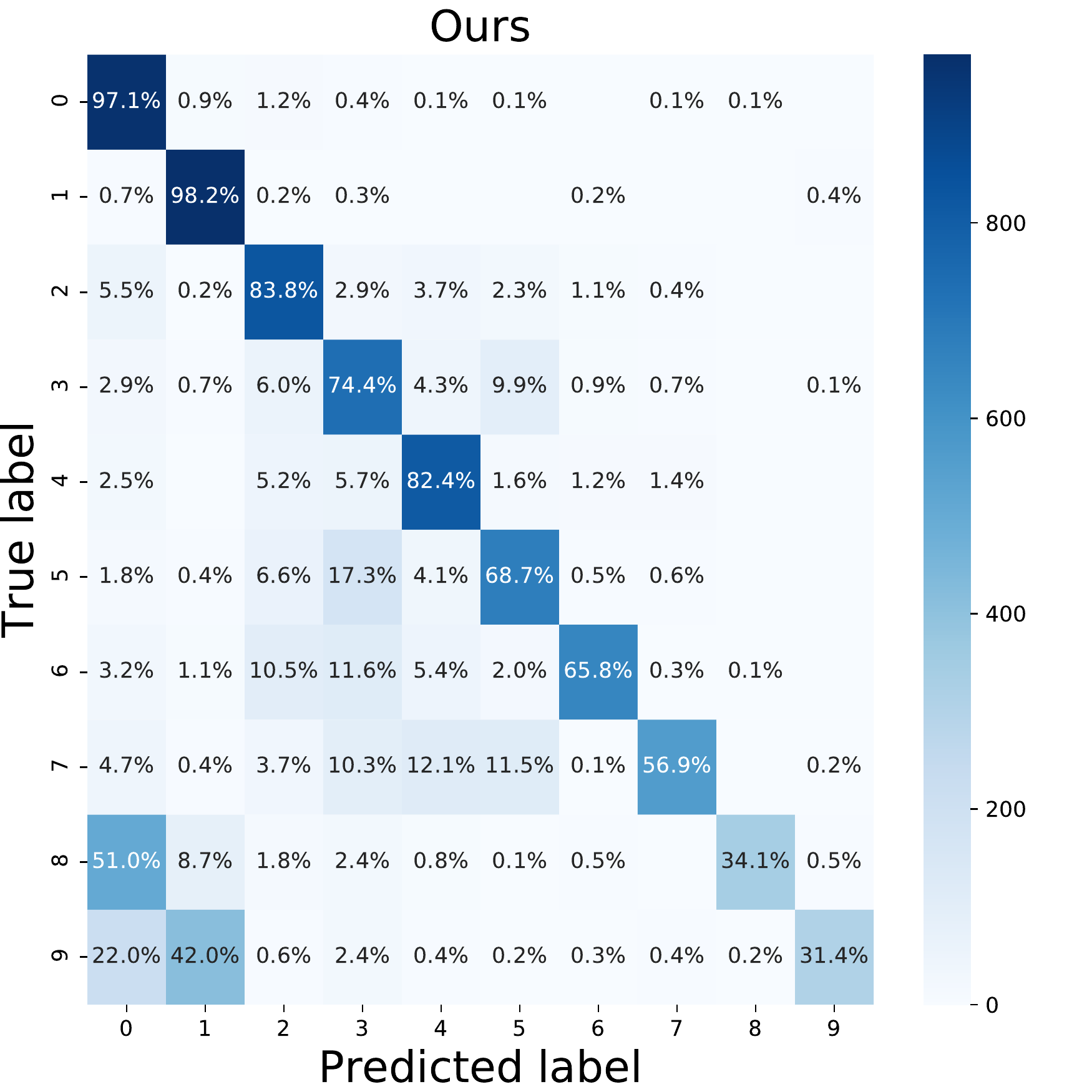}\vspace{0mm}
			\caption{Confusion matrices for the Basemodel and ours on long-tailed CIFAR-10 with imbalance factors 200.}
			\label{conmatrix}
		\end{minipage} \ \ \ \ \ \ \ \ \
		\begin{minipage}{0.48\textwidth}
			\centering
			\includegraphics[width=\textwidth]{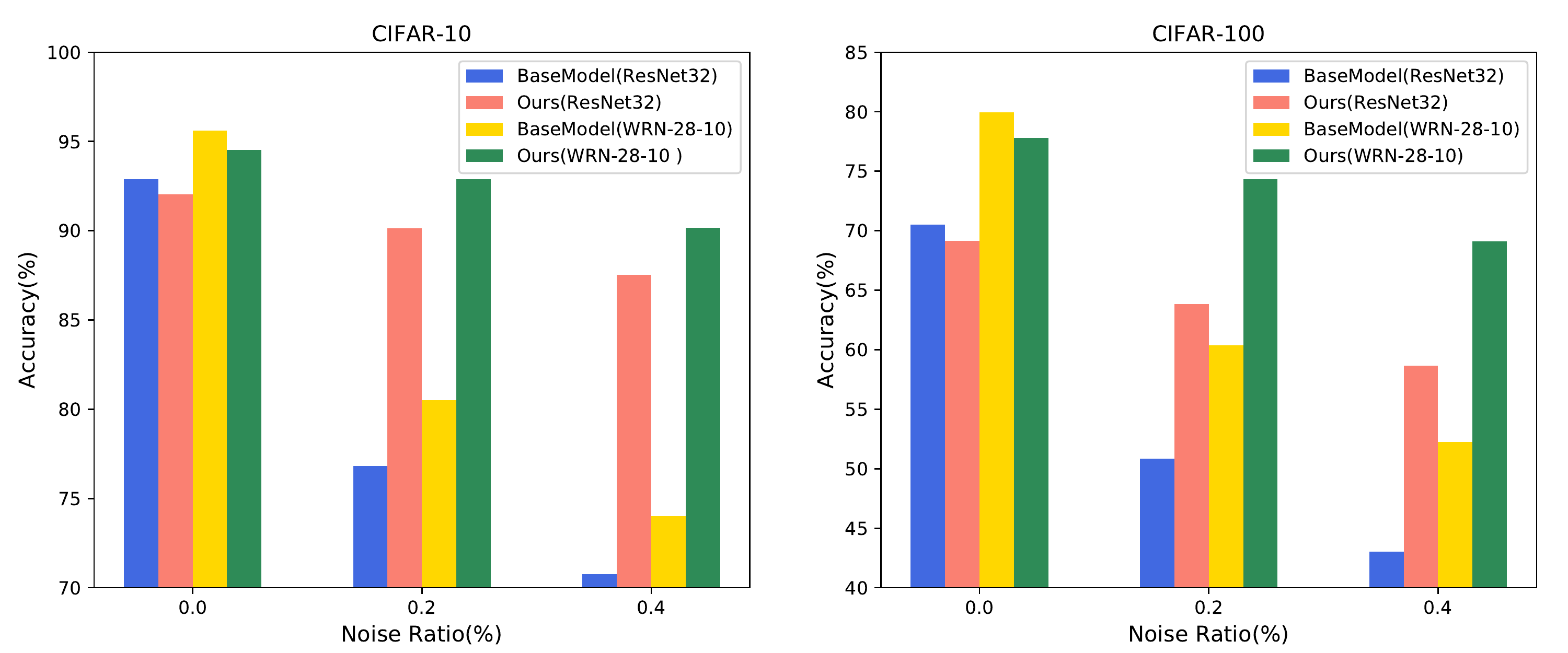} \vspace{-5mm}
			\caption{Performance comparison for different classifier networks (WRN-28-10 and ResNet32) under CIFAR flip noise.} \label{fignet:a}
		\end{minipage}\vspace{0mm}
	\end{figure}
	
	\begin{table*}[t]\vspace{0mm}
		\caption{Test accuracy comparison on CIFAR-10 and CIFAR-100 of ResNet-32 with varying noise rates under flip noise.}\label{result2-table}\vspace{-1mm}
		\vskip -0.2in
		\begin{center}
			\begin{tiny}
				\resizebox{\textwidth}{8mm}{
					\begin{tabular}{lr|ccccccc|ccccc}
						\toprule
						\multicolumn{2}{c|}{Datasets / Noise Rate } &  BaseModel & Reed-Hard &S-Model&Self-paced &Focal Loss &Co-teaching &D2L & Fine-tining& MentorNet & L2RW &GLC&Ours\\
						\hline
						\multirow{3}{*}{CIFAR-10}
						&0\% &\textbf{\underline{\emph{92.89$\pm$0.32}}} &92.31$\pm$0.25 & 83.61$\pm$0.13& 88.52$\pm$0.21&93.03$\pm$0.16  &  89.87$\pm$0.10   & 92.02$\pm$0.14&  \textbf{93.23$\pm$0.23 } &92.13$\pm$0.30
						&  89.25$\pm$0.37   &   91.02$\pm$0.20  &  92.04$\pm$0.15 \\
						&20\% & 76.83$\pm$2.30&88.28$\pm$0.36&79.25$\pm$0.30& 87.03$\pm$0.34& 86.45$\pm$0.19 &   82.83$\pm$0.85  & 87.66$\pm$0.40
						&  82.47$\pm$3.64 &86.36$\pm$0.31 &   87.86$\pm$0.36  & \textbf{\underline{\emph{89.68$\pm$0.33  }}}  & \textbf{ 90.33$\pm$0.61 }\\
						&40\% & 70.77$\pm$2.31&81.06$\pm$0.76
						&75.73$\pm$0.32
						&81.63$\pm$0.52  &80.45$\pm$0.97 &  75.41$\pm$0.21   & 83.89$\pm$0.46
						& 74.07$\pm$1.56  &81.76$\pm$0.28 &   85.66$\pm$0.51   &  \textbf{88.92$\pm$0.24 }  &  \textbf{\underline{\emph{87.54$\pm$0.23}}} \\
						\hline
						\multirow{3}{*}{CIFAR-100}
						&0\% &\textbf{\underline{\emph{70.50$\pm$0.12}}} &69.02$\pm$0.32
						& 51.46$\pm$0.20&67.55$\pm$0.27 &  70.02$\pm$0.53 &    63.31$\pm$0.05 &   68.11$\pm$0.26
						& \textbf{70.72$\pm$0.22}  &70.24$\pm$0.21
						&  64.11$\pm$1.09   &  65.42$\pm$0.23   &  70.11$\pm$0.33 \\
						&20\% &  50.86$\pm$0.27& 60.27$\pm$0.76& 45.45$\pm$0.25& 63.63$\pm$0.30& 61.87$\pm$0.30 &   54.13$\pm$0.55  &  63.48$\pm$0.53
						&  56.98$\pm$0.50 & 61.97$\pm$0.47
						& 57.47$\pm$1.16    & \textbf{\underline{\emph{ 63.07$\pm$0.53 }}}   & \textbf{64.22$\pm$0.28}  \\
						&40\% & 43.01$\pm$1.16 & 50.40$\pm$1.01&43.81$\pm$0.15& 53.51$\pm$0.53& 54.13$\pm$0.40 &   44.85$\pm$0.81  &  51.83$\pm$0.33
						& 46.37$\pm$0.25  &52.66$\pm$0.56
						
						&   50.98$\pm$1.55   & \textbf{62.22$\pm$0.62 }  &  \textbf{\underline{\emph{58.64$\pm$0.47}}} \\
						\bottomrule
				\end{tabular}}
			\end{tiny}
		\end{center}\vspace{0mm}
	\end{table*}

	All the baseline networks were trained using SGD with a momentum 0.9, a weight decay $5 \times 10^{-4}$ and an initial learning rate 0.1. The learning rate of classifier network is divided by 10 after 36 epoch and 38 epoch (for a total of 40 epoches) in uniform noise, and after 40 epoch and 50 epoch (for a total of 60 epoches) in flip noise. The learning rate of WN-Net is fixed as $10^{-3}$. We repeated the experiments 5 times with different random seeds for network initialization and label noise generation.
	
	\begin{figure}[t] \vspace{0mm}
		\begin{minipage}{0.48\textwidth}
			\includegraphics[width=0.9\textwidth]{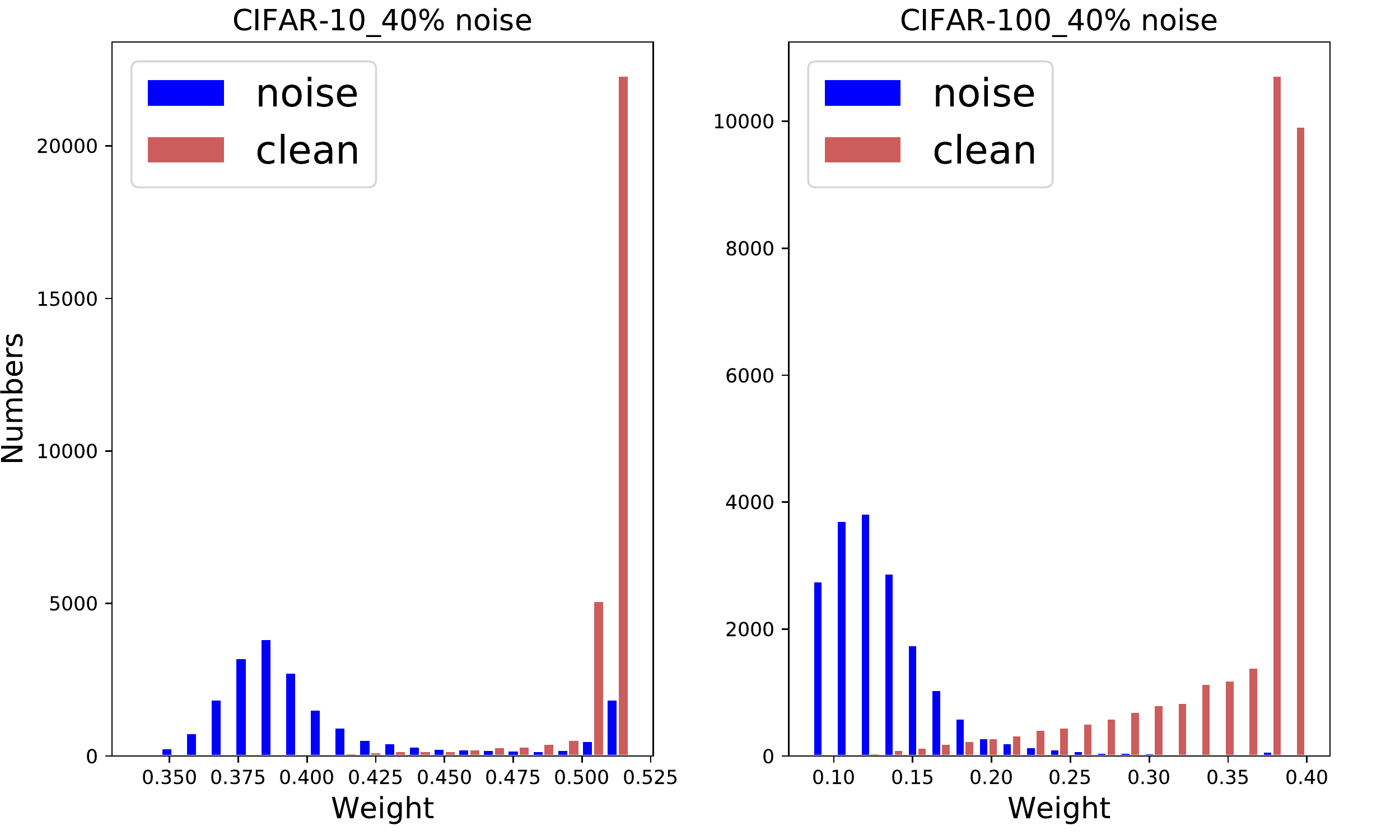} \vspace{-3mm}
			\caption{Sample weight distribution on training data under 40\% uniform noise experiments.}
			\label{weightsdis}
		\end{minipage} \ \ \ \ \ \ \ \ \
		\begin{minipage}{0.48\textwidth}
			\centering
			\includegraphics[width=0.9\textwidth]{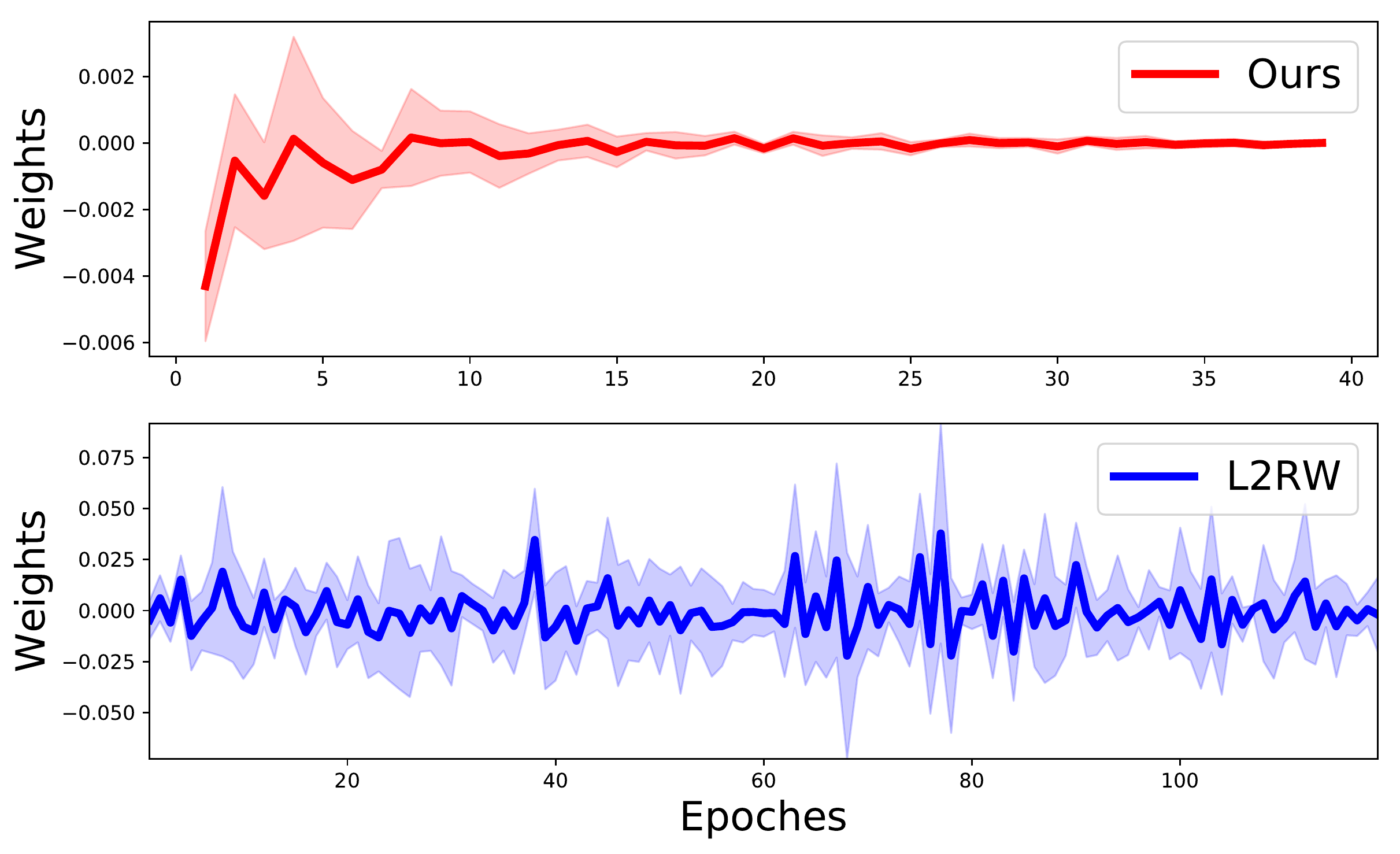} \vspace{-2.4mm}
			\caption{Weight variation curves under 40\% uniform noise experiment on CIFAR10 dataset. } \label{curve}
		\end{minipage}\vspace{0mm}
	\end{figure}

	
	We report the accuracy averaged over 5 repetitions for each series of experiments and each competing method in Tables \ref{result-table} and \ref{result2-table}. It can be observed that our method gets the best performance across almost all datasets and all noise rates, except the second for 40\% Flip noise. At 0\% noise cases (unbiased ones), our method performs only slightly worse than the BaseModel. For other corrupted label cases, the superiority of our method is evident.
	Besides, it can be seen that the performance gaps between ours and all other competing methods increase as the noise rate is increased from 40\% to 60\% under uniform noise. Even with 60\% label noise, our method can still obtain a relatively high classification accuracy, and attains more than 15\% accuracy gain compared with the second best result for CIFAR100 dataset, which indicates the robustness of our methods in such cases.

	
	Fig. \ref{fignet:a} shows the performance comparison between WRN-28-10 and ResNet32 under fixed flip noise setting. We can observe that the performance gains for our method and BaseModel between two networks takes the almost same value. It implies that the performance improvement of our method is not dependent on the selection of the classifier network architectures.
	
	As shown in Fig. \ref{fig1e}, the shape of the learned weight function depicts as monotonic decreasing, complying with the traditional optimal setting to this bias condition, i.e., imposing smaller weights on samples with relatively large losses to suppress the effect of corrupted labels. Furthermore, we plot the weight distribution of clean and noisy training samples in Fig. \ref{weightsdis}. It can be seen that almost all large weights belongs to clean samples, and the noisy samples's weights are smaller than that of clean samples, which implies that the trained Meta-Weight-Net can distinguish clean and noisy images.
	
	Fig. \ref{curve} plots the weight variation along with training epoches under 40\% noise on CIFAR10 dataset of our method and L2RW. $y$-axis denotes the differences of weights calculated between adjacent epoches, and $x$-axis denotes the number of epoches. Ten noisy samples are randomly chosen to compute their mean curve, surrounded by the region illustrating the standard deviations calculated on these samples in the corresponding epoch. It is seen that the weight by our method is continuously changed, gradually stable along iterations, and finally converges. As a comparison, the weight during the learning process of L2RW fluctuates relatively more wildly. This could explain the consistently better performance of our method as compared with this competing method.

	\subsection{Experiments on Clothing1M}\label{real}
	To verify the effectiveness of the proposed method on real-world data, we conduct experiments on the Clothing1M dataset \cite{xiao2015learning}, containing 1 million images of clothing obtained from online shopping websites that are with 14 categories, e.g.,
	T-shirt, Shirt, Knitwear. The labels are generated by using surrounding texts of the images provided by the sellers, and therefore contain many errors. We use the 7k clean data as the meta dataset. Following the previous works \cite{patrini2017making,tanaka2018joint}, we used ResNet-50 pre-trained on ImageNet. 
	For preprocessing, we resize the image to $256\times256$, crop the middle $224\times224$ as input, and perform normalization.  We used SGD with a momentum 0.9, a weight decay $10^{-3}$, and an initial learning rate 0.01, and batch size 32. The learning rate of ResNet-50 is divided by 10 after 5 epoch (for a total 10 epoch), and the learning rate of WN-Net is fixed as $10^{-3}$.
	
	\begin{table}
		\caption{ Classification accuracy (\%) of all competing methods on the Clothing1M test set.}\label{clothing}
		\centering\tiny\vspace{0mm}
		\resizebox{0.8\textwidth}{12mm}{\begin{tabular}{c|c|c|c|c|c}
				\toprule
				\#  & Method &Accuracy & \#  & Method &Accuracy\\
				\hline
				\hline
				1 & Cross Entropy & 68.94 &5 & Joint Optimization \cite{tanaka2018joint}& 72.23\\
				2 & Bootstrapping \cite{reed2014training}& 69.12 & 6 & LCCN \cite{yao2019safeguarded}& 73.07\\
				3 & Forward  \cite{patrini2017making} & 69.84 & 7 & MLNT \cite{Li2019Learningto}& 73.47\\
				4 & S-adaptation \cite{goldberger2016training} & 70.36 & 8 & Ours & 73.72  \\
				\bottomrule
		\end{tabular}}
		\vspace{0mm}
	\end{table}
	
	The results are summarized in Table. \ref{clothing}. which shows that the proposed method achieves the best performance. Fig. \ref{fig1f} plots the tendency curve of the learned MW-Net function, which reveals abundant data insights. Specifically, when the loss is with relatively small values, the weighting function inclines to increase with loss, meaning that it tends to more emphasize hard margin samples with informative knowledge for classification; while when the loss gradually changes large, the weighting function begins to monotonically decrease, implying that it tends to suppress noise labels samples with relatively large loss values. Such complicated essence cannot be finely delivered by conventional weight functions.
	
	\vspace{-2mm}
	\section{Conclusion}\vspace{-2mm}
	We have proposed a novel meta-learning method for adaptively extracting sample weights to guarantee robust deep learning in the presence of training data bias. Compared with current reweighting methods that require to manually set the form of weight functions, the new method is able to yield a rational one directly from data. The working principle of our algorithm can be well explained and the procedure of our method can be easily reproduced ( Appendix A provide the Pytorch implement of our algorithm (less than 30 lines of codes)), and the completed training code is avriable at \url{https://github.com/xjtushujun/meta-weight-net}.). Our empirical results show that the propose method can perform superior in general data bias cases, like class imbalance, corrupted labels, and more complicated real cases.  Besides, such an adaptive weight learning approach is hopeful to be employed to other weight setting problems in machine learning, like ensemble methods and multi-view learning.

	\subsubsection*{Acknowledgments}
	This research was supported by the China NSFC projects under contracts 61661166011, 11690011, 61603292, 61721002,U1811461.
	The authors would also like to thank anonymous reviewers for their constructive suggestions on improving the paper, especially on the proofs and theoretical analysis of our paper.
	
	\bibliographystyle{unsrt}
	
	\bibliography{bayes_ref}

\begin{thebibliography}{10}

\bibitem{ren2018learning}
Mengye Ren, Wenyuan Zeng, Bin Yang, and Raquel Urtasun.
\newblock Learning to reweight examples for robust deep learning.
\newblock In {\em ICML}, 2018.

\bibitem{zhang2016understanding}
Chiyuan Zhang, Samy Bengio, Moritz Hardt, Benjamin Recht, and Oriol Vinyals.
\newblock Understanding deep learning requires rethinking generalization.
\newblock In {\em ICLR}, 2017.

\bibitem{he2008learning}
Haibo He and Edwardo~A Garcia.
\newblock Learning from imbalanced data.
\newblock {\em IEEE Transactions on Knowledge \& Data Engineering}, 2008.

\bibitem{neyshabur2017exploring}
Behnam Neyshabur, Srinadh Bhojanapalli, David McAllester, and Nati Srebro.
\newblock Exploring generalization in deep learning.
\newblock In {\em NeurIPS}, 2017.

\bibitem{arpit2017closer}
Devansh Arpit, Stanis{\l}aw Jastrz{\k{e}}bski, Nicolas Ballas, David Krueger,
  Emmanuel Bengio, Maxinder~S Kanwal, Tegan Maharaj, Asja Fischer, Aaron
  Courville, Yoshua Bengio, et~al.
\newblock A closer look at memorization in deep networks.
\newblock In {\em ICML}, 2017.

\bibitem{kawaguchi2017generalization}
Kenji Kawaguchi, Leslie~Pack Kaelbling, and Yoshua Bengio.
\newblock Generalization in deep learning.
\newblock {\em arXiv preprint arXiv:1710.05468}, 2017.

\bibitem{novak2018sensitivity}
Roman Novak, Yasaman Bahri, Daniel~A Abolafia, Jeffrey Pennington, and Jascha
  Sohl-Dickstein.
\newblock Sensitivity and generalization in neural networks: an empirical
  study.
\newblock In {\em ICLR}, 2018.

\bibitem{galar2012review}
Mikel Galar, Alberto Fernandez, Edurne Barrenechea, Humberto Bustince, and
  Francisco Herrera.
\newblock A review on ensembles for the class imbalance problem: bagging-,
  boosting-, and hybrid-based approaches.
\newblock {\em IEEE Transactions on Systems, Man, and Cybernetics, Part C
  (Applications and Reviews)}, 2012.

\bibitem{buda2018systematic}
Mateusz Buda, Atsuto Maki, and Maciej~A Mazurowski.
\newblock A systematic study of the class imbalance problem in convolutional
  neural networks.
\newblock {\em Neural Networks}, 2018.

\bibitem{sukhbaatar2014learning}
Sainbayar Sukhbaatar and Rob Fergus.
\newblock Learning from noisy labels with deep neural networks.
\newblock In {\em ICLR workshop}, 2015.

\bibitem{azadi2015auxiliary}
Samaneh Azadi, Jiashi Feng, Stefanie Jegelka, and Trevor Darrell.
\newblock Auxiliary image regularization for deep cnns with noisy labels.
\newblock In {\em ICLR}, 2016.

\bibitem{goldberger2016training}
Jacob Goldberger and Ehud Ben-Reuven.
\newblock Training deep neural-networks using a noise adaptation layer.
\newblock In {\em ICLR}, 2017.

\bibitem{li2017learning}
Yuncheng Li, Jianchao Yang, Yale Song, Liangliang Cao, Jiebo Luo, and Li-Jia
  Li.
\newblock Learning from noisy labels with distillation.
\newblock In {\em ICCV}, 2017.

\bibitem{vahdat2017toward}
Arash Vahdat.
\newblock Toward robustness against label noise in training deep discriminative
  neural networks.
\newblock In {\em NeurIPS}, 2017.

\bibitem{hendrycks2018using}
Dan Hendrycks, Mantas Mazeika, Duncan Wilson, and Kevin Gimpel.
\newblock Using trusted data to train deep networks on labels corrupted by
  severe noise.
\newblock In {\em NeurIPS}, 2018.

\bibitem{han2018co}
Bo~Han, Quanming Yao, Xingrui Yu, Gang Niu, Miao Xu, Weihua Hu, Ivor Tsang, and
  Masashi Sugiyama.
\newblock Co-teaching: robust training deep neural networks with extremely
  noisy labels.
\newblock In {\em NeurIPS}, 2018.

\bibitem{zhang2018generalized}
Zhilu Zhang and Mert~R Sabuncu.
\newblock Generalized cross entropy loss for training deep neural networks with
  noisy labels.
\newblock In {\em NeurIPS}, 2018.

\bibitem{bi2014learning}
Wei Bi, Liwei Wang, James~T Kwok, and Zhuowen Tu.
\newblock Learning to predict from crowdsourced data.
\newblock In {\em UAI}, 2014.

\bibitem{liang2016learning}
Junwei Liang, Lu~Jiang, Deyu Meng, and Alexander Hauptmann.
\newblock Learning to detect concepts from webly-labeled video data.
\newblock In {\em IJCAI}, 2016.

\bibitem{zhuang2017attend}
Bohan Zhuang, Lingqiao Liu, Yao Li, Chunhua Shen, and Ian~D Reid.
\newblock Attend in groups: a weakly-supervised deep learning framework for
  learning from web data.
\newblock In {\em CVPR}, 2017.

\bibitem{jiang2018mentornet}
Lu~Jiang, Zhengyuan Zhou, Thomas Leung, Li-Jia Li, and Li~Fei-Fei.
\newblock Mentornet: Learning data-driven curriculum for very deep neural
  networks on corrupted labels.
\newblock In {\em ICML}, 2018.

\bibitem{freund1997decision}
Yoav Freund and Robert~E Schapire.
\newblock A decision-theoretic generalization of on-line learning and an
  application to boosting.
\newblock {\em Journal of computer and system sciences}, 55(1):119--139, 1997.

\bibitem{sun2007cost}
Yanmin Sun, Mohamed~S Kamel, Andrew~KC Wong, and Yang Wang.
\newblock Cost-sensitive boosting for classification of imbalanced data.
\newblock {\em Pattern Recognition}, 40(12):3358--3378, 2007.

\bibitem{malisiewicz2011ensemble}
Tomasz Malisiewicz, Abhinav Gupta, and Alexei~A Efros.
\newblock Ensemble of exemplar-svms for object detection and beyond.
\newblock In {\em ICCV}, 2011.

\bibitem{lin2018focal}
Tsung-Yi Lin, Priyal Goyal, Ross Girshick, Kaiming He, and Piotr Doll{\'a}r.
\newblock Focal loss for dense object detection.
\newblock {\em IEEE transactions on pattern analysis and machine intelligence},
  2018.

\bibitem{kumar2010self}
M~Pawan Kumar, Benjamin Packer, and Daphne Koller.
\newblock Self-paced learning for latent variable models.
\newblock In {\em NeurIPS}, 2010.

\bibitem{fernando2003reweight}
De~la~Torre Fernando and J.~Black Mkchael.
\newblock A framework for robust subspace learning.
\newblock {\em International Journal of Computer Vision}, 54(1):117--142, 2003.

\bibitem{jiang2014easy}
Lu~Jiang, Deyu Meng, Teruko Mitamura, and Alexander~G Hauptmann.
\newblock Easy samples first: Self-paced reranking for zero-example multimedia
  search.
\newblock In {\em ACM MM}, 2014.

\bibitem{jiang2014self}
Lu~Jiang, Deyu Meng, Shoou-I Yu, Zhenzhong Lan, Shiguang Shan, and Alexander
  Hauptmann.
\newblock Self-paced learning with diversity.
\newblock In {\em NeurIPS}, 2014.

\bibitem{wang2017robust}
Yixin Wang, Alp Kucukelbir, and David~M Blei.
\newblock Robust probabilistic modeling with bayesian data reweighting.
\newblock In {\em ICML}, 2017.

\bibitem{csaji2001approximation}
Bal{\'a}zs~Csan{\'a}d Cs{\'a}ji.
\newblock Approximation with artificial neural networks.
\newblock {\em Faculty of Sciences, Etvs Lornd University, Hungary}, 24:48,
  2001.

\bibitem{wu2018learning}
Lijun Wu, Fei Tian, Yingce Xia, Yang Fan, Tao Qin, Lai Jian-Huang, and Tie-Yan
  Liu.
\newblock Learning to teach with dynamic loss functions.
\newblock In {\em NeurIPS}, 2018.

\bibitem{andrychowicz2016learning}
Marcin Andrychowicz, Misha Denil, Sergio Gomez, Matthew~W Hoffman, David Pfau,
  Tom Schaul, Brendan Shillingford, and Nando De~Freitas.
\newblock Learning to learn by gradient descent by gradient descent.
\newblock In {\em NeurIPS}, 2016.

\bibitem{dehghani2017learning}
Mostafa Dehghani, Aliaksei Severyn, Sascha Rothe, and Jaap Kamps.
\newblock Learning to learn from weak supervision by full supervision.
\newblock In {\em NeurIPS Workshop}, 2017.

\bibitem{franceschi2018bilevel}
Luca Franceschi, Paolo Frasconi, Saverio Salzo, and Massimilano Pontil.
\newblock Bilevel programming for hyperparameter optimization and
  meta-learning.
\newblock In {\em ICML}, 2018.

\bibitem{paszke2017automatic}
Adam Paszke, Sam Gross, Soumith Chintala, Gregory Chanan, Edward Yang, Zachary
  DeVito, Zeming Lin, Alban Desmaison, Luca Antiga, and Adam Lerer.
\newblock Automatic differentiation in pytorch.
\newblock In {\em NIPS Workshop}, 2017.

\bibitem{finn2017model}
Chelsea Finn, Pieter Abbeel, and Sergey Levine.
\newblock Model-agnostic meta-learning for fast adaptation of deep networks.
\newblock In {\em ICML}, 2017.

\bibitem{nichol2018reptile}
Alex Nichol and John Schulman.
\newblock Reptile: a scalable metalearning algorithm.
\newblock {\em arXiv preprint arXiv:1803.02999}, 2, 2018.

\bibitem{eshratifar2018gradient}
Amir~Erfan Eshratifar, David Eigen, and Massoud Pedram.
\newblock Gradient agreement as an optimization objective for meta-learning.
\newblock {\em arXiv preprint arXiv:1810.08178}, 2018.

\bibitem{chawla2002smote}
Nitesh~V Chawla, Kevin~W Bowyer, Lawrence~O Hall, and W~Philip Kegelmeyer.
\newblock Smote: synthetic minority over-sampling technique.
\newblock {\em Journal of artificial intelligence research}, 16:321--357, 2002.

\bibitem{dong2017class}
Qi~Dong, Shaogang Gong, and Xiatian Zhu.
\newblock Class rectification hard mining for imbalanced deep learning.
\newblock In {\em ICCV}, 2017.

\bibitem{zadrozny2004learning}
Bianca Zadrozny.
\newblock Learning and evaluating classifiers under sample selection bias.
\newblock In {\em ICML}, 2004.

\bibitem{elkan2001foundations}
Charles Elkan.
\newblock The foundations of cost-sensitive learning.
\newblock In {\em IJCAI}, 2001.

\bibitem{khan2018cost}
Salman~H Khan, Munawar Hayat, Mohammed Bennamoun, Ferdous~A Sohel, and Roberto
  Togneri.
\newblock Cost-sensitive learning of deep feature representations from
  imbalanced data.
\newblock {\em IEEE transactions on neural networks and learning systems},
  2018.

\bibitem{johnson2019survey}
Justin~M Johnson and Taghi~M Khoshgoftaar.
\newblock Survey on deep learning with class imbalance.
\newblock {\em Journal of Big Data}, 2019.

\bibitem{chang2017active}
Haw-Shiuan Chang, Erik Learned-Miller, and Andrew McCallum.
\newblock Active bias: Training more accurate neural networks by emphasizing
  high variance samples.
\newblock In {\em NeurIPS}, 2017.

\bibitem{lake2015human}
Brenden~M Lake, Ruslan Salakhutdinov, and Joshua~B Tenenbaum.
\newblock Human-level concept learning through probabilistic program induction.
\newblock {\em Science}, 350(6266):1332--1338, 2015.

\bibitem{shu2018small}
Jun Shu, Zongben Xu, and Deyu Meng.
\newblock Small sample learning in big data era.
\newblock {\em arXiv preprint arXiv:1808.04572}, 2018.

\bibitem{ravi2016optimization}
Sachin Ravi and Hugo Larochelle.
\newblock Optimization as a model for few-shot learning.
\newblock In {\em ICLR}, 2017.

\bibitem{snell2017prototypical}
Jake Snell, Kevin Swersky, and Richard Zemel.
\newblock Prototypical networks for few-shot learning.
\newblock In {\em NeurIPS}, 2017.

\bibitem{dehghani2017fidelity}
Mostafa Dehghani, Arash Mehrjou, Stephan Gouws, Jaap Kamps, and Bernhard
  Sch{\"o}lkopf.
\newblock Fidelity-weighted learning.
\newblock In {\em ICLR}, 2018.

\bibitem{fan2018learning}
Yang Fan, Fei Tian, Tao Qin, Xiang-Yang Li, and Tie-Yan Liu.
\newblock Learning to teach.
\newblock In {\em ICLR}, 2018.

\bibitem{wang2017learning}
Yu-Xiong Wang, Deva Ramanan, and Martial Hebert.
\newblock Learning to model the tail.
\newblock In {\em NeurIPS}, 2017.

\bibitem{cui2018large}
Yin Cui, Yang Song, Chen Sun, Andrew Howard, and Serge Belongie.
\newblock Large scale fine-grained categorization and domain-specific transfer
  learning.
\newblock In {\em CVPR}, 2018.

\bibitem{huang2016learning}
Chen Huang, Yining Li, Chen Change~Loy, and Xiaoou Tang.
\newblock Learning deep representation for imbalanced classification.
\newblock In {\em CVPR}, 2016.

\bibitem{zhang2017range}
Xiao Zhang, Zhiyuan Fang, Yandong Wen, Zhifeng Li, and Yu~Qiao.
\newblock Range loss for deep face recognition with long-tailed training data.
\newblock In {\em ICCV}, 2017.

\bibitem{veit2017learning}
Andreas Veit, Neil Alldrin, Gal Chechik, Ivan Krasin, Abhinav Gupta, and
  Serge~J Belongie.
\newblock Learning from noisy large-scale datasets with minimal supervision.
\newblock In {\em CVPR}, 2017.

\bibitem{reed2014training}
Scott Reed, Honglak Lee, Dragomir Anguelov, Christian Szegedy, Dumitru Erhan,
  and Andrew Rabinovich.
\newblock Training deep neural networks on noisy labels with bootstrapping.
\newblock In {\em ICLR workshop}, 2015.

\bibitem{ma2018dimensionality}
Xingjun Ma, Yisen Wang, Michael~E Houle, Shuo Zhou, Sarah~M Erfani, Shu-Tao
  Xia, Sudanthi Wijewickrema, and James Bailey.
\newblock Dimensionality-driven learning with noisy labels.
\newblock In {\em ICML}, 2018.

\bibitem{cui2019class}
Yin Cui, Menglin Jia, Tsung-Yi Lin, Yang Song, and Serge Belongie.
\newblock Class-balanced loss based on effective number of samples.
\newblock In {\em CVPR}, 2019.

\bibitem{he2016deep}
Kaiming He, Xiangyu Zhang, Shaoqing Ren, and Jian Sun.
\newblock Deep residual learning for image recognition.
\newblock In {\em CVPR}, 2016.

\bibitem{krizhevsky2009learning}
Alex Krizhevsky.
\newblock Learning multiple layers of features from tiny images.
\newblock Technical report, 2009.

\bibitem{zagoruyko2016wide}
Sergey Zagoruyko and Nikos Komodakis.
\newblock Wide residual networks.
\newblock In {\em BMCV}, 2016.

\bibitem{xiao2015learning}
Tong Xiao, Tian Xia, Yi~Yang, Chang Huang, and Xiaogang Wang.
\newblock Learning from massive noisy labeled data for image classification.
\newblock In {\em CVPR}, 2015.

\bibitem{patrini2017making}
Giorgio Patrini, Alessandro Rozza, Aditya~Krishna Menon, Richard Nock, and
  Lizhen Qu.
\newblock Making deep neural networks robust to label noise: A loss correction
  approach.
\newblock In {\em CVPR}, 2017.

\bibitem{tanaka2018joint}
Daiki Tanaka, Daiki Ikami, Toshihiko Yamasaki, and Kiyoharu Aizawa.
\newblock Joint optimization framework for learning with noisy labels.
\newblock In {\em CVPR}, 2018.

\bibitem{yao2019safeguarded}
Jiangchao Yao, Hao Wu, Ya~Zhang, Ivor~W Tsang, and Jun Sun.
\newblock Safeguarded dynamic label regression for noisy supervision.
\newblock In {\em AAAI}, 2019.

\bibitem{Li2019Learningto}
Junnan Li, Yongkang Wong, Qi~Zhao, and Mohan S.~Kankanhalli.
\newblock Learning to learn from noisy labeled data.
\newblock In {\em CVPR}, 2019.

\bibitem{mairal2013stochastic}
Julien Mairal.
\newblock Stochastic majorization-minimization algorithms for large-scale
  optimization.
\newblock In {\em NeurIPS}, 2013.

\end{thebibliography}
	\newpage
	
	\appendix
	
	\begin{Large}
		\textbf{Appendix}
	\end{Large}
	
	\setcounter{theorem}{0}
	\setcounter{footnote}{0}
	
	\section{Pytorch codes of our algorithm}
	The following is the Pytorch codes of our algorithm (core code is less than 30 lines), and the completed training code is avriable at \url{https://github.com/xjtushujun/meta-weight-net}.
	\begin{python}	
		def norm_func(v_lambda):
		norm_c = torch.sum(v_lambda)
		if norm_c != 0:
		v_lambda_norm = v_lambda / norm_c
		else:
		v_lambda_norm = v_lambda
		return 	v_lambda_norm
		
		optimizer_a = torch.optim.SGD(model.params(), args.lr, momentum=args.momentum, nesterov=args.nesterov, weight_decay=args.weight_decay)
		optimizer_c = torch.optim.SGD(vnet.params(), 1e-3, momentum=args.momentum, nesterov=args.nesterov,weight_decay=args.weight_decay)
		
		for iters in range(num_iters):
		adjust_learning_rate(optimizer_a, iters + 1)
		model.train()
		data, target = next(iter(train_loader))
		data, target = data.to(device), target.to(device)
		meta_model.load_state_dict(model.state_dict())
		y_f_hat = meta_model(data)
		cost = F.cross_entropy(y_f_hat, target_var, reduce=False)
		cost_v = torch.reshape(cost, (len(cost), 1))
		v_lambda = vnet(cost_v.data)
		v_lambda_norm = norm_func(v_lambda)
		l_f_meta = torch.sum(cost_v * v_lambda_norm)
		meta_model.zero_grad()
		grads = torch.autograd.grad(l_f_meta,(meta_model.params()),create_graph=True)
		meta_model.update_params(lr_inner=meta_lr,source_params=grads)
		
		data_meta,target_meta = next(iter(train_meta_loader))
		data_meta,target_meta = data_meta.to(device),target_meta.to(device)
		y_g_hat = meta_model(data_meta)
		l_g_meta = F.cross_entropy(y_g_hat, target_meta)
		optimizer_c.zero_grad()
		l_g_meta.backward()
		optimizer_c.step()
		
		y_f = model(data)
		cost_w = F.cross_entropy(y_f, target, reduce=False)
		cost_v = torch.reshape(cost_w, (len(cost_w), 1))
		with torch.no_grad():
		w_new = vnet(cost_v)
		w_v = norm_func(w_new)
		l_f = torch.sum(cost_v * w_v)
		optimizer_a.zero_grad()
		l_f.backward()
		optimizer_a.step()
	\end{python}

	\section{Derivation of the Weighting Scheme in Meta-Weight-Net}
	Recall the update equation of the parameters of Meta-Weight-Net as follows:
	\begin{align}\label{eq7}
	\Theta^{(t+1)} =  \Theta^{(t)} -\beta \frac{1}{m}\sum_{i=1}^{m} \nabla_{ \Theta} L_i^{meta}(\hat{\mathbf{w}} ^{(t)}(\Theta))\Big|_{\Theta^{(t)}}.
	\end{align}
	The computation of Eq. (\ref{eq7}) in the paper by backpropagation can be understood by the following derivation:
	\begin{align}\label{eq9}
	\begin{split}
	& \frac{1}{m}\sum_{i=1}^{m} \nabla_{ \Theta}  L_i^{meta}(\hat{\mathbf{w}}^{(t)}(\Theta))\Big|_{\Theta^{(t)}}\\
	=& \frac{1}{m}\sum_{i=1}^{m} \frac{\partial L_i^{meta} (\hat{\mathbf{w}})}{\partial \hat{\mathbf{w}}(\Theta)}\Big|_{\hat{\mathbf{w}}^{(t)}} \sum_{j=1}^n \frac{\partial \hat{\mathbf{w}}^{(t)}(\Theta)}{\partial \mathcal{V}(L_j^{train}(\mathbf{w}^{(t)});\Theta)} \frac{\partial \mathcal{V}(L_j^{train}(\mathbf{w}^{(t)});\Theta)}{\partial \Theta}\Big|_{ \Theta^{(t)}} \\
	=& \frac{-\alpha}{n*m}\sum_{i=1}^{m} \frac{\partial L_i^{meta} (\hat{\mathbf{w}})}{\partial \hat{\mathbf{w}}}\Big|_{\hat{\mathbf{w}}^{(t)}} \sum_{j=1}^n \frac{\partial L_j^{train} (\mathbf{w})}{\partial \mathbf{w}}\Big|_{\mathbf{w}^{(t)}} \frac{\partial \mathcal{V}(L_j^{train}(\mathbf{w}^{(t)});\Theta)}{\partial \Theta}\Big|_{ \Theta^{(t)}}\\
	=& \frac{-\alpha}{n}\sum_{j=1}^n \left(\frac{1}{m} \sum_{i=1}^{m} \frac{\partial L_i^{meta} (\hat{\mathbf{w}})}{\partial \hat{\mathbf{w}}}\Big|_{\hat{\mathbf{w}}^{(t)}}^T \frac{\partial L_j^{train} (\mathbf{w})}{\partial \mathbf{w}} \Big|_{\mathbf{w}^{(t)}}\right)  \frac{\partial \mathcal{V}(L_j^{train}(\mathbf{w}^{(t)});\Theta)}{\partial \Theta}\Big|_{ \Theta^{(t)}}.
	\end{split}
	\end{align}
	Let 
	\begin{align}\label{eqg}
	G_{ij}=\frac{\partial L_i^{meta} (\hat{\mathbf{w}})}{\partial \hat{\mathbf{w}}}\Big|_{\hat{\mathbf{w}}^{(t)}}^T \frac{\partial L_j^{train} (\mathbf{w})}{\partial \mathbf{w}} \Big|_{\mathbf{w}^{(t)}},
	\end{align} 
	by substituting Eq. (\ref{eq9}) into Eq. (\ref{eq7}), we can get:
	\begin{align}\label{eq10}
	\Theta^{(t+1)} =  \Theta^{(t)} + \frac{\alpha\beta}{n}\sum_{j=1}^n \left(\frac{1}{m} \sum_{i=1}^{m}G_{ij} \right)  \frac{\partial \mathcal{V}(L_j^{train}(\mathbf{w}^{(t)});\Theta)}{\partial \Theta}\Big|_{ \Theta^{(t)}}.
	\end{align}

	\section{Convergence Proof of Our Method}
	This section provides the proofs for Theorems 1 and 2 in the paper.

	Suppose that we have a small amount of meta (validation) dataset with $M$ samples $\{(x_i^{(m)},y_i^{(m)}),1\leq i\leq M\}$ with clean labels, and the overall meta loss is,
	\begin{align}
	\mathcal{L}^{meta}(\mathbf{w}^*(\Theta))=\frac{1}{M} \sum_{i=1}^M L_i^{meta}(\mathbf{w}^*(\Theta)),
	\end{align}
	where $\mathbf{w}^*$ is the parameter of the classifier network, and $\Theta$ is the parameter of the Meta-Weight-Net. Let's suppose we have another $N$ training data, $\{(x_i,y_i),1\leq i \leq N\}$, where $M\ll N$, and the overall training loss is,
	\begin{align}\label{eqob}
	\mathcal{L}^{train}(\mathbf{w};\Theta) = \frac{1}{N} \sum_{i=1}^N \mathcal{V}(L_i^{train}(\mathbf{w});\mathbf{\Theta}) L_i^{train}(\mathbf{w}).
	\end{align}

	\begin{lemma}\label{lemma1}
		Suppose the meta loss function is Lipschitz smooth with constant $L$, and $\mathcal{V}(\cdot)$ is differential with a $\delta$-bounded gradient and twice differential with its Hessian bounded by $\mathcal{B}$, and the loss function $\ell$ have $\rho$-bounded gradients with respect to training/meta data. Then the gradient of $\Theta$ with respect to meta loss is Lipschitz continuous.
	\end{lemma}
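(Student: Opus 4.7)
The plan is to establish Lipschitz continuity of $\nabla_\Theta \mathcal{L}^{meta}(\hat{\mathbf{w}}(\Theta))$ by expanding the gradient through the chain rule and then bounding each resulting factor using the stated regularity assumptions. First I would write
\begin{align*}
\nabla_\Theta \mathcal{L}^{meta}(\hat{\mathbf{w}}(\Theta)) = J(\Theta)^\top \nabla_{\hat{\mathbf{w}}} \mathcal{L}^{meta}(\hat{\mathbf{w}}(\Theta)),
\end{align*}
where $J(\Theta) := \partial \hat{\mathbf{w}}(\Theta)/\partial \Theta$ is the Jacobian of the one-step update, and from Eq.~(\ref{eq6}) we obtain
\begin{align*}
J(\Theta) = -\frac{\alpha}{n}\sum_{j=1}^n \nabla_{\mathbf{w}} L_j^{train}(\mathbf{w}^{(t)})\, \nabla_\Theta \mathcal{V}(L_j^{train}(\mathbf{w}^{(t)});\Theta)^\top.
\end{align*}
This reduces the problem to proving that both $J(\Theta)$ and $\nabla_{\hat{\mathbf{w}}} \mathcal{L}^{meta}(\hat{\mathbf{w}}(\Theta))$ are bounded and Lipschitz in $\Theta$, after which the product rule for Lipschitz-continuous-and-bounded factors yields the conclusion.

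Next I would establish the four key estimates in order: (i) $\|J(\Theta)\| \leq \alpha\rho\delta$, using the $\rho$-bound on $\nabla_\mathbf{w} \ell$ and the $\delta$-bound on $\nabla_\Theta \mathcal{V}$; (ii) $\hat{\mathbf{w}}(\Theta)$ is Lipschitz in $\Theta$ with constant $\alpha\rho\delta$, as an immediate consequence of (i); (iii) $\nabla_{\hat{\mathbf{w}}} \mathcal{L}^{meta}$ is $L$-Lipschitz in its argument by the $L$-smoothness assumption, so composing with (ii) gives that $\Theta \mapsto \nabla_{\hat{\mathbf{w}}} \mathcal{L}^{meta}(\hat{\mathbf{w}}(\Theta))$ is Lipschitz with constant $\alpha\rho\delta L$; and (iv) $J(\Theta)$ itself is Lipschitz in $\Theta$, which uses the bounded Hessian $\mathcal{B}$ of $\mathcal{V}$ together with the $\rho$-bound on $\nabla_{\mathbf{w}} L_j^{train}$ to give a Lipschitz constant of order $\alpha\rho\mathcal{B}$.

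Finally I would combine (i)--(iv) via the standard identity
\begin{align*}
\|A_1 b_1 - A_2 b_2\| \leq \|A_1\|\,\|b_1-b_2\| + \|A_1-A_2\|\,\|b_2\|,
\end{align*}
applied with $A_i = J(\Theta_i)^\top$ and $b_i = \nabla_{\hat{\mathbf{w}}} \mathcal{L}^{meta}(\hat{\mathbf{w}}(\Theta_i))$. The first term is controlled by (i) and the composition in (iii); the second term is controlled by (iv) together with the boundedness of $\nabla_{\hat{\mathbf{w}}} \mathcal{L}^{meta}$, which itself follows from the $\rho$-gradient bound on the meta loss. This yields an overall Lipschitz constant of the form $\alpha\rho(\delta^2 L\rho + \mathcal{B}\rho)$, which I would denote $\hat{L}$ and carry forward for use in Theorems~\ref{th1} and~\ref{th2}.

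The main obstacle I anticipate is step (iv): showing that the Jacobian $J(\Theta)$ is Lipschitz in $\Theta$. This requires differentiating the product $\nabla_{\mathbf{w}} L_j^{train}(\mathbf{w}^{(t)})\, \nabla_\Theta \mathcal{V}(L_j^{train}(\mathbf{w}^{(t)});\Theta)^\top$ with respect to $\Theta$ and observing that only the second factor depends on $\Theta$, so the variation reduces to the Lipschitz continuity of $\nabla_\Theta \mathcal{V}$ in $\Theta$, which the Hessian bound $\mathcal{B}$ delivers. Careful bookkeeping of matrix/outer-product norms and verifying that mini-batch averaging preserves the bounds (by the triangle inequality applied termwise) is the only subtlety; the remaining estimates are routine consequences of the stated smoothness and boundedness assumptions.
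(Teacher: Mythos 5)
Your proposal is correct and is essentially the paper's argument in finite-difference form: the two terms of your product-rule decomposition $\|A_1\|\,\|b_1-b_2\|+\|A_1-A_2\|\,\|b_2\|$ are exactly the two terms the paper obtains by differentiating $\frac{-\alpha}{n}\sum_j G_{ij}\,\partial_\Theta\mathcal{V}_j(\Theta)$ once more in $\Theta$ (yielding the bounds $\alpha L\rho^2\delta^2$ and $\mathcal{B}\rho^2$) before invoking the mean value theorem. The only discrepancy is a dropped factor of $\alpha$ in your first term --- the paper's constant is $\alpha\rho^2(\alpha L\delta^2+\mathcal{B})$ --- which is immaterial since $\alpha\le 1$ under the theorems' step-size assumptions.
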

	\begin{proof}
		The gradient of $\Theta$ with respect to meta loss can be written as:
		\begin{align}\label{eq20}
		\begin{split}
		&  \nabla_{ \Theta}  L_i^{meta}(\hat{\mathbf{w}}^{(t)}(\Theta))\Big|_{\Theta^{(t)}}\\
		=& \frac{-\alpha}{n}\sum_{j=1}^n \left(\frac{\partial L_i^{meta} (\hat{\mathbf{w}})}{\partial \hat{\mathbf{w}}}\Big|_{\hat{\mathbf{w}}^{(t)}}^T \frac{\partial L_j^{train} (\mathbf{w})}{\partial \mathbf{w}} \Big|_{\mathbf{w}^{(t)}}\right)  \frac{\partial \mathcal{V}(L_j^{train}(\mathbf{w}^{(t)});\Theta)}{\partial \Theta}\Big|_{ \Theta^{(t)}}.
		\end{split}
		\end{align}
		
		Let  $\mathcal{V}_j(\Theta)=\mathcal{V}(L_j^{train}(\mathbf{w}^{(t)});\Theta)$ \footnotesize and $G_{ij}$ being defined in Eq.(\ref{eqg}). Taking gradient of $\Theta$ in both sides of Eq.(\ref{eq20}), we have
		\begin{equation*}
		\nabla_{ \Theta^2}^2  L_i^{meta}(\hat{\mathbf{w}}^{(t)}(\Theta))\Big|_{\Theta^{(t)}}
		= \frac{-\alpha}{n}\sum_{j=1}^n \left[\frac{\partial}{\partial \Theta}\left( G_{ij}\right)\Big|_{ \Theta^{(t)}}  \frac{\partial \mathcal{V}_j(\Theta)}{\partial \Theta}\Big|_{ \Theta^{(t)}} + \left(G_{ij}\right)\frac{\partial^2 \mathcal{V}_j(\Theta)}{\partial \Theta^2}\Big|_{ \Theta^{(t)}} \right].
		\end{equation*}
		For the first term in the right hand side, we have that
		\begin{align}\label{eq21}
		\begin{split}
		&\left\|\frac{\partial}{\partial \Theta}\!\left(G_{ij}\right)\!\Big|_{ \Theta^{(t)}} \!\frac{\partial \mathcal{V}_j(\Theta)}{\partial \Theta}\!\Big|_{ \Theta^{(t)}}\right\| \leq\delta \left\|\frac{\partial}{\partial \hat{\mathbf{w}}}\!\left(\!\frac{\partial  L_i^{meta} (\hat{\mathbf{w}})}{\partial \Theta}\Big|_{\Theta^{(t)}}\!\right)\!\Big|_{\hat{\mathbf{w}}^{(t)}}^T\!\frac{\partial L_j^{train} (\mathbf{w})}{\partial \mathbf{w}} \Big|_{\mathbf{w}^{(t)}}\!\right\| \\
		&=\delta \left\|\frac{\partial}{\partial \hat{\mathbf{w}}}\!\left( \!\frac{\partial L_i^{meta} (\hat{\mathbf{w}})}{\partial \hat{\mathbf{w}}}\!\Big|_{\hat{\mathbf{w}}^{(t)}} \!\frac{-\alpha}{n} \!\sum_{k=1}^n \!\frac{\partial L_k^{train} (\mathbf{w})}{\partial \mathbf{w}}\!\Big|_{\mathbf{w}^{(t)}} \frac{\partial \mathcal{V}_k(\Theta)}{\partial \Theta}\Big|_{ \Theta^{(t)}}  \!\right)\! \Big|_{\hat{\mathbf{w}}^{(t)}}^T\!\frac{\partial L_j^{train} (\mathbf{w})}{\partial \mathbf{w}} \Big|_{\mathbf{w}^{(t)}}\!\right\| \\
		&=\delta \left\|\left(\frac{\partial^2 L_i^{meta} (\hat{\mathbf{w}})}{\partial \hat{\mathbf{w}}^2}\Big|_{\hat{\mathbf{w}}^{(t)}} \frac{-\alpha}{n}\sum_{k=1}^n \frac{\partial L_k^{train} (\mathbf{w})}{\partial \mathbf{w}}\Big|_{\mathbf{w}^{(t)}} \frac{\partial \mathcal{V}_k(\Theta)}{\partial \Theta}\Big|_{ \Theta^{(t)}}\right)\Big|_{\hat{\mathbf{w}}^{(t)}}^T\frac{\partial L_j^{train} (\mathbf{w})}{\partial \mathbf{w}}  \right\| \leq \alpha L\rho^2\delta^2,
		\end{split}
		\end{align}
		since $\left\|\frac{\partial^2 L_i^{meta} (\hat{\mathbf{w}})}{\partial \hat{\mathbf{w}}^2}\Big|_{\hat{\mathbf{w}}^{(t)}} \right\|\leq L,\left\|\frac{\partial L_j^{train} (\mathbf{w})}{\partial \mathbf{w}}\Big|_{\mathbf{w}^{(t)}}\right\|\leq \rho, \left\|\frac{\partial \mathcal{V}_j(\Theta)}{\partial \Theta}\Big|_{\Theta^{(t)}}\right\|\leq \delta$. And for the second term we have
		\begin{align}\label{eq22} 
		\left\|  \left(G_{ij}\right)\frac{\partial^2 \mathcal{V}_j(\Theta)}{\partial \Theta^2}\Big|_{ \Theta^{(t)}} \right\| = \left\|\frac{\partial L_i^{meta} (\hat{\mathbf{w}})}{\partial \hat{\mathbf{w}}}\Big|_{\hat{\mathbf{w}}^{(t)}}^T \frac{\partial L_j^{train} (\mathbf{w})}{\partial \mathbf{w}} \Big|_{\mathbf{w}^{(t)}}\frac{\partial^2 \mathcal{V}_j(\Theta)}{\partial \Theta^2}\Big|_{ \Theta^{(t)}} \right\| \leq \mathcal{B}\rho^2,
		\end{align}
		since $\left\|\!\frac{\partial L_i^{meta} (\hat{\mathbf{w}})}{\partial \hat{\mathbf{w}}}\!\Big|_{\hat{\mathbf{w}}^{(t)}}^T\!\right\|\!\!\leq\!\! \rho, \left\|\!\frac{\partial^2 \mathcal{V}_j(\Theta)}{\partial \Theta^2}\!\Big|_{ \Theta^{(t)}} \!\right\|\!\!\leq\!\! \mathcal{B} $. Combining the above two inequalities Eq.(\ref{eq21}) (\ref{eq22}), we have
		\begin{align}
		\left\|\nabla_{ \Theta^2}^2 L_i^{meta}(\hat{\mathbf{w}}^{(t)}(\Theta))\Big|_{\Theta^{(t)}}\right\| \leq \alpha\rho^2 (\alpha L\delta^2+\mathcal{B}).
		\end{align} 
		Define $L_{V} = \alpha\rho^2 (\alpha L\delta^2+\mathcal{B})$, based on Lagrange mean value theorem, we have:
		\begin{align}
		\|\nabla \mathcal{L}^{meta}(\hat{\mathbf{w}}^{(t)}(\Theta_1))-\nabla \mathcal{L}^{meta}(\hat{\mathbf{w}}^{(t)}(\Theta_2))\| \leq L_V \|\Theta_1-\Theta_2\|, \ \ for \ all \ \ \Theta_1, \Theta_2,
		\end{align}
		where $\nabla \mathcal{L}^{meta}(\hat{\mathbf{w}}^{(t)}(\Theta_1) )=  \nabla_{ \Theta}  L_i^{meta}(\hat{\mathbf{w}}^{(t)}(\Theta))\big|_{\Theta_1}$.
	\end{proof}

	\begin{theorem} \label{th1}
		Suppose the loss function $\ell$ is Lipschitz smooth with constant $L$, and $\mathcal{V}(\cdot)$ is differential with a $\delta$-bounded gradient and twice differential with its Hessian bounded by $\mathcal{B}$, and the loss function $\ell$ have $\rho$-bounded gradients with respect to training/meta data. Let the learning rate $\alpha_t$ satisfies $\alpha_t=\min\{1,\frac{k}{T}\}$, for some $k>0$, such that $\frac{k}{T}<1$, and $\beta_t, 1\leq t\leq N$ is a monotone descent sequence, $\beta_t =\min\{\frac{1}{L},\frac{c}{\sigma\sqrt{T}}\} $ for some $c>0$, such that $\frac{\sigma\sqrt{T}}{c}\geq L$ and $\sum_{t=1}^\infty \beta_t \leq \infty,\sum_{t=1}^\infty \beta_t^2 \leq \infty $. Then Meta-Weight-Net can achieve $\mathbb{E}[ \|\nabla \mathcal{L}^{meta}(\hat{\mathbf{w}}^{(t)}(\Theta^{(t)}))\|_2^2] \leq \epsilon$ in $\mathcal{O}(1/\epsilon^2)$ steps. More specifically,
		\begin{align}
		\min_{0\leq t \leq T} \mathbb{E}[ \|\nabla \mathcal{L}^{meta}(\hat{\mathbf{w}}^{(t)}(\Theta^{(t)}))\|_2^2] \leq \mathcal{O}(\frac{C}{\sqrt{T}}),
		\end{align}
		where $C$ is some constant independent of the convergence process, $\sigma$ is the variance of drawing uniformly mini-batch sample at random.
	\end{theorem}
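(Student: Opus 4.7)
The plan is to treat the update of $\Theta^{(t)}$ driven by equation (\ref{eq7}) as a standard stochastic gradient descent step on the meta-objective $\mathcal{L}^{meta}(\hat{\mathbf{w}}^{(t)}(\Theta))$, and then apply the classical non-convex SGD convergence analysis in conjunction with Lemma \ref{lemma1}. First I would write the true (full-batch) meta-gradient $\nabla \mathcal{L}^{meta}(\hat{\mathbf{w}}^{(t)}(\Theta^{(t)}))$ and decompose the mini-batch stochastic gradient used in (\ref{eq7}) as $\hat{g}_t = \nabla \mathcal{L}^{meta}(\hat{\mathbf{w}}^{(t)}(\Theta^{(t)})) + \xi_t$, where $\xi_t$ is a zero-mean noise term coming from uniform mini-batch sampling of the meta-set; by the $\rho$ and $\delta$ bounds the variance is controlled by $\mathbb{E}\|\xi_t\|^2 \leq \sigma^2$.

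Next I would invoke the descent lemma implied by Lemma \ref{lemma1}: since $\nabla \mathcal{L}^{meta}(\hat{\mathbf{w}}^{(t)}(\cdot))$ is $L_V$-Lipschitz with $L_V = \alpha\rho^2(\alpha L\delta^2 + \mathcal{B})$, we obtain
\begin{align}
\mathcal{L}^{meta}(\hat{\mathbf{w}}^{(t)}(\Theta^{(t+1)})) \leq \mathcal{L}^{meta}(\hat{\mathbf{w}}^{(t)}(\Theta^{(t)})) - \beta_t \langle \nabla \mathcal{L}^{meta}(\hat{\mathbf{w}}^{(t)}(\Theta^{(t)})), \hat{g}_t\rangle + \tfrac{L_V \beta_t^2}{2}\|\hat{g}_t\|^2.
\end{align}
Taking conditional expectation, the cross term yields $-\beta_t \|\nabla \mathcal{L}^{meta}(\hat{\mathbf{w}}^{(t)}(\Theta^{(t)}))\|^2$, while $\mathbb{E}\|\hat{g}_t\|^2$ splits into the squared true gradient (bounded uniformly using the $\alpha,\rho,\delta$ bounds) plus $\sigma^2$. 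Rearranging gives the per-step bound
\begin{align}
\beta_t(1 - \tfrac{L_V \beta_t}{2})\,\mathbb{E}\|\nabla \mathcal{L}^{meta}\|^2 \leq \mathbb{E}[\mathcal{L}^{meta}(\Theta^{(t)})] - \mathbb{E}[\mathcal{L}^{meta}(\Theta^{(t+1)})] + \tfrac{L_V \beta_t^2 \sigma^2}{2}.
\end{align}
Here I would also need to absorb the mismatch coming from the fact that between steps $\hat{\mathbf{w}}^{(t)}$ itself shifts to $\hat{\mathbf{w}}^{(t+1)}$; the choice $\alpha_t = \min\{1,k/T\}$ makes this perturbation $O(\alpha_t)$ and hence negligible compared with the leading $\beta_t$ term, which is the place where the classifier step-size schedule enters the argument.

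Finally I would telescope from $t=0$ to $T-1$. With $\beta_t \leq 1/L_V$ the factor $(1 - L_V \beta_t/2) \geq 1/2$, so summing gives
\begin{align}
\tfrac{1}{2}\sum_{t=0}^{T-1} \beta_t\, \mathbb{E}\|\nabla \mathcal{L}^{meta}(\hat{\mathbf{w}}^{(t)}(\Theta^{(t)}))\|^2 \leq \mathcal{L}^{meta}(\Theta^{(0)}) - \mathcal{L}^{meta}_\star + \tfrac{L_V \sigma^2}{2}\sum_{t=0}^{T-1} \beta_t^2.
\end{align}
Plugging in $\beta_t = \min\{1/L, c/(\sigma\sqrt{T})\}$ so that $\sum \beta_t \asymp \sqrt{T}/\sigma$ and $\sum \beta_t^2 \asymp 1/\sigma^2$, and dividing, yields $\min_t \mathbb{E}\|\nabla \mathcal{L}^{meta}\|^2 \leq C/\sqrt{T}$ for a constant $C$ depending on $\mathcal{L}^{meta}(\Theta^{(0)}) - \mathcal{L}^{meta}_\star$, $L_V$, $\sigma$, and $c$. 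Setting this $\leq \epsilon$ gives the $\mathcal{O}(1/\epsilon^2)$ iteration complexity.

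The main obstacle I anticipate is not the SGD telescoping itself but rather carefully justifying that $\hat{g}_t$ is an unbiased estimator of $\nabla \mathcal{L}^{meta}(\hat{\mathbf{w}}^{(t)}(\Theta^{(t)}))$ with bounded variance while the inner classifier iterate $\mathbf{w}^{(t)}$ also moves; one must condition on the filtration generated by both mini-batches, use the $\rho$, $\delta$, $\mathcal{B}$ bounds to uniformly control $\|\hat{g}_t\|$, and verify that the drift induced by updating $\mathbf{w}^{(t)}$ through (\ref{eq6}) is of order $\alpha_t$, hence dominated by the chosen $\beta_t$ schedule. Once this bookkeeping is in place, the remaining steps are the standard descent-lemma argument outlined above.
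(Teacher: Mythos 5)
Your proposal matches the paper's proof in all essentials: the paper likewise splits the per-step change of the meta loss into a classifier-drift term $\mathcal{L}^{meta}(\hat{\mathbf{w}}^{(t+1)}(\Theta^{(t+1)}))-\mathcal{L}^{meta}(\hat{\mathbf{w}}^{(t)}(\Theta^{(t+1)}))$, bounded by $\alpha_t\rho^2(1+\tfrac{\alpha_t L}{2})$, plus a standard SGD descent-lemma term in $\Theta$ using the Lipschitz gradient from Lemma \ref{lemma1} and the zero-mean, $\sigma^2$-variance noise decomposition, then telescopes and substitutes the $\alpha_t,\beta_t$ schedules. The only cosmetic difference is that the paper carries the drift explicitly through the telescoped sum (where it contributes a term of the same $\mathcal{O}(1/\sqrt{T})$ order as the rest, via $\alpha_t/\beta_t=\mathcal{O}(1/\sqrt{T})$) rather than treating it as negligible, but your mechanism for controlling it is the same.
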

	\begin{proof}
		The update of $\Theta$ in each iteration is as follows:
		\begin{align}
		\Theta^{(t+1)} =  \Theta^{(t)} -\beta \frac{1}{m}\sum_{i=1}^{m} \nabla_{ \Theta} L_i^{meta}(\hat{\mathbf{w}} ^{(t)}(\Theta))\Big|_{\Theta^{(t)}}.
		\end{align}
		
		This can be written as:
		\begin{align}
		\Theta^{(t+1)} =  \Theta^{(t)} -\beta_t \nabla \mathcal{L}^{meta}(\hat{\mathbf{w}}^{(t)}(\Theta^{(t)}))\big|_{\Xi_t}.
		\end{align}
		Since the mini-batch $\Xi_t$ is drawn uniformly from the entire data set, we can rewrite the update equationp as:
		\begin{align}
		\Theta^{(t+1)} =  \Theta^{(t)} -\beta_t[ \nabla \mathcal{L}^{meta}(\hat{\mathbf{w}}^{(t)}(\Theta^{(t)}))+\xi^{(t)}],
		\end{align}
		where $\xi^{(t)} = \nabla \mathcal{L}^{meta}(\hat{\mathbf{w}}^{(t)}(\Theta^{(t)}))\big|_{\Xi_t}-\nabla \mathcal{L}^{meta}(\hat{\mathbf{w}}^{(t)}(\Theta^{(t)}))$. Note that $\xi^{(t)}$ are i.i.d random variable with finite variance, since $\Xi_t$ are drawn i.i.d with a finite number of samples. Furthermore, $\mathbb{E}[\xi^{(t)}]=0$, since samples are drawn uniformly at random.
		Observe that
		\begin{tiny}
			\begin{align}\label{eq23}
			\begin{split}
			&\mathcal{L}^{meta}(\hat{\mathbf{w}}^{(t+1)}(\Theta^{(t+1)}))-\mathcal{L}^{meta}(\hat{\mathbf{w}}^{(t)}(\Theta^{(t)})) \\
			&= \left\{\mathcal{L}^{meta}(\hat{\mathbf{w}}^{(t+1)}(\Theta^{(t+1)}))- \mathcal{L}^{meta}(\hat{\mathbf{w}}^{(t)}(\Theta^{(t+1)}))\right\}
			+\left\{\mathcal{L}^{meta}(\hat{\mathbf{w}}^{(t)}(\Theta^{(t+1)}))-\mathcal{L}^{meta}(\hat{\mathbf{w}}^{(t)}(\Theta^{(t)}))\right\}.
			\end{split}
			\end{align}
		\end{tiny}
		By Lipschitz smoothness of meta loss function,  we have
		\begin{tiny}
			\begin{align*}
			&\mathcal{L}^{meta}(\hat{\mathbf{w}}^{(t+1)}(\Theta^{(t+1)}))- \mathcal{L}^{meta}(\hat{\mathbf{w}}^{(t)}(\Theta^{(t+1)})) \\
			&\leq \langle \nabla \mathcal{L}^{meta}(\hat{\mathbf{w}}^{(t)}(\Theta^{(t+1)})), \hat{\mathbf{w}}^{(t+1)}(\Theta^{(t+1)})-\hat{\mathbf{w}}^{(t)}(\Theta^{(t+1)}) \rangle+ \frac{L}{2}\|\hat{\mathbf{w}}^{(t+1)}(\Theta^{(t+1)})-\hat{\mathbf{w}}^{(t)}(\Theta^{(t+1)})\|_2^2 
			\end{align*}
		\end{tiny}
		Since \begin{tiny} $\hat{\mathbf{w}}^{(t+1)}(\Theta^{(t+1)})-\hat{\mathbf{w}}^{(t)}(\Theta^{(t+1)}) = - \alpha_t \frac{1}{n}
			\sum_{i=1}^n   \mathcal{V}(L_i^{train}(\mathbf{w}^{(t+1)});\Theta^{(t+1)}) \nabla_{\mathbf{w}} L_i^{train}(\mathbf{w})\Big|_{\mathbf{w}^{(t+1)}}$ \end{tiny} according to Eq.(\ref{eq6}),(\ref{eq8}), we have
		\begin{align}
		\|\mathcal{L}^{meta}(\hat{\mathbf{w}}^{(t+1)}(\Theta^{(t+1)}))- \mathcal{L}^{meta}(\hat{\mathbf{w}}^{(t)}(\Theta^{(t+1)})) \|\leq \alpha_t \rho^2+ \frac{L\alpha_t^2}{2} \rho^2 = \alpha_t\rho^2 (1+\frac{\alpha_t L}{2})
		\end{align}
		Since $\left\|\frac{\partial L_j^{train} (\mathbf{w})}{\partial \mathbf{w}}\Big|_{\mathbf{w}^{(t)}}\right\|\leq \rho$, $\left\|\!\frac{\partial L_i^{meta} (\hat{\mathbf{w}})}{\partial \hat{\mathbf{w}}}\!\Big|_{\hat{\mathbf{w}}^{(t)}}^T\!\right\|\!\!\leq\!\! \rho$.
		
		By Lipschitz continuity of $\nabla \mathcal{L}^{meta}(\hat{\mathbf{w}}^{(t)}(\Theta))$ according to Lemma \ref{lemma1}, we can obtain the following:
		\begin{tiny}
			\begin{align*}
			&\mathcal{L}^{meta}(\hat{\mathbf{w}}^{(t)}(\Theta^{(t+1)}))-\mathcal{L}^{meta}(\hat{\mathbf{w}}^{(t)}(\Theta^{(t)}))  \leq \langle\nabla \mathcal{L}^{meta}(\hat{\mathbf{w}}^{(t)}(\Theta^{(t)})),\Theta^{(t+1)}-\Theta^{(t)} \rangle + \frac{L}{2} \|\Theta^{(t+1)}-\Theta^{(t)}\|_2^2\\
			& = \langle\nabla \mathcal{L}^{meta}(\hat{\mathbf{w}}^{(t)}(\Theta^{(t)})), -\beta_t [\nabla \mathcal{L}^{meta}(\hat{\mathbf{w}}^{(t)}(\Theta^{(t)}))+\xi^{(t)} ] \rangle + \frac{L\beta_t^2}{2} \|\nabla \mathcal{L}^{meta}(\hat{\mathbf{w}}^{(t)}(\Theta^{(t)}))+\xi^{(t)}\|_2^2\\
			& = -(\beta_t-\frac{L\beta_t^2}{2}) \|\nabla \mathcal{L}^{meta}(\hat{\mathbf{w}}^{(t)}(\Theta^{(t)}))\|_2^2 + \frac{L\beta_t^2}{2}\|\xi^{(t)}\|_2^2 - (\beta_t-L\beta_t^2)\langle \nabla \mathcal{L}^{meta}(\hat{\mathbf{w}}^{(t)}(\Theta^{(t)})),\xi^{(t)}\rangle.
			\end{align*}
		\end{tiny}
		Thus Eq.(\ref{eq23}) satifies
		\begin{tiny}
			\begin{align}
			\begin{split}
			&\mathcal{L}^{meta}(\hat{\mathbf{w}}^{(t+1)}(\Theta^{(t+1)}))-\mathcal{L}^{meta}(\hat{\mathbf{w}}^{(t)}(\Theta^{(t)})) \leq \alpha_t\rho^2 (1+\frac{\alpha_t L}{2}) -(\beta_t-\frac{L\beta_t^2}{2})\\
			& \|\nabla \mathcal{L}^{meta}(\hat{\mathbf{w}}^{(t)}(\Theta^{(t)}))\|_2^2 + \frac{L\beta_t^2}{2}\|\xi^{(t)}\|_2^2 - (\beta_t-L\beta_t^2)\langle \nabla \mathcal{L}^{meta}(\hat{\mathbf{w}}^{(t)}(\Theta^{(t)})),\xi^{(t)}\rangle.
			\end{split}
			\end{align}
		\end{tiny}
		Rearranging the terms, we can obtain
		\begin{tiny}
			\begin{align}
			\begin{split}
			&(\beta_t-\frac{L\beta_t^2}{2})
			\|\nabla \mathcal{L}^{meta}(\hat{\mathbf{w}}^{(t)}(\Theta^{(t)}))\|_2^2 \leq  \alpha_t\rho^2 (1+\frac{\alpha_t L}{2})+\mathcal{L}^{meta}(\hat{\mathbf{w}}^{(t)}(\Theta^{(t)}))- \mathcal{L}^{meta}(\hat{\mathbf{w}}^{(t+1)}(\Theta^{(t+1)}))\\
			&+ \frac{L\beta_t^2}{2}\|\xi^{(t)}\|_2^2 - (\beta_t-L\beta_t^2)\langle \nabla \mathcal{L}^{meta}(\hat{\mathbf{w}}^{(t)}(\Theta^{(t)})),\xi^{(t)}\rangle.
			\end{split}
			\end{align}
		\end{tiny}
		Summing up the above inequalities and rearranging the terms, we can obtain
		\begin{tiny}
			\begin{align}\label{eqrand}
			\begin{split}
			&\sum\nolimits_{t=1}^T (\beta_t-\frac{L\beta_t^2}{2}) \|\nabla \mathcal{L}^{meta}(\hat{\mathbf{w}}^{(t)}(\Theta^{(t)}))\|_2^2 
			\leq \mathcal{L}^{meta}(\hat{\mathbf{w}}^{(1)})(\Theta^{(1)})-\mathcal{L}^{meta}(\hat{\mathbf{w}}^{(T+1)}(\Theta^{(T+1)}))\\
			&+\sum_{t=1}^T\alpha_t\rho^2 (1+\frac{\alpha_t L}{2})-\!\sum_{t=1}^T\!(\beta_t\!-\!L\beta_t^2)\!\langle\nabla\! \mathcal{L}^{meta}\!(\hat{\mathbf{w}}^{(t)}(\Theta^{(t)})),\!\xi^{(t)}\!\rangle\!+\!\frac{L}{2}\!\sum_{t=1}^T\!\beta_t^2\|\xi\!^{(t)}\!\|_2^2 \\
			&\leq \mathcal{L}^{meta}\!(\hat{\mathbf{w}}^{(1)}(\!\Theta^{(1)}\!))\!+\sum_{t=1}^T\alpha_t\rho^2 (1+\frac{\alpha_t L}{2})-\!\sum_{t=1}^T \!(\beta_t\!-\!L\beta_t^2)\!\langle\!\nabla\! \mathcal{L}^{meta}(\hat{\mathbf{w}}^{(t)}(\Theta^{(t)})),\!\xi^{(t)}\!\rangle\!+\!\frac{L}{2}\!\sum_{t=1}^T\!\beta_t^2\|\xi^{(t)}\|_2^2,
			\end{split}
			\end{align}
		\end{tiny}
		Taking expectations with respect to $\xi^{(N)}$ on both sides of Eq. \ref{eqrand}, we can then obtain:
		\begin{tiny}
			\begin{align}
			\sum_{t=1}^T (\beta_t-\frac{L\beta_t^2}{2})\mathbb{E}_{\xi^{(N)}} \|\nabla \mathcal{L}^{meta}(\hat{\mathbf{w}}^{(t)}(\Theta^{(t)}))\|_2^2 \leq \mathcal{L}^{meta}\!(\hat{\mathbf{w}}^{(1)}(\!\Theta^{(1)}\!))\!+\sum_{t=1}^T\alpha_t\rho^2 (1+\frac{\alpha_t L}{2}) + \frac{L\sigma^2}{2} \sum_{t=1}^T \beta_t^2,
			\end{align}
		\end{tiny}
		since $\mathbb{E}_{\xi^{(N)}} \langle \nabla \mathcal{L}^{meta}(\Theta^{(t)}),\xi^{(t)}\rangle =0 $ and $\mathbb{E} [\|\xi^{(t)}\|_2^2] \leq \sigma^2$, where
		$\sigma^2$ is the variance of $\xi^{(t)}$.
		Furthermore, we can deduce that
		\begin{align}
		\begin{split}
		\min_{t}& \mathbb{E} [ \|\nabla \mathcal{L}^{meta}(\hat{\mathbf{w}}^{(t)}(\Theta^{(t)}))\|_2^2] \leq \frac{\sum_{t=1}^T (\beta_t-\frac{L\beta_t^2}{2})\mathbb{E}_{\xi^{(N)}} \|\nabla \mathcal{L}^{meta}(\hat{\mathbf{w}}^{(t)}(\Theta^{(t)}))\|_2^2}{\sum_{t=1}^T (\beta_t-\frac{L\beta_t^2}{2})}\\
		&\leq  \frac{1}{\sum_{t=1}^T (2\beta_t-L\beta_t^2)} \left[2\mathcal{L}^{meta}\!(\hat{\mathbf{w}}^{(1)}(\!\Theta^{(1)}\!))\!+\sum_{t=1}^T\alpha_t\rho^2 (2+\alpha_t L\!) + L\sigma^2\sum_{t=1}^T \beta_t^2\right]\\
		& \leq \frac{1}{\sum_{t=1}^T \beta_t} \left[2\mathcal{L}^{meta}\!(\hat{\mathbf{w}}^{(1)}(\!\Theta^{(1)}\!))\!+\sum_{t=1}^T\alpha_t\rho^2 (2+\alpha_t L) + L\sigma^2\sum_{t=1}^T \beta_t^2\right] \\
		& \leq \frac{1}{T\beta_t} \left[2\mathcal{L}^{meta}\!(\hat{\mathbf{w}}^{(1)}(\!\Theta^{(1)}\!))\!+ \alpha_1 \rho^2 T (2+ L) + L\sigma^2 \sum_{t=1}^T \beta_t^2\right]\\
		& = \frac{2\mathcal{L}^{meta}\!(\hat{\mathbf{w}}^{(1)}(\!\Theta^{(1)}\!))\!}{T} \frac{1}{\beta_t} + \frac{2\alpha_1 \rho^2 (2+L)}{\beta_t}+\frac{L\sigma^2}{T}\sum_{t=1}^T \beta_t\\
		& \leq  \frac{2\mathcal{L}^{meta}\!(\hat{\mathbf{w}}^{(1)}(\!\Theta^{(1)}\!))\!}{T} \frac{1}{\beta_t} + \frac{2\alpha_1 \rho^2 (2+L)}{\beta_t}+L\sigma^2 \beta_t\\
		& = \frac{\mathcal{L}^{meta}\!(\hat{\mathbf{w}}^{(1)}(\!\Theta^{(1)}\!))\!}{T} \max\{L,\frac{\sigma\sqrt{T}}{c}\} + \min\{1,\frac{k}{T}\} \max\{L,\frac{\sigma\sqrt{T}}{c}\} \rho^2  (2+L)+ L \sigma^2 \min\{\frac{1}{L},\frac{c}{\sigma\sqrt{T}}\}\\
		& \leq \frac{\sigma\mathcal{L}^{meta}\!(\hat{\mathbf{w}}^{(1)}(\!\Theta^{(1)}\!)\!}{c\sqrt{T}}+ \frac{k\sigma \rho^2(2+L)}{c \sqrt{T}} + \frac{L\sigma c}{\sqrt{T}} = \mathcal{O}(\frac{1}{\sqrt{T}}).
		\end{split}
		\end{align}
		The third inequlity holds for $\sum_{t=1}^T (2\beta_t-L\beta_t^2) \geq \sum_{t=1}^T \beta_t$.
		Therefore, we can conclude that our algorithm can always achieve $\min_{0\leq t \leq T} \mathbb{E}[ \|\nabla \mathcal{L}^{meta}(\Theta^{(t)})\|_2^2] \leq \mathcal{O}(\frac{1}{\sqrt{T}})$ in $T$ steps, and this finishes our proof of Theorem \ref{th1}.
	\end{proof}
	
	\begin{lemma}\label{lemma2}
		(Lemma A.5 in \cite{mairal2013stochastic}) Let $(a_n)_{n\leq 1},(b_n)_{n\leq 1}$ be two non-negative real sequences such that the series $\sum_{i=1}^{\infty} a_n$ diverges, the series $\sum_{i=1}^{\infty} a_nb_n$ converges, and there exists $K>0$ such that $|b_{n+1}-b_n|\leq K a_n$. Then the sequences $(b_n)_{n\leq 1}$ converges to 0.\\
	\end{lemma}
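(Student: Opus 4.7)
\textbf{Proof proposal for Lemma~\ref{lemma2}.}

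The plan is to argue by contradiction: assume $b_n \not\to 0$, so $\limsup_n b_n > 0$, and derive a contradiction with the convergence of $\sum a_n b_n$. The easy sub-case is $\liminf_n b_n > 0$: then there exist $c > 0$ and $N$ with $b_n \geq c$ for all $n \geq N$, hence $\sum_{n \geq N} a_n b_n \geq c \sum_{n \geq N} a_n = \infty$ since $\sum a_n$ diverges, contradicting the hypothesis. So it remains to handle the sub-case $\liminf_n b_n = 0 < \limsup_n b_n$.

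In this main sub-case I would fix $\varepsilon > 0$ with $2\varepsilon < \limsup_n b_n$ (any positive $\varepsilon$ if the limsup is infinite). Then $b_n > 2\varepsilon$ for infinitely many $n$ (from the limsup bound) and $b_n < \varepsilon$ for infinitely many $n$ (from $\liminf_n b_n = 0$). Inductively extract indices $q_1 < r_1 < q_2 < r_2 < \cdots$ by the rule: with $r_0 := 0$, let $q_k$ be the smallest index $> r_{k-1}$ at which $b_{q_k} > 2\varepsilon$, and let $r_k$ be the smallest index $> q_k$ at which $b_{r_k} < \varepsilon$. Both families exist by the infinite-occurrence observation, and minimality of $r_k$ forces $b_n \geq \varepsilon$ for $q_k \leq n < r_k$.

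Telescoping the Lipschitz-type bound $|b_{n+1}-b_n| \leq K a_n$ over each block yields
\begin{align*}
\varepsilon \;<\; b_{q_k} - b_{r_k} \;\leq\; \sum_{n=q_k}^{r_k - 1} |b_{n+1} - b_n| \;\leq\; K \sum_{n=q_k}^{r_k - 1} a_n,
\end{align*}
so $\sum_{n=q_k}^{r_k - 1} a_n \geq \varepsilon / K$, and combining this with $b_n \geq \varepsilon$ on the block gives
\begin{align*}
\sum_{n=q_k}^{r_k - 1} a_n b_n \;\geq\; \varepsilon \cdot \frac{\varepsilon}{K} \;=\; \frac{\varepsilon^2}{K}.
\end{align*}
Since the blocks $[q_k, r_k - 1]$ are pairwise disjoint and infinite in number, summing over $k$ forces $\sum_n a_n b_n = \infty$, contradicting the assumption.

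The main obstacle I expect is the careful bookkeeping of the alternating subsequences: one must invoke both hypotheses ($\liminf b_n = 0$ to produce descents below $\varepsilon$, and $\limsup b_n > 2\varepsilon$ to produce ascents above $2\varepsilon$) in order to keep the induction going indefinitely, and verify that each block is non-empty and that the blocks are genuinely disjoint. Once the blocks are constructed, everything else is a clean telescoping argument in which the Lipschitz-type hypothesis $|b_{n+1} - b_n| \leq K a_n$ is exactly what converts a guaranteed jump of size $>\varepsilon$ in $b$-values into a guaranteed $\varepsilon/K$ mass of $\sum a_n$ over the block, which then transfers to a guaranteed $\varepsilon^{2}/K$ mass of $\sum a_n b_n$.
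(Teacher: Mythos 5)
Your argument is correct and complete: the two-case split on $\liminf b_n$, the extraction of disjoint crossing blocks on which $b_n\geq\varepsilon$, and the telescoping of $|b_{n+1}-b_n|\leq Ka_n$ to force $\sum_{n=q_k}^{r_k-1}a_n\geq\varepsilon/K$ and hence $\sum_{n=q_k}^{r_k-1}a_nb_n\geq\varepsilon^2/K$ per block all go through, yielding the desired contradiction with the convergence of $\sum a_nb_n$. Note that the paper itself does not prove this lemma --- it is quoted verbatim as Lemma A.5 of the cited reference \cite{mairal2013stochastic} and used as a black box --- and your proof is essentially the standard ``upcrossing'' argument given there, so you have in effect supplied the proof the paper delegates to the literature.
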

	\begin{theorem}\label{th2}
		Suppose the loss function $\ell$ is Lipschitz smooth with constant $L$, and $\mathcal{V}(\cdot)$ is differential with a $\delta$-bounded gradient and twice differential with its Hessian bounded by $\mathcal{B}$, and the loss function $\ell$ have $\rho$-bounded gradients with respect to training/meta data. Let the learning rate $\alpha_t$ satisfies $\alpha_t=\min\{1,\frac{k}{T}\}$, for some $k>0$, such that $\frac{k}{T}<1$, and $\beta_t, 1\leq t\leq N$ is a monotone descent sequence, $\beta_t =\min\{\frac{1}{L},\frac{c}{\sigma\sqrt{T}}\} $ for some $c>0$, such that $\frac{\sigma\sqrt{T}}{c}\geq L$ and $\sum_{t=1}^\infty \beta_t \leq \infty,\sum_{t=1}^\infty \beta_t^2 \leq \infty $.Then
		\begin{align}
		\lim_{t\rightarrow \infty} \mathbb{E}[\| \nabla \mathcal{L}^{train}(\mathbf{w}^{(t)};\Theta^{(t+1)})  \|_2^2]=0.
		\end{align}
	\end{theorem}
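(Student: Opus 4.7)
The plan is to adapt the standard non-convex SGD descent-lemma argument to a setting where the objective $\mathcal{L}^{train}(\mathbf{w}; \Theta)$ itself drifts between iterations because $\Theta$ is updated concurrently with $\mathbf{w}$. I would apply a one-step descent bound, separate the change into a piece driven by the $\mathbf{w}$-update and a piece driven by the $\Theta$-update, telescope to obtain a summability statement, and then upgrade summability to an actual limit using Lemma~\ref{lemma2}.

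First I would establish joint smoothness of $\mathcal{L}^{train}(\mathbf{w}; \Theta)$ in both arguments. The same product-rule and assumption-chasing calculation carried out for the meta loss in Lemma~\ref{lemma1} yields, from the boundedness of $\mathcal{V}$, its $\delta$-bounded gradient and $\mathcal{B}$-bounded Hessian, together with the $L$-smoothness and $\rho$-bounded gradient of $\ell$, that $\nabla_{\mathbf{w}}\mathcal{L}^{train}(\mathbf{w}; \Theta)$ is $L'$-Lipschitz in $\mathbf{w}$ uniformly in $\Theta$ and $L''$-Lipschitz in $\Theta$ uniformly in $\mathbf{w}$, with constants depending only on $L,\delta,\mathcal{B},\rho$. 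I also need $\|\Theta^{(t+2)} - \Theta^{(t+1)}\| = O(\beta_{t+1})$, which follows immediately from the $\Theta$-update in Eq.~(\ref{eq7}) since the meta-gradient magnitude is controlled by $\rho\delta$.

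Next I decompose the one-step change as
\begin{align*}
\mathcal{L}^{train}(\mathbf{w}^{(t+1)}; \Theta^{(t+2)}) - \mathcal{L}^{train}(\mathbf{w}^{(t)}; \Theta^{(t+1)}) = A_t + B_t,
\end{align*}
with $A_t = \mathcal{L}^{train}(\mathbf{w}^{(t+1)}; \Theta^{(t+1)}) - \mathcal{L}^{train}(\mathbf{w}^{(t)}; \Theta^{(t+1)})$ the pure $\mathbf{w}$-descent piece and $B_t$ the pure $\Theta$-drift piece. Taking conditional expectation over the training mini-batch, $L'$-smoothness in $\mathbf{w}$ yields the standard SGD bound $\mathbb{E}[A_t] \leq -(\alpha_t - \tfrac{L'\alpha_t^2}{2})\|\nabla\mathcal{L}^{train}(\mathbf{w}^{(t)};\Theta^{(t+1)})\|_2^2 + \tfrac{L'\alpha_t^2}{2}\sigma^2$, while $L''$-Lipschitzness in $\Theta$ together with the $\Theta$-step bound gives $|B_t| \leq C\beta_{t+1}$ for a constant $C$. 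Telescoping across $t$, using boundedness of $\mathcal{L}^{train}$ from below, and invoking the summability of $\{\alpha_t^2\}$ and $\{\beta_t\}$ implied by the step-size choice of Theorem~\ref{th1}, I obtain $\sum_t \alpha_t\,\mathbb{E}[\|\nabla\mathcal{L}^{train}(\mathbf{w}^{(t)};\Theta^{(t+1)})\|_2^2] < \infty$.

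To pass from summability to the actual limit I apply Lemma~\ref{lemma2} with $a_t = \alpha_t$ and $b_t = \mathbb{E}[\|\nabla\mathcal{L}^{train}(\mathbf{w}^{(t)};\Theta^{(t+1)})\|_2^2]$. Divergence of $\sum a_t$ follows from the Robbins--Monro-type lower bound on $\alpha_t$ in Theorem~\ref{th1}, and $\sum a_t b_t < \infty$ was just established, so the only missing hypothesis is the bounded-variation condition $|b_{t+1} - b_t| \leq K a_t$; verifying this is the main obstacle. It requires controlling $\|\nabla\mathcal{L}^{train}\|_2^2$ (rather than $\mathcal{L}^{train}$ itself) between consecutive iterates when \emph{both} $\mathbf{w}$ and $\Theta$ move. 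Using $\rho$-boundedness of $\nabla\mathcal{L}^{train}$ combined with its joint Lipschitzness gives $|b_{t+1} - b_t| \leq 2\rho\bigl(L'\|\mathbf{w}^{(t+1)}-\mathbf{w}^{(t)}\| + L''\|\Theta^{(t+2)}-\Theta^{(t+1)}\|\bigr) \leq K_1\alpha_t + K_2\beta_{t+1}$, and under the Theorem~\ref{th1} schedule $\beta_{t+1}$ is dominated by a constant multiple of $\alpha_t$, so the hypothesis holds. Lemma~\ref{lemma2} then delivers $\lim_t b_t = 0$, which is the desired conclusion.
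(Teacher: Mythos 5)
Your proposal is correct and follows essentially the same route as the paper's proof: the identical decomposition of $\mathcal{L}^{train}(\mathbf{w}^{(t+1)};\Theta^{(t+2)})-\mathcal{L}^{train}(\mathbf{w}^{(t)};\Theta^{(t+1)})$ into a $\Theta$-drift piece and a standard descent-lemma piece in $\mathbf{w}$, telescoping to get $\sum_t \alpha_t\,\mathbb{E}[\|\nabla\mathcal{L}^{train}(\mathbf{w}^{(t)};\Theta^{(t+1)})\|_2^2]<\infty$, and then Lemma~\ref{lemma2} with the bounded-variation hypothesis $|b_{t+1}-b_t|\leq K\alpha_t$ verified through Lipschitzness, bounded gradients, and the domination of $\beta_t$ by a constant multiple of $\alpha_t$. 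The only cosmetic difference is that you bound the drift piece directly by $O(\beta_{t+1})$ via Lipschitzness in $\Theta$, where the paper expands it to second order in the $\Theta$-step and lets the mean-zero noise terms vanish in expectation; both yield a summable contribution.
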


	\begin{proof}
		It is easy to conclude that 
		$\alpha_t$ satisfy $\sum_{t=0}^{\infty}\alpha_t = \infty, \sum_{t=0}^{\infty}\alpha_t^2 < \infty$.
		Recall the update of $\mathbf{w}$ in each iteration as follows:\vspace{-4mm}
		\begin{align}
		\begin{split}
		\mathbf{w}^{(t+1)}= \mathbf{w}^{(t)} - \alpha \frac{1}{n}
		\sum_{i=1}^n   \mathcal{V}(L_i^{train}(\mathbf{w}^{(t)});\Theta^{(t+1)}) \nabla_{\mathbf{w}} L_i^{train}(\mathbf{w})\Big|_{\mathbf{w}^{(t)}}.
		\end{split}
		\end{align}
		It can be written as:
		\begin{align}
		\mathbf{w}^{(t+1)}= \mathbf{w}^{(t)} - \alpha_t \nabla \mathcal{L}^{train}(\mathbf{w}^{(t)};\Theta^{(t+1)})|_{\Psi_t},
		\end{align}
		where $\nabla \mathcal{L}^{train}(\mathbf{w}^{(t)};\Theta)= \frac{1}{n}
		\sum_{i=1}^n   \mathcal{V}(L_i^{train}(\mathbf{w}^{(t)});\Theta) \nabla_{\mathbf{w}} L_i^{train}(\mathbf{w})\Big|_{\mathbf{w}^{(t)}}$. Since the mini-batch $\Psi_t$ is drawn uniformly at random, we can rewrite the update equation as:
		\begin{align}
		\mathbf{w}^{(t+1)}= \mathbf{w}^{(t)} - \alpha_t [\nabla \mathcal{L}^{train}(\mathbf{w}^{(t)};\Theta^{(t+1)})+\psi^{(t)}],
		\end{align}
		where $\psi^{(t)} = \nabla \mathcal{L}^{train}(\mathbf{w}^{(t)};\Theta^{(t+1)})|_{\Psi_t}-\nabla \mathcal{L}^{train}(\mathbf{w}^{(t)};\Theta^{(t+1)})$.
		Note that $\psi^{(t)}$ is i.i.d. random variable with finite variance, since $\Psi_t$ are drawn i.i.d. with a finite number of samples. Furthermore, $\mathbb{E}[\psi^{(t)}]=0$, since samples are drawn uniformly at random, and $\mathbb{E}[\|\psi^{(t)}\|_2^2]\leq \sigma^2$.
		
		The objective function $\mathcal{L}^{train}(\mathbf{w};\Theta)$ defined in Eq. \ref{eqob} can be easily checked to be Lipschitz-smooth with constant $L$, and have $\rho$-bounded gradients with respect to training data.
		Observe that
		\begin{tiny}
			\begin{align}\label{eq23}
			\begin{split}
			&\mathcal{L}^{train}(\mathbf{w}^{(t+1)};\Theta^{(t+2)}) -\mathcal{L}^{train}(\mathbf{w}^{(t)};\Theta^{(t+1)}) \\
			&= \left\{\mathcal{L}^{train}(\mathbf{w}^{(t+1)};\Theta^{(t+2)})-\mathcal{L}^{train}(\mathbf{w}^{(t+1)};\Theta^{(t+1)})\right\}
			+\left\{\mathcal{L}^{train}(\mathbf{w}^{(t+1)};\Theta^{(t+1)})-\mathcal{L}^{train}(\mathbf{w}^{(t)};\Theta^{(t+1)})\right\}.
			\end{split}
			\end{align}
		\end{tiny}
		
		For the first term,
		\begin{scriptsize}
			\begin{align}
			\begin{split}
			&\mathcal{L}^{train}(\mathbf{w}^{(t+1)};\Theta^{(t+2)})-\mathcal{L}^{train}(\mathbf{w}^{(t+1)};\Theta^{(t+1)})\\
			= &\frac{1}{n}\sum_{i=1}^n \left\{\mathcal{V}(\mathcal{L}_i^{train}(\mathbf{w}^{(t)});\Theta^{(t+2)})-\mathcal{V}(\mathcal{L}_i^{train}(\mathbf{w}^{(t)});\Theta^{(t+1)})\right\} \mathcal{L}_i^{train}(\mathbf{w}^{(t)})\\
			\leq &  \frac{1}{n}\sum_{i=1}^n \left\{\langle \frac{\partial \mathcal{V}(\mathcal{L}_i^{train}(\mathbf{w}^{(t)});\Theta)}{\partial \Theta}\Big|_{ \Theta^{(t)}}, \Theta^{(t+1)}-\Theta^{(t)}\rangle +\frac{\delta}{2} \|\Theta^{(t+1)}-\Theta^{(t)}\|_2^2\right\}\mathcal{L}_i^{train}(\mathbf{w}^{(t)})\\
			= &  \frac{1}{n}\sum_{i=1}^n \left\{\langle \frac{\partial \mathcal{V}(\mathcal{L}_i^{train}(\mathbf{w}^{(t)});\Theta)}{\partial \Theta}\Big|_{ \Theta^{(t)}}, -\beta_t[ \nabla \mathcal{L}^{meta}(\hat{\mathbf{w}}^{(t)}(\Theta^{(t)}))+\xi^{(t)}]\rangle +\frac{\delta\beta_t^2}{2} \| \nabla \mathcal{L}^{meta}(\hat{\mathbf{w}}^{(t)}(\Theta^{(t)}))+\xi^{(t)}\|_2^2 \right\}\mathcal{L}_i^{train}(\mathbf{w}^{(t)})\\
			=& \frac{1}{n}\sum_{i=1}^n \left\{\langle \frac{\partial \mathcal{V}(\mathcal{L}_i^{train}(\mathbf{w}^{(t)});\Theta)}{\partial \Theta}\Big|_{ \Theta^{(t)}}, -\beta_t[ \nabla \mathcal{L}^{meta}(\hat{\mathbf{w}}^{(t)}(\Theta^{(t)}))+\xi^{(t)}]\rangle +\frac{\delta\beta_t^2}{2} \| \nabla \mathcal{L}^{meta}(\hat{\mathbf{w}}^{(t)}(\Theta^{(t)}))+\xi^{(t)}\|_2^2 \right\}\mathcal{L}_i^{train}(\mathbf{w}^{(t)})\\
			=& \frac{1}{n}\sum_{i=1}^n \left\{\langle \frac{\partial \mathcal{V}(\mathcal{L}_i^{train}(\mathbf{w}^{(t)});\Theta)}{\partial \Theta}\Big|_{ \Theta^{(t)}}, -\beta_t[ \nabla \mathcal{L}^{meta}(\hat{\mathbf{w}}^{(t)}(\Theta^{(t)}))+\xi^{(t)}]\rangle +\frac{\delta\beta_t^2}{2} (\| \nabla \mathcal{L}^{meta}(\hat{\mathbf{w}}^{(t)}(\Theta^{(t)}))\|_2^2+\|\xi^{(t)}\|_2^2 +\right.\\
			&\phantom{=\;\;}\left.2\langle\nabla \mathcal{L}^{meta}(\hat{\mathbf{w}}^{(t)}(\Theta^{(t)})),\xi^{(t)} \rangle) \right\}L_i^{train}(\mathbf{w}^{(t)})
			\end{split}
			\end{align}
		\end{scriptsize}
		For the second term,
		\begin{scriptsize}
			\begin{align}\label{eq25}
			\begin{split}
			&\mathcal{L}^{train}(\mathbf{w}^{(t+1)};\Theta^{(t+1)})-\mathcal{L}^{train}(\mathbf{w}^{(t)};\Theta^{(t+1)}) \\
			&\leq \langle\nabla \mathcal{L}^{train}(\mathbf{w}^{(t)};\Theta^{(t+1)}),\mathbf{w}^{(t+1)}-\mathbf{w}^{(t)} \rangle + \frac{L}{2} \|\mathbf{w}^{(t+1)}-\mathbf{w}^{(t)}\|_2^2\\
			& = \langle \nabla \mathcal{L}^{train}(\mathbf{w}^{(t)};\Theta^{(t+1)}), -\alpha_t [\nabla \mathcal{L}^{train}(\mathbf{w}^{(t)};\Theta^{(t+1)})+\psi^{(t)}] \rangle + \frac{L\alpha_t^2}{2} \|\nabla \mathcal{L}^{train}(\mathbf{w}^{(t)};\Theta^{(t+1)})+\psi^{(t)}\|_2^2\\
			& = -(\alpha_t-\frac{L\alpha_t^2}{2}) \|\nabla \mathcal{L}^{train}(\mathbf{w}^{(t)};\Theta^{(t+1)})\|_2^2 + \frac{L\alpha_t^2}{2}\|\psi^{(t)}\|_2^2 - (\alpha_t-L\alpha_t^2)\langle \nabla \mathcal{L}^{train}(\mathbf{w}^{(t)};\Theta^{(t+1)}), \psi^{(t)}\rangle.
			\end{split}
			\end{align}
		\end{scriptsize}
		Therefore, we have:
		\begin{scriptsize}
			\begin{align}
			\begin{split}
			&\mathcal{L}^{train}(\mathbf{w}^{(t+1)};\Theta^{(t+2)}) -\mathcal{L}^{train}(\mathbf{w}^{(t)};\Theta^{(t+1)}) 
			\leq  \frac{1}{n}\sum_{i=1}^n \left\{\langle \frac{\partial \mathcal{V}(\mathcal{L}_i^{train}(\mathbf{w}^{(t)});\Theta)}{\partial \Theta}\Big|_{ \Theta^{(t)}}, -\beta_t[ \nabla \mathcal{L}^{meta}(\hat{\mathbf{w}}^{(t)}(\Theta^{(t)}))+\xi^{(t)}]\rangle +\right.\\
			&	\phantom{=\;\;}\left.+\frac{\delta\beta_t^2}{2} (\| \nabla  \mathcal{L}^{meta}(\hat{\mathbf{w}}^{(t)}(\Theta^{(t)}))\|_2^2+\|\xi^{(t)}\|_2^2 +
			2\langle\nabla \mathcal{L}^{meta}(\hat{\mathbf{w}}^{(t)}(\Theta^{(t)})),\xi^{(t)} \rangle) \right\}L_i^{train}(\mathbf{w}^{(t)}) \\
			&\phantom{=\;\;}-(\alpha_t-\frac{L\alpha_t^2}{2}) \|\nabla \mathcal{L}^{train}(\mathbf{w}^{(t)};\Theta^{(t+1)})\|_2^2 + \frac{L\alpha_t^2}{2}\|\psi^{(t)}\|_2^2- (\alpha_t-L\alpha_t^2)\langle \nabla \mathcal{L}^{train}(\mathbf{w}^{(t)};\Theta^{(t+1)}), \psi^{(t)}\rangle.
			\end{split}
			\end{align}
		\end{scriptsize}
		Taking expectation of both sides of (\ref{eq25}) and since $\mathbb{E}[\xi^{(t)}]=0,\mathbb{E}[\psi^{(t)}]=0$, we have
		\begin{tiny}
			\begin{align*}
			&\mathbb{E}[\mathcal{L}^{train}(\mathbf{w}^{(t+1)};\Theta^{(t+2)})]-\mathbb{E}[\mathcal{L}^{train}(\mathbf{w}^{(t)};\Theta^{(t+1)})]
			\leq \mathbb{E}  \frac{1}{n}\sum_{i=1}^n L_i^{train}(\mathbf{w}^{(t)}) \left\{\langle \frac{\partial \mathcal{V}(\mathcal{L}_i^{train}(\mathbf{w}^{(t)});\Theta)}{\partial \Theta}\Big|_{ \Theta^{(t)}}, -\beta_t[ \nabla \mathcal{L}^{meta}(\hat{\mathbf{w}}^{(t)}(\Theta^{(t)}))]\rangle +\right.\\
			&\left.+\frac{\delta\beta_t^2}{2} (\| \nabla  \mathcal{L}^{meta}(\hat{\mathbf{w}}^{(t)}(\Theta^{(t)}))\|_2^2+\|\xi^{(t)}\|_2^2 ) \right\}  -\alpha_t \mathbb{E}[\|\nabla \mathcal{L}^{train}(\mathbf{w}^{(t)};\Theta^{(t+1)})\|_2^2] + \frac{L\alpha_t^2}{2} \left\{\mathbb{E}[\|\nabla \mathcal{L}^{train}(\mathbf{w}^{(t)};\Theta^{(t+1)})\|_2^2]+\mathbb{E}[ \|\psi^{(t)}\|_2^2 ]\right\}
			\end{align*}
		\end{tiny}
		Summing up the above inequalities over $t=1,...,\infty$ in both sides, we obtain 
		\begin{tiny}
			\begin{align*}
			&\!\sum_{t=1}^{\infty} \!\!\alpha_t \mathbb{E} [\|\nabla\! \mathcal{L}^{train}\!(\mathbf{w}^{(t)}\!;\!\Theta^{(t+1)}\!)\!\|_2^2] +  \sum_{t=1}^{\infty} \beta_t\mathbb{E} \frac{1}{n}\sum_{i=1}^n\|  L_i^{train}(\mathbf{w}^{(t)})  \|\|  \frac{\partial \mathcal{V}(\mathcal{L}_i^{train}(\mathbf{w}^{(t)});\Theta)}{\partial \Theta}\Big|_{ \Theta^{(t)}}  \| \cdot \|\nabla \mathcal{L}^{meta}(\hat{\mathbf{w}}^{(t)}(\Theta^{(t)}))\| \\
			&\phantom{=\;\;}\leq \sum_{t=1}^{\infty} \!\!\frac{L\alpha_t^2}{2}\!\left\{ \mathbb{E}[\|\nabla\! \mathcal{L}^{train}\!(\mathbf{w}^{(t)};\!\Theta^{(t+1)})\!\|_2^2]\!+\!\mathbb{E}[\|\psi^{(t)}\|_2^2 ]\right\} \!+\! \mathbb{E}[\mathcal{L}^{train}\!(\mathbf{w}^{(1)};\Theta^{(2)})]\!-\!\!\!\!\lim_{T\to \infty}\!\!\mathbb{E}[\mathcal{L}^{train}\!(\mathbf{w}^{(t+1)}\!;\!\Theta^{(t+2)}\!)]\\
			&\phantom{=\;\;}+\sum_{t=1}^{\infty} \frac{\delta\beta_t^2}{2}\left\{\frac{1}{n}\sum_{i=1}^n\|  L_i^{train}(\mathbf{w}^{(t)})\| (\mathbb{E}\| \nabla  \mathcal{L}^{meta}(\hat{\mathbf{w}}^{(t)}(\Theta^{(t)}))\|_2^2+\mathbb{E}\|\xi^{(t)}\|_2^2 \right\} \\
			&\phantom{=\;\;} \leq \!\!\!\sum_{t=1}^{\infty} \!\!\!\frac{L\alpha_t^2}{2}\! \{\!\rho^2\!+\!\sigma^2\!\}\!+\!\mathbb{E}[\mathcal{L}^{tr}\!(\mathbf{w}^{(1)}\!;\!\Theta^{(2)})]   + \sum_{t=1}^{\infty} \frac{\delta\beta_t^2}{2}\left\{M(\rho^2+\sigma^2)\right\}
			\!\!\leq \!\!\infty.
			\end{align*} 
		\end{tiny}
		The last inequality holds since $\sum_{t=0}^{\infty}\alpha_t^2 < \infty, \sum_{t=0}^{\infty}\beta_t^2 < \infty$, and $\frac{1}{n}\sum_{i=1}^n\|  L_i^{train}(\mathbf{w}^{(t)})\|\leq M$ for limited number of samples' loss is bounded. Thus we have 
		\begin{tiny}
			\begin{align}
			\sum_{t=1}^{\infty} \!\!\alpha_t \mathbb{E} [\|\nabla\! \mathcal{L}^{train}\!(\mathbf{w}^{(t)}\!;\!\Theta^{(t+1)}\!)\!\|_2^2] +  \sum_{t=1}^{\infty} \beta_t\mathbb{E} \frac{1}{n}\sum_{i=1}^n\|  L_i^{train}(\mathbf{w}^{(t)})  \|\|  \frac{\partial \mathcal{V}(\mathcal{L}_i^{train}(\mathbf{w}^{(t)});\Theta)}{\partial \Theta}\Big|_{ \Theta^{(t)}}  \| \cdot \|\nabla \mathcal{L}^{meta}(\hat{\mathbf{w}}^{(t)}(\Theta^{(t)}))\|\leq \infty.
			\end{align}
		\end{tiny}
		Since 
		\begin{align}
		\begin{split}
		& \sum_{t=1}^{\infty} \beta_t\mathbb{E} \frac{1}{n}\sum_{i=1}^n\|  L_i^{train}(\mathbf{w}^{(t)})  \|\|  \frac{\partial \mathcal{V}(\mathcal{L}_i^{train}(\mathbf{w}^{(t)});\Theta)}{\partial \Theta}\Big|_{ \Theta^{(t)}}  \| \cdot \|\nabla \mathcal{L}^{meta}(\hat{\mathbf{w}}^{(t)}(\Theta^{(t)}))\| \\
		& \leq M\delta\rho\sum_{t=1}^{\infty} \beta_t  \leq \infty,
		\end{split}
		\end{align}
		which implies that $\sum_{t=1}^{\infty} \alpha_t \mathbb{E} [\|\nabla \mathcal{L}^{train}(\mathbf{w}^{(t)};\Theta^{(t+1)})\|_2^2]<\infty$.
		By Lemma \ref{lemma2}, to substantiate $\lim_{t\rightarrow \infty} \mathbb{E}[\nabla \mathcal{L}^{train}(\mathbf{w}^{(t)};\Theta^{(t+1)})\|_2^2]=0$, since $\sum_{t=0}^{\infty}\alpha_t = \infty$, it only needs to prove:
		\begin{align}
		\left|\mathbb{E}[\nabla \mathcal{L}^{train}(\mathbf{w}^{(t+1)};\Theta^{(t+2)})\|_2^2]-\mathbb{E}[\nabla \mathcal{L}^{train}(\mathbf{w}^{(t)};\Theta^{(t+1)})\|_2^2]  \right|\leq C \alpha_k,
		\end{align}
		for some constant $C$. Based on the inequality:
		\begin{align}
		\left|(\|a\|+\|b\|)(\|a\|-\|b\|)\right| \leq \|a+b\|\|a-b\|,
		\end{align}
		we then have:
		\begin{tiny}
			\begin{align}
			\begin{split}
			&\left|\mathbb{E}[\|\nabla \mathcal{L}^{train}(\mathbf{w}^{(t+1)};\Theta^{(t+2)})\|_2^2]-\mathbb{E}[\|\nabla \mathcal{L}^{train}(\mathbf{w}^{(t)};\Theta^{(t+1)})\|_2^2]  \right|\\
			&= \left|\mathbb{E}\left[( \|\nabla \mathcal{L}^{train}(\mathbf{w}^{(t+1)};\Theta^{(t+2)})\|_2 + \|\nabla \mathcal{L}^{train}(\mathbf{w}^{(t)};\Theta^{(t+1)})\|_2) ( \|\nabla \mathcal{L}^{train}(\mathbf{w}^{(t+1)};\Theta^{(t+2)})\|_2 - \|\nabla \mathcal{L}^{train}(\mathbf{w}^{(t)};\Theta^{(t+1)})\|_2)\right]\right|\\
			&\leq \mathbb{E}\left[\left| \|\nabla \mathcal{L}^{train}(\mathbf{w}^{(t+1)};\Theta^{(t+1)})\|_2 + \|\nabla \mathcal{L}^{train}(\mathbf{w}^{(t)};\Theta^{(t)})\|_2)\right| \left|( \|\nabla \mathcal{L}^{train}(\mathbf{w}^{(t+1)};\Theta^{(t+2)})\|_2 - \|\nabla \mathcal{L}^{train}(\mathbf{w}^{(t)};\Theta^{(t+1)})\|_2)\right|\right]\\
			&\leq \mathbb{E}\left[\left \|\nabla \mathcal{L}^{train}(\mathbf{w}^{(t+1)};\Theta^{(t+2)}) + \nabla \mathcal{L}^{train}(\mathbf{w}^{(t)};\Theta^{(t+1)})\right\|_2 \left\|\nabla \mathcal{L}^{train}(\mathbf{w}^{(t+1)};\Theta^{(t+2)}) - \nabla \mathcal{L}^{train}(\mathbf{w}^{(t)};\Theta^{(t+1)})\right\|_2\right]\\
			&\leq \mathbb{E}\left[(\left \|\nabla \mathcal{L}^{train}(\mathbf{w}^{(t+1)};\Theta^{(t+2)})\right \|_2 + \left\|\nabla \mathcal{L}^{train}(\mathbf{w}^{(t)};\Theta^{(t+1)})\right\|_2) \left\|\nabla \mathcal{L}^{train}(\mathbf{w}^{(t+1)};\Theta^{(t+2)}) - \nabla \mathcal{L}^{train}(\mathbf{w}^{(t)};\Theta^{(t+1)})\right\|_2\right]\\
			&\leq 2L\rho \mathbb{E}\left[ \|(\mathbf{w}^{(t+1)},\Theta^{(t+2)}) -   (\mathbf{w}^{(t)},\Theta^{(t+1)})\|_2\right]\\
			&\leq 2L\rho \alpha_t\beta_t \mathbb{E}\left[\left\|\left( \nabla \mathcal{L}^{train}(\mathbf{w}^{(t)};\Theta^{(t+1)})+\psi^{(t)},  \nabla \mathcal{L}^{meta}(\Theta^{(t+1)})+\xi^{(t+1)}   \right)\right\|_2\right]\\
			& \leq 2L\rho \alpha_t\beta_t \mathbb{E}\left[ \sqrt{\| \nabla \mathcal{L}^{train}(\mathbf{w}^{(t)};\Theta^{(t+1)})+\psi^{(t)} \|_2^2} +\sqrt{\| \nabla \mathcal{L}^{meta}(\Theta^{(t+1)})+\xi^{(t+1)} \|_2^2}   \right]\\
			&\leq  2L\rho \alpha_t\beta_t \sqrt{\mathbb{E}\left[ \| \nabla \mathcal{L}^{train}(\mathbf{w}^{(t)};\Theta^{(t+1)})+\psi^{(t)} \|_2^2  \right]+\mathbb{E}\left[ \| \nabla \mathcal{L}^{meta}(\Theta^{(t+1)})+\xi^{(t+1)} \|_2^2\right] }\\
			&\leq 2L\rho \alpha_t\beta_t \sqrt{  \mathbb{E} \left[\| \nabla \mathcal{L}^{train}(\mathbf{w}^{(t)};\Theta^{(t+1)})\|_2^2 \right]  + \mathbb{E}\left[
				\|\psi^{(t)}\|_2^2\right
				]    +\mathbb{E} \left[ \| \xi^{(t+1)}\|_2^2\right] +\mathbb{E} \left[ \| \nabla \mathcal{L}^{meta}(\Theta^{(t+1)})\|_2^2  \right]}\\
			&\leq 2L\rho \alpha_t\beta_t \sqrt{2\sigma^2+2\rho^2}\\
			&\leq 2\sqrt{2(\sigma^2+\rho^2)}L\rho\beta_1 \alpha_t.
			\end{split}
			\end{align}
		\end{tiny}
		According to the above inequality, we can conclude that our algorithm can achieve
		\begin{align}
		\lim_{t\rightarrow \infty} \mathbb{E}[\| \nabla \mathcal{L}^{train}(\mathbf{w}^{(t)};\Theta^{(t+1)})  \|_2^2]=0.
		\end{align}
		The proof is completed.
	\end{proof}

	\section{Experimentalp Details on Meta-Weight-Net}
	To match the original training step size, in our experiment, we can consider normalizing the weights of all examples in a training batch so that they sum up to one. In other words, we choose to have a hard constraint within the set $\|\mathcal{V}(\ell;\Theta)\|_1 = 1$, and the normalized weight
	\begin{align}
	\eta_i^{(t)} = \frac{\mathcal{V}^{(t)}(L_i;\Theta)}{\sum_j \mathcal{V}^{(t)}(L_j;\Theta) +\delta(\sum_i \mathcal{V}^{(t)}(L_j;\Theta))},
	\end{align}
	where the function of $\delta(\cdot)$ is to prevent the degeneration case when all $\mathcal{V}^{(t)}(L_j;\Theta)$ in a mini-batch are zeros, i.e. $\delta(a)=\tau$. $\tau$ denotes a constant grater than 0, if $a = 0$; and equal to 0 otherwise. With batch normalization,
	we effectively cancel the learning rate of Meta-Weight-Net, and it works well with a fixed learning rate.
	
	\section{MLP architecture of Meta-Weight-Net}
	We actually have tried different MLP architecture settings in experiments. The right table depicts some representative results under 6 different structures, with different depths and widths. It can be seen that varying MLP settings have unsubstantial effects to the final result. We thus prefer to use the simple and shallow one.
	\begin{table}[htp]	\vspace{-4mm}
		\caption{Test accuracy on CIFAR-10 and CIFAR-100 of different MW-Nets.}
		\begin{tabular}{c|c|c|c|c|c|c}
			\toprule
			\multirow{2}{*}{architcture}  & \multicolumn{2}{c|}{Imbalance (factor 100)} & \multicolumn{2}{c|}{ Uniform noise (40\%)} & \multicolumn{2}{c}{ Flip noise (40\%)}\\
			&  CIFAR10 &\multicolumn{1}{c|}{CIFAR100}& CIFAR10 &\multicolumn{1}{c|}{CIFAR100}& CIFAR10 &CIFAR100\\
			\hline
			1-50-1 & 73.50 & 41.87 & 89.01  & 67.63  & 87.38  &57.83 \\
			\hline
			1-100-1 & 75.21 & 42.09 & 89.27  & 67.73  & 87.54  &58.64 \\
			\hline
			1-200-1 & 74.70 &41.72  &89.58   & 67.84  & 87.74  & 58.41\\
			\hline
			1-100-100-1 & 75.01 & 41.97 & 89.09  & 66.48  & 87.28  & 57.39\\
			\hline
			1-10-10-1 & 74.71 & 41.94 & 89.10  & 66.53  & 87.58  &57.11 \\
			\hline
			1-10-10-10-1 & 74.96 &42.31 &  88.82 & 66.67  & 87.36  &57.29 \\
			\bottomrule
		\end{tabular}
	\end{table}

	\section{Complexity Analysis of The Proposed Algorithm}
	Our Meta-PGC algorithm can be roughly regarded as that requires an extra full forward and backward passes of the network on training data (step 5 in algorithm) and an extra full forward and backward passes of the network of meta data (step 6 in algorithm) in the presence of the normal classifier network' parameters update (step 7 in algorithm). Therefore compared to regular training, our method needs approximately 3 $\times$ training time in each iteration. It is suggest to let batch size of meta data less than or equal to batch size of training data, which avoids GPU memory increase and speeds up the learning process.
	
	\begin{figure}[t]
		\centering
		\subfigure[CIFAR-10 40\% noise]{
			\label{Robustnessa} 
			\includegraphics[width=0.48\textwidth]{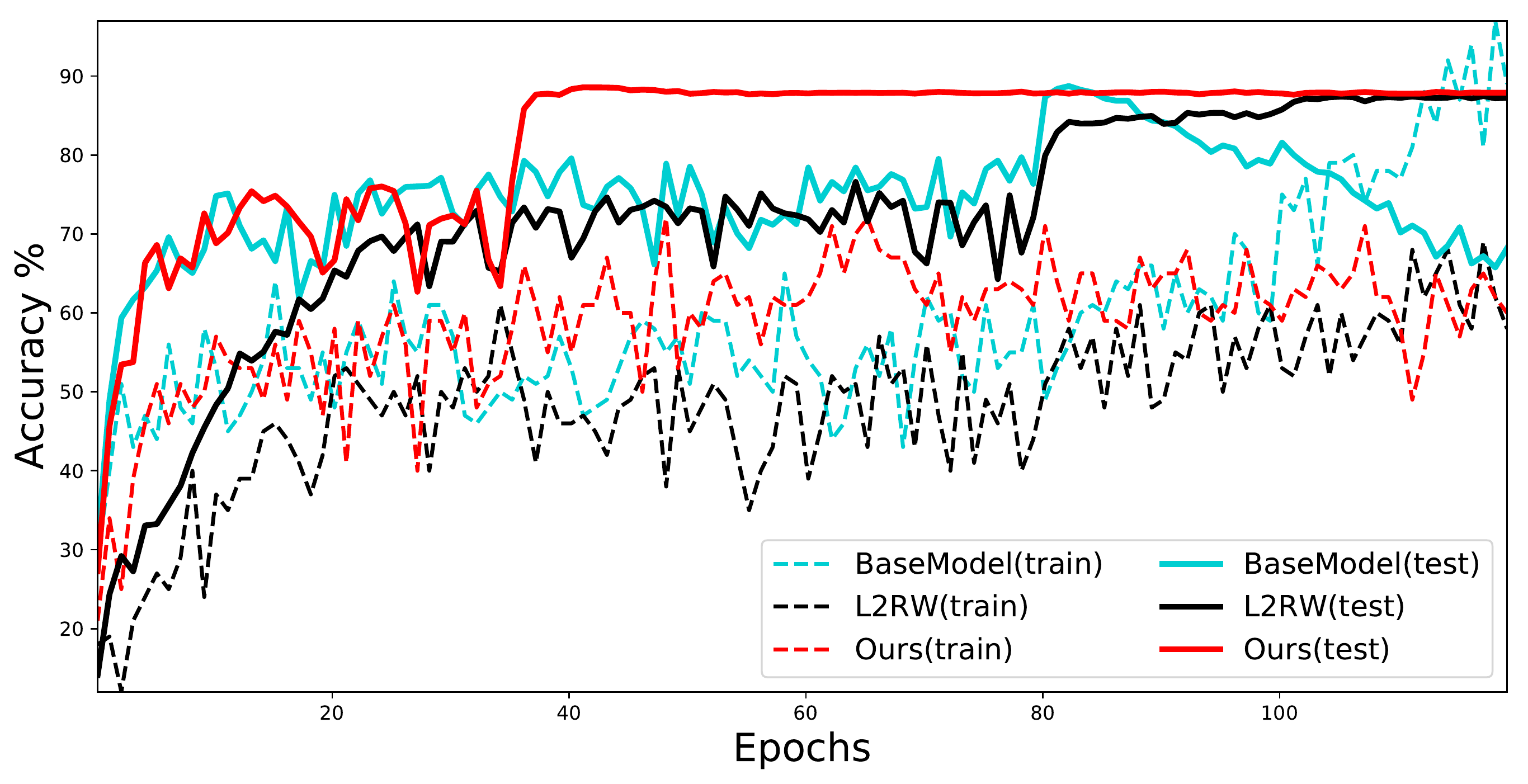}}
		\subfigure[CIFAR-10 60\% noise]{
			\label{Robustnessb} 
			\includegraphics[width=0.48\textwidth]{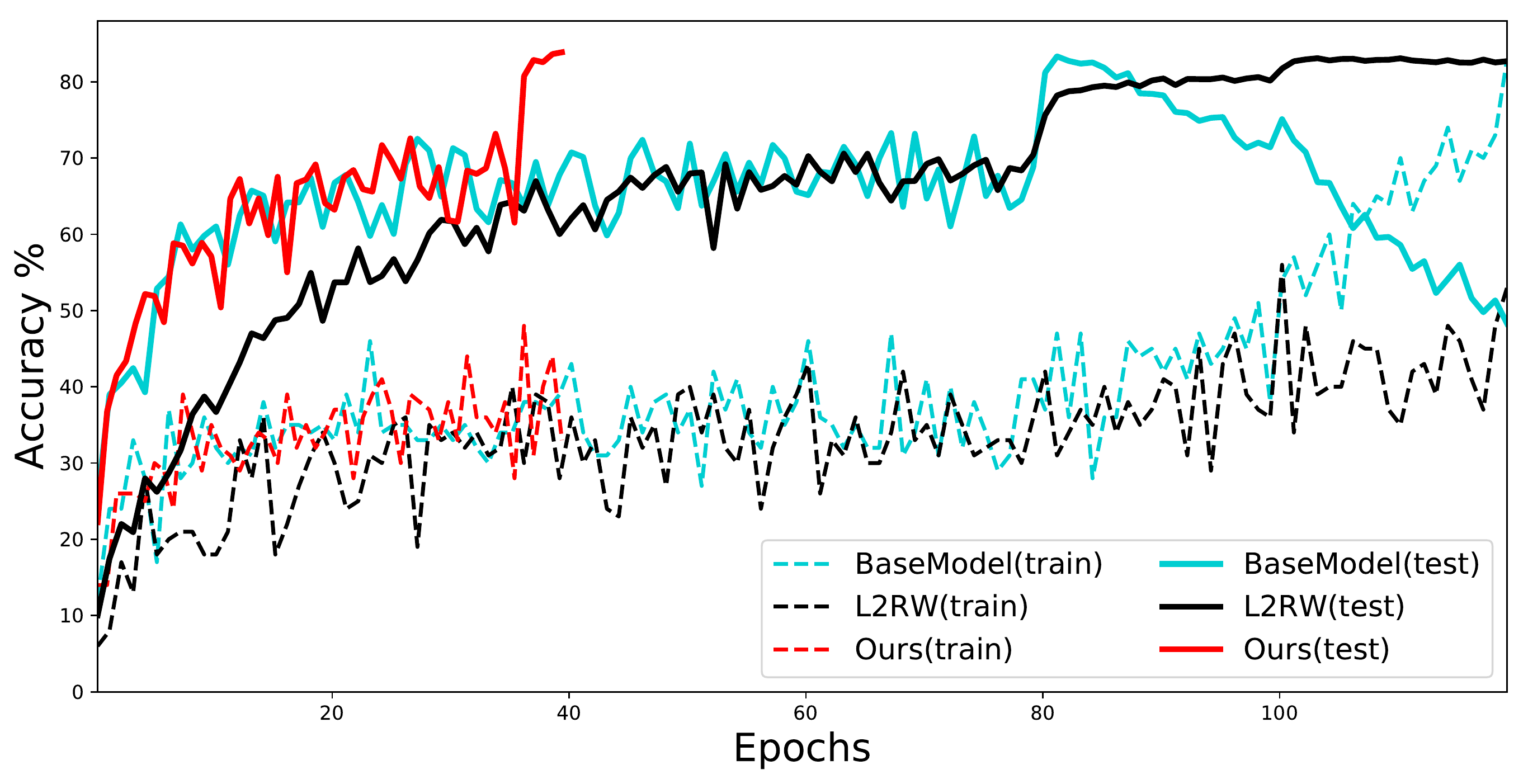}}
		\vspace{0mm} \\
		\subfigure[CIFAR-100 40\% noise]{
			\label{Robustnessc} 
			\includegraphics[width=0.48\textwidth]{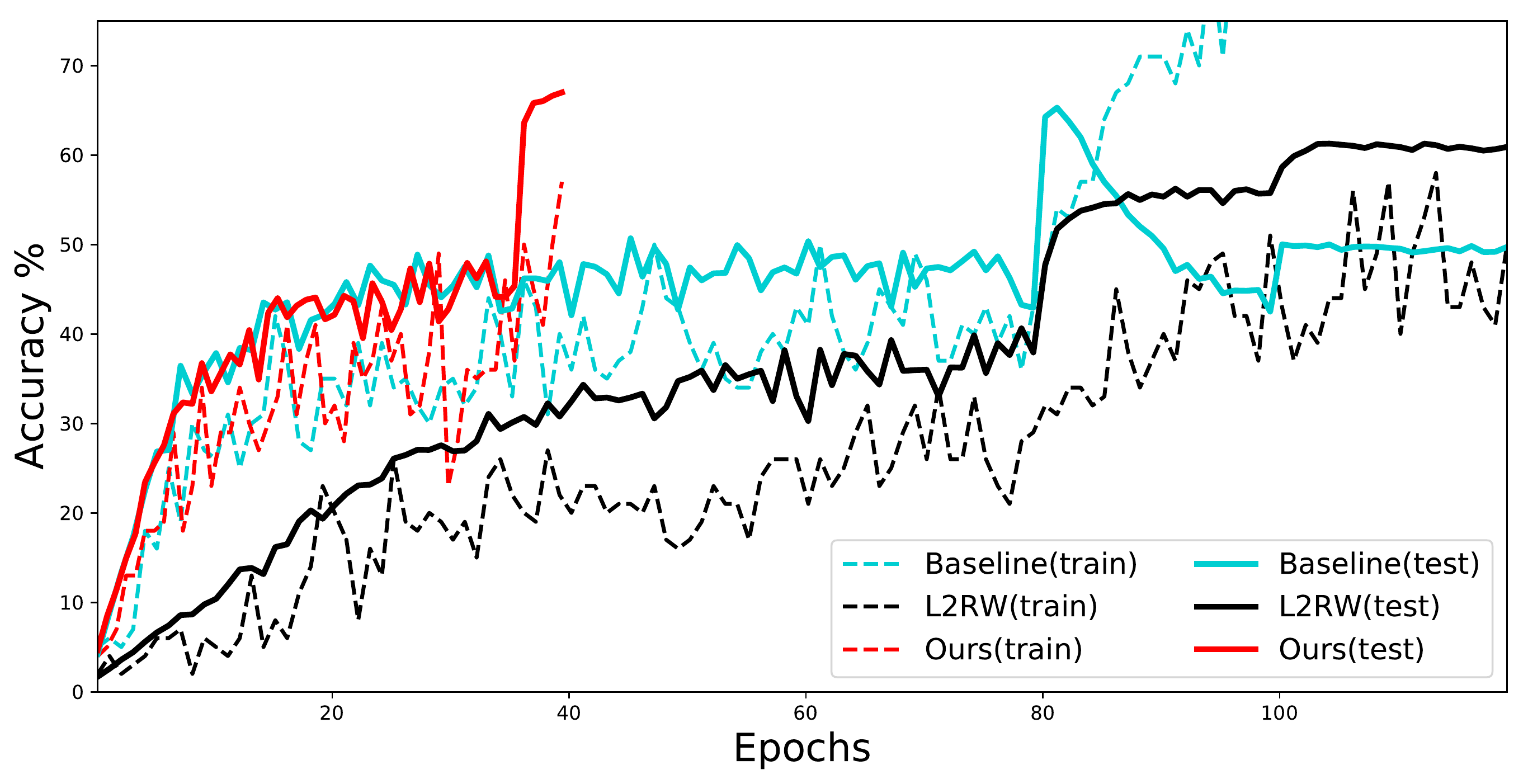}}
		\subfigure[CIFAR-100 60\% noise]{
			\label{Robustnessd} 
			\includegraphics[width=0.48\textwidth]{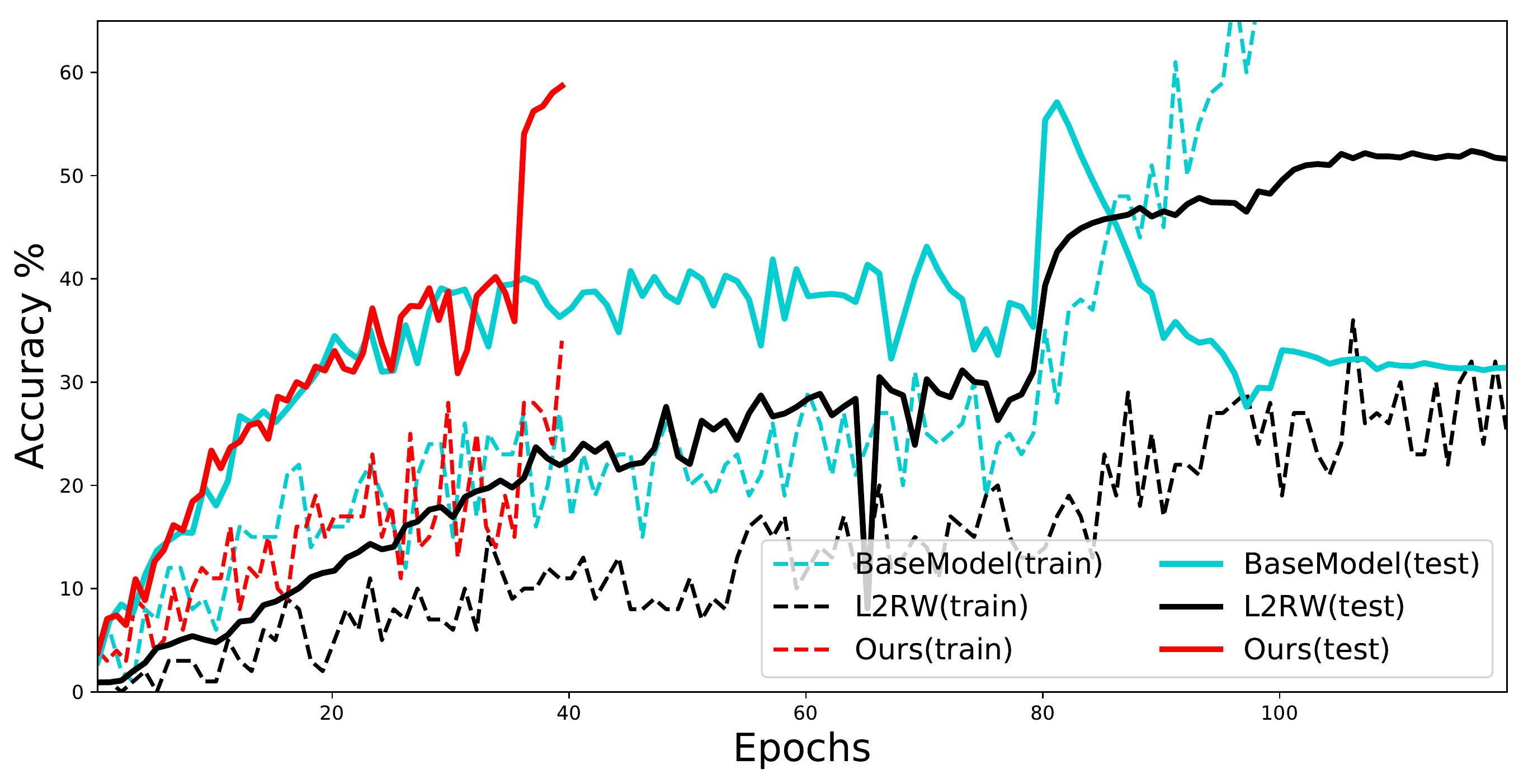}}\vspace{0mm}
		\caption{Training and test accuracy changing curves in \textbf{uniform noise} cases of CIFAR-10 and CIFAR-100 datasets. Solid and dotted curves denote the test and training accuracies, respectively. Our method and L2RW are less prone to overfit label noises, while our method can converge faster at around 40 epoch as shown in Fig. \ref{Robustnessa}. We thus terminate our method in 40 epoch in other experiments.} \vspace{-4mm}
		\label{Robustness} 
	\end{figure}
	
	\section{Robustness Towards Label Noise Overfitting Issue}
	Fig. \ref{Robustness} plots the tendency curves of the mini-batch training accuracy calculated on noisy training set in experiments, as well as those calculated simultaneously on clean test data during learning iterations. From the figure, we can easily find that the BaseModel can easily overfit to the noisy labels contained in the training set, whose test accuracy quickly degrades after the first learning rate decays. While our method and L2RW are less prone to such overfitting issue, and they retain the similar test accuracy until termination. Especially, throughout all our experiments, we find that our method can converge significantly faster than the BaseModel and L2RW methods\footnote{Since our method is inspired by L2RW \cite{ren2018learning}, we compare this method, as well as the BaseModel, for better illustration. The uniform noise experiment setting is the same as L2RW \cite{ren2018learning}, and thus the iteration steps of BaseModel and L2RW follows the original setting.}, as clearly shown in Fig.\ref{Robustness}, and get the peak performance at around 40 epochs, as compared with 120 epochs required for the other two methods, as shown in Fig. \ref{Robustnessa}. We thus only report our results at 40 epochs in the figure for other experiments.

	\newpage

	\section{Confusion Matrices for Class Imbalance and Corrupted Labels}
	We demonstrate confusion matrices of Baseline and our algorithm on long-tailed CIFAR-10 dataset for class imbalance and corrupted labels experiments, as shown in Fig. 2-4.
	
	\begin{figure}[h]
		\setlength{\abovecaptionskip}{0.cm}
		\setlength{\belowcaptionskip}{-1.cm}
		\vspace{-0.5cm}
		\centering
		\subfigure[1 for Baseline]{
			\includegraphics[width=0.23\textwidth]{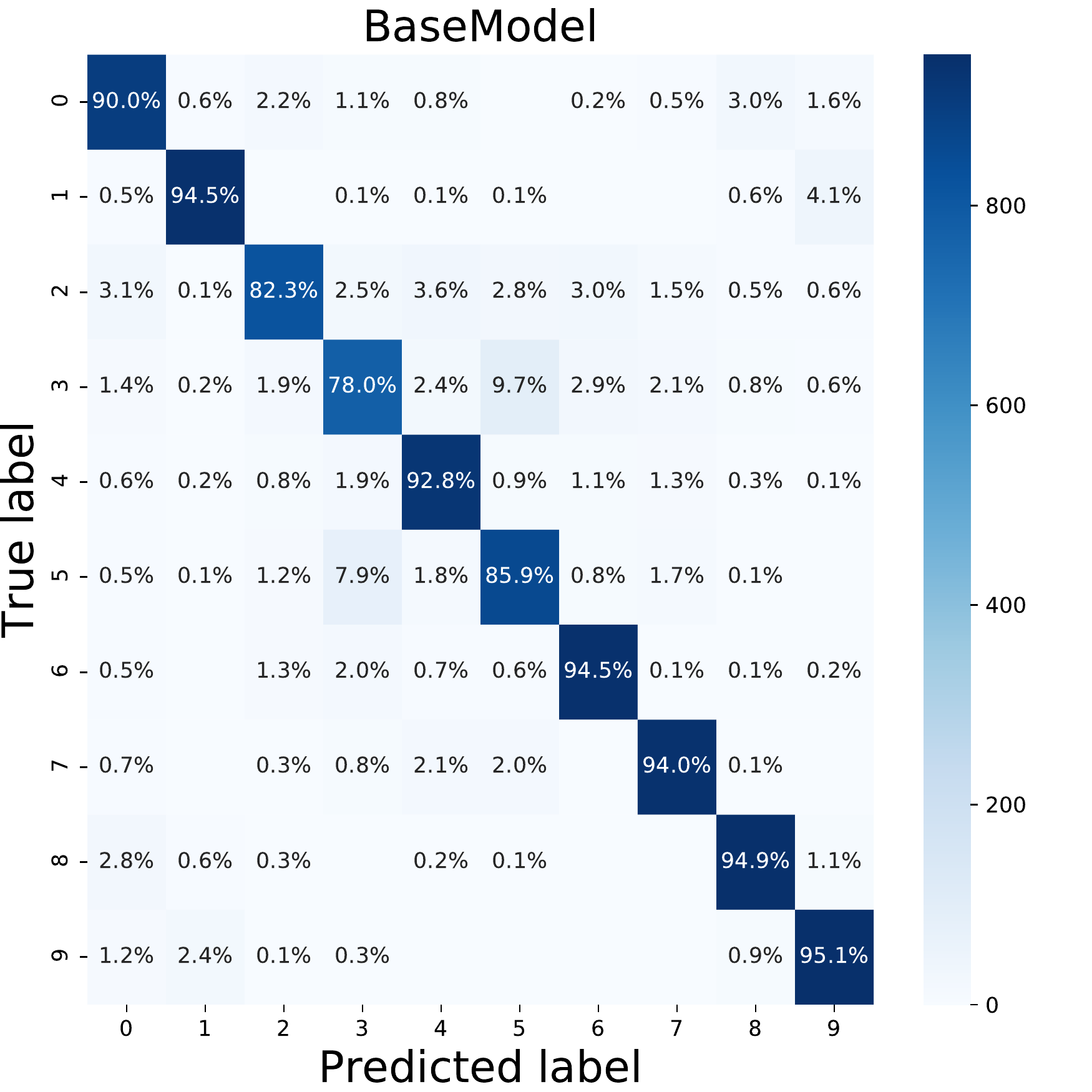}}
		\subfigure[1 for Ours]{
			\includegraphics[width=0.23\textwidth]{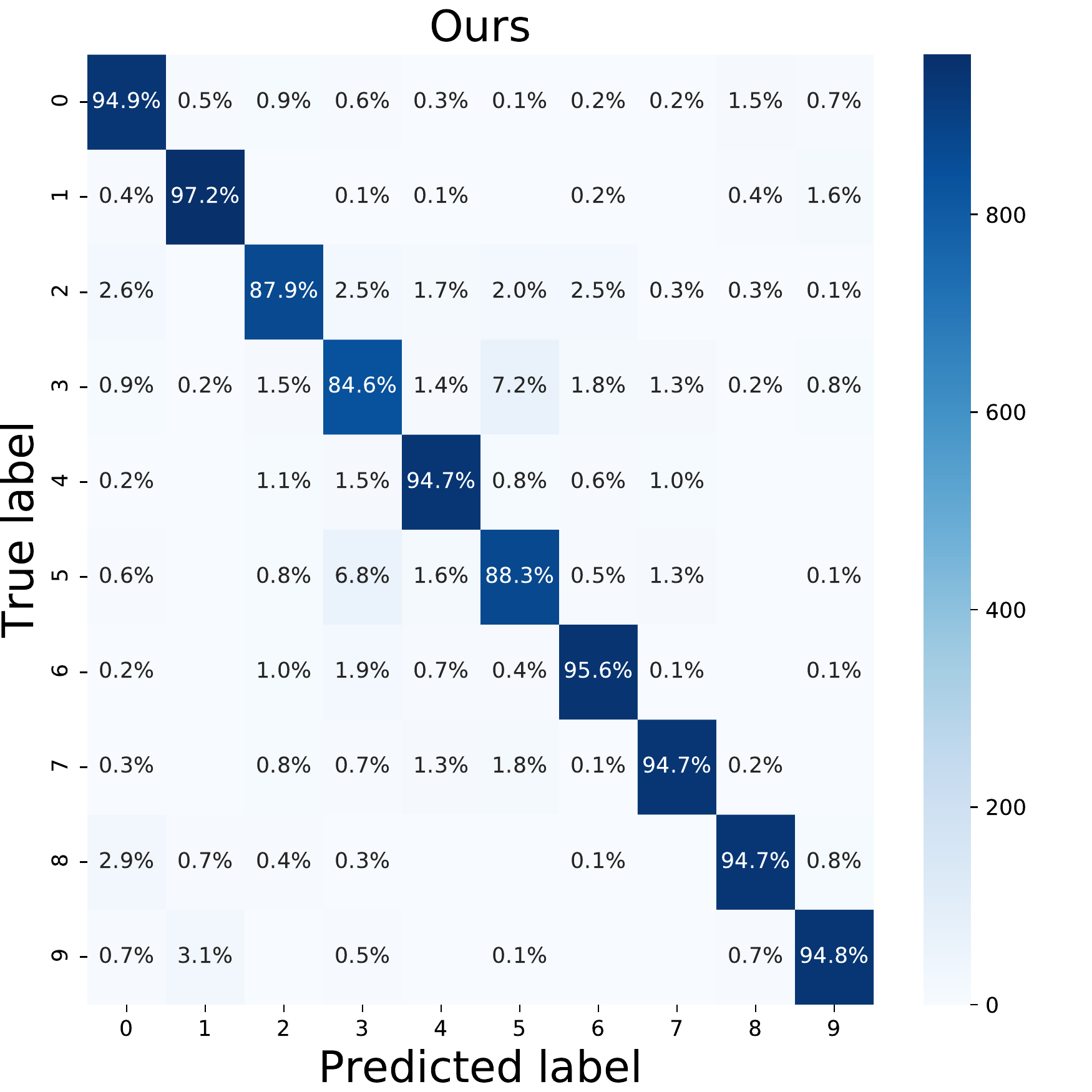}}
		\subfigure[10 for Baseline]{
			\includegraphics[width=0.23\textwidth]{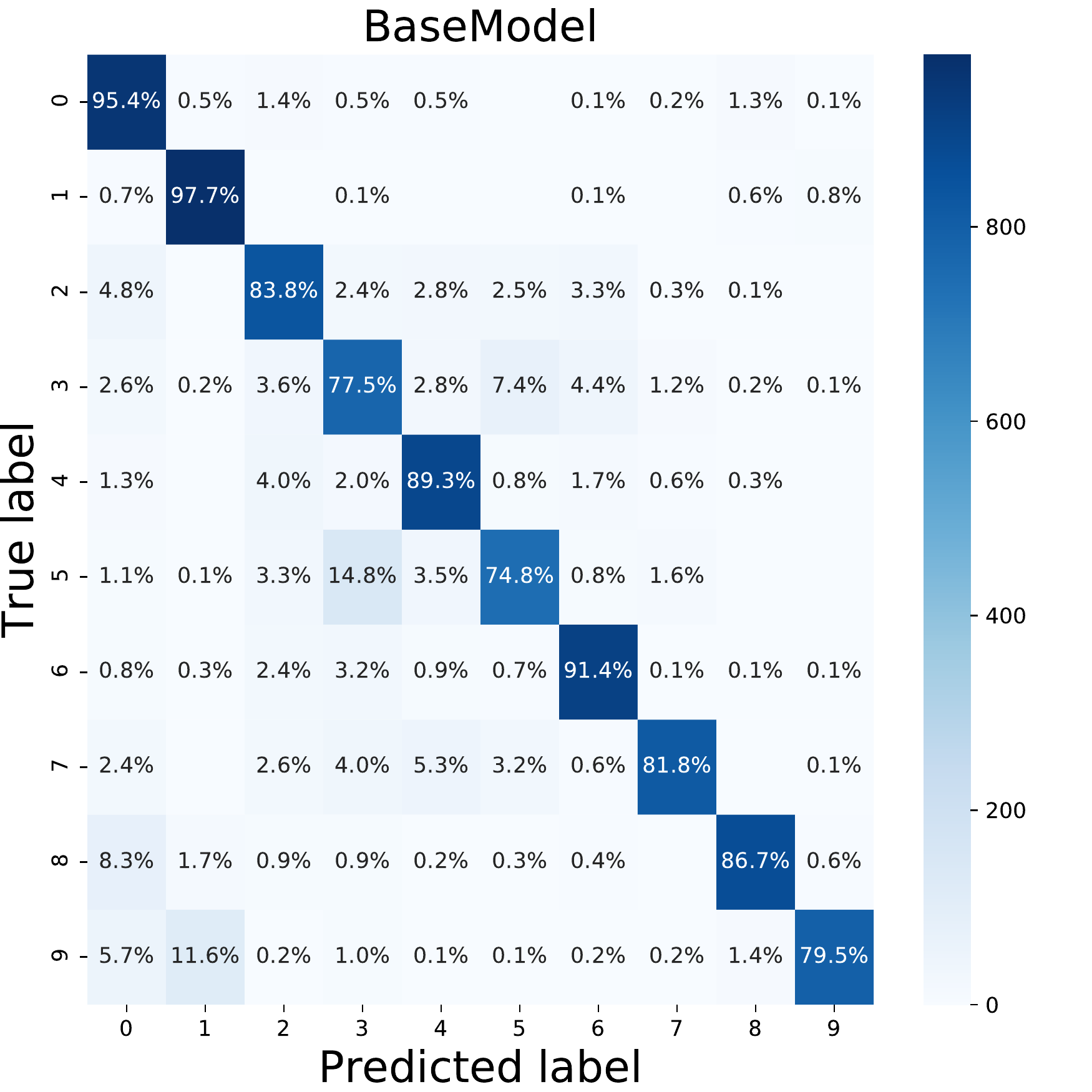}}
		\subfigure[10 for Ours]{
			\includegraphics[width=0.23\textwidth]{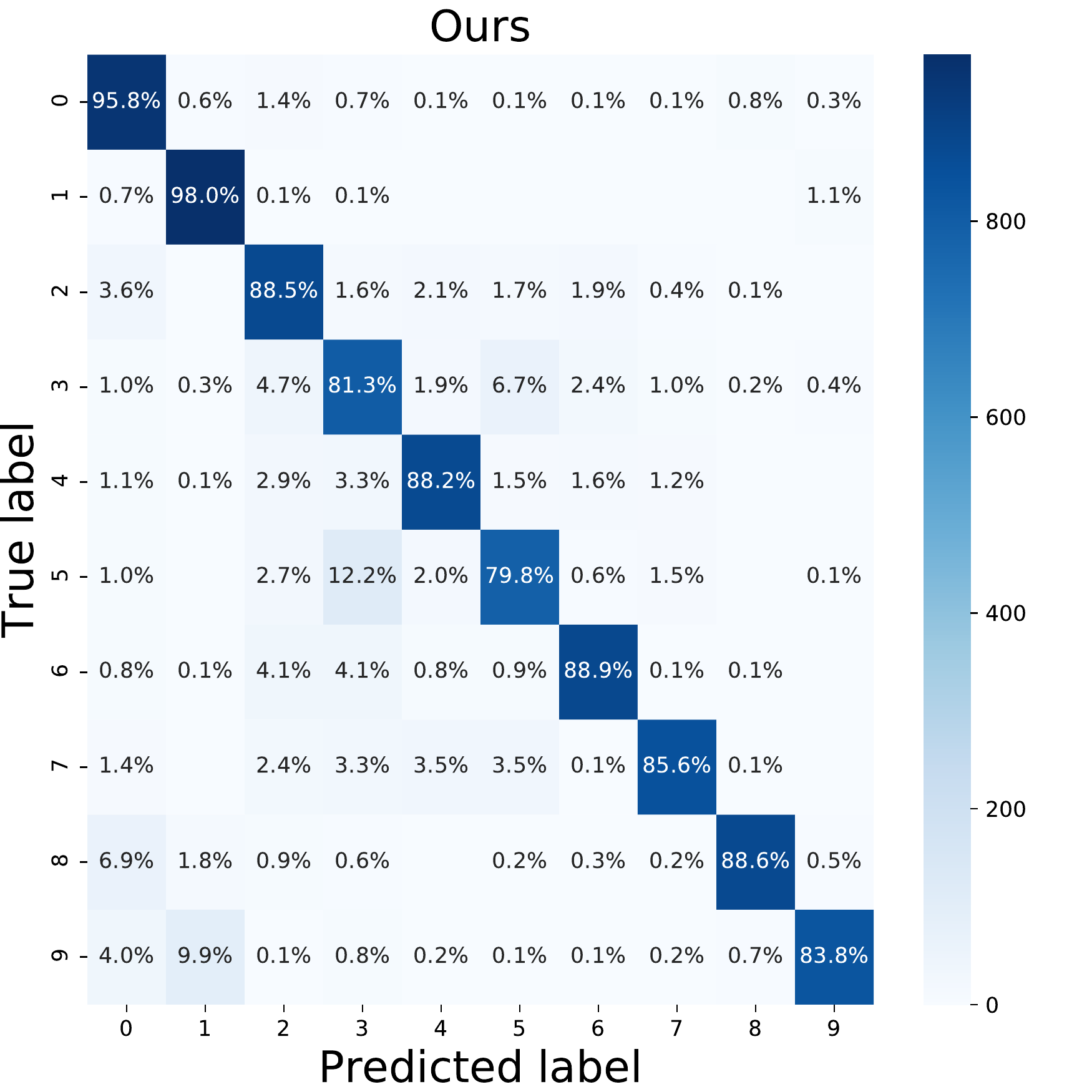}}\\\vspace{-0.4cm}
		\subfigure[20 for Baseline]{
			\includegraphics[width=0.23\textwidth]{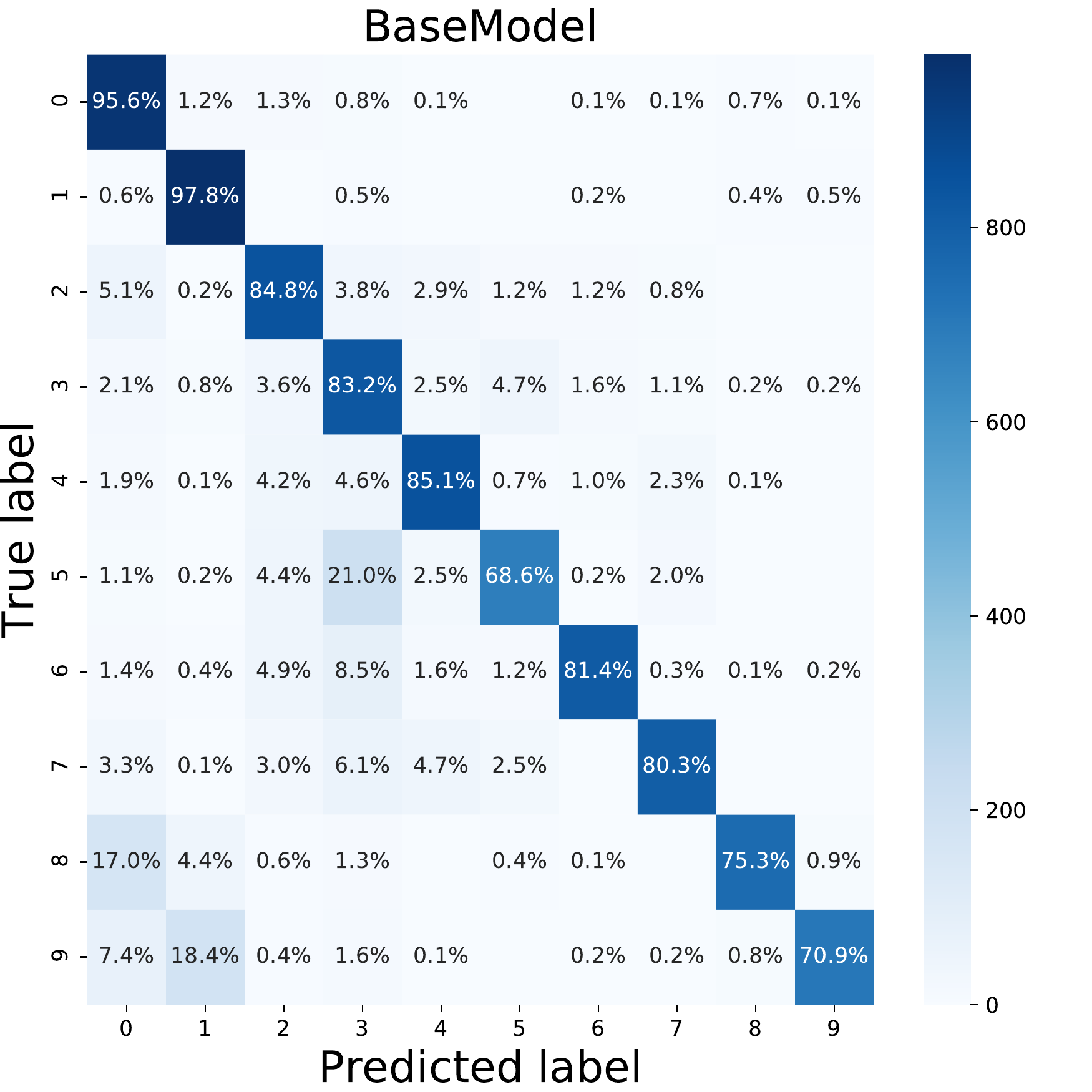}}
		\subfigure[20 for Ours]{
			\includegraphics[width=0.23\textwidth]{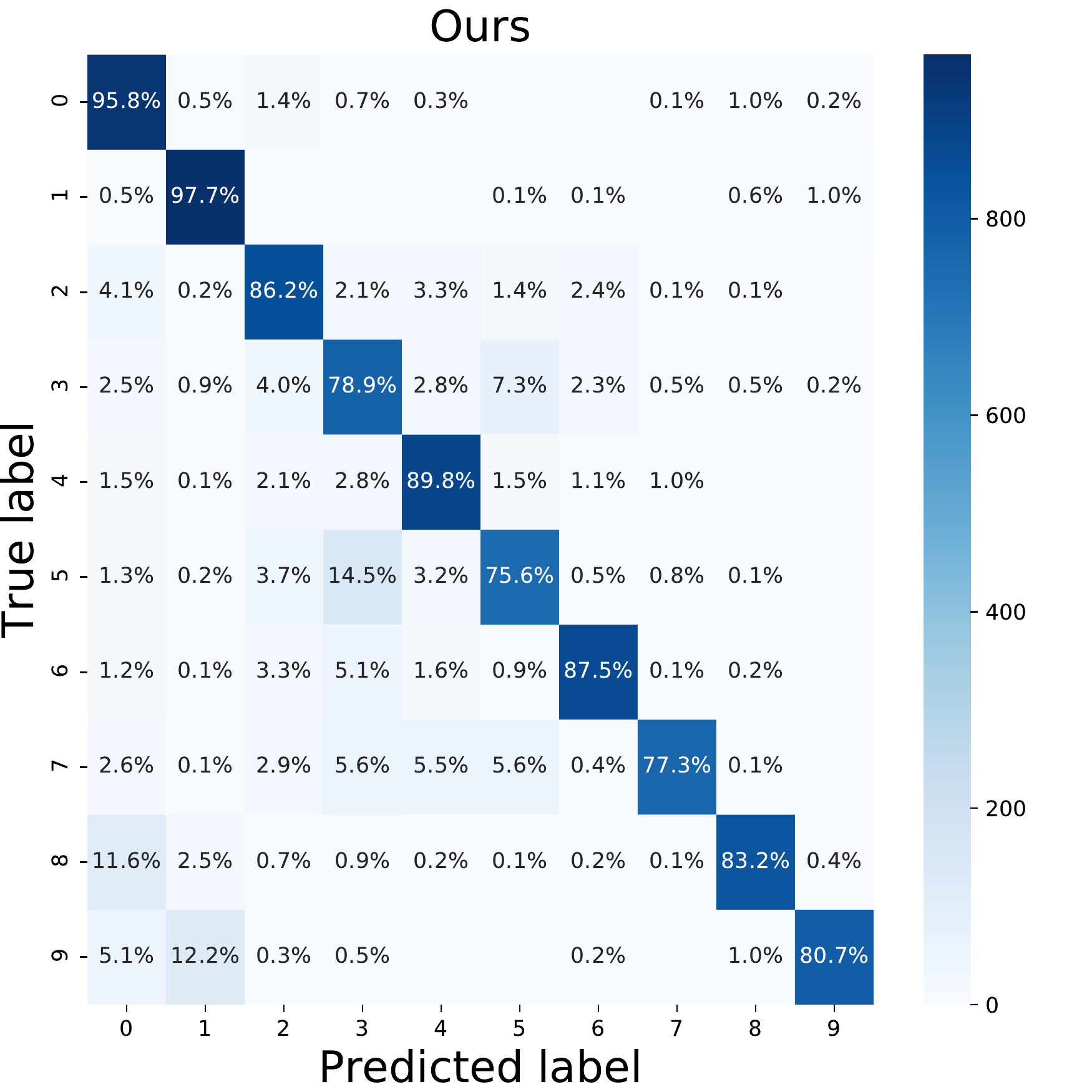}}
		\subfigure[50 for Baseline]{
			\includegraphics[width=0.23\textwidth]{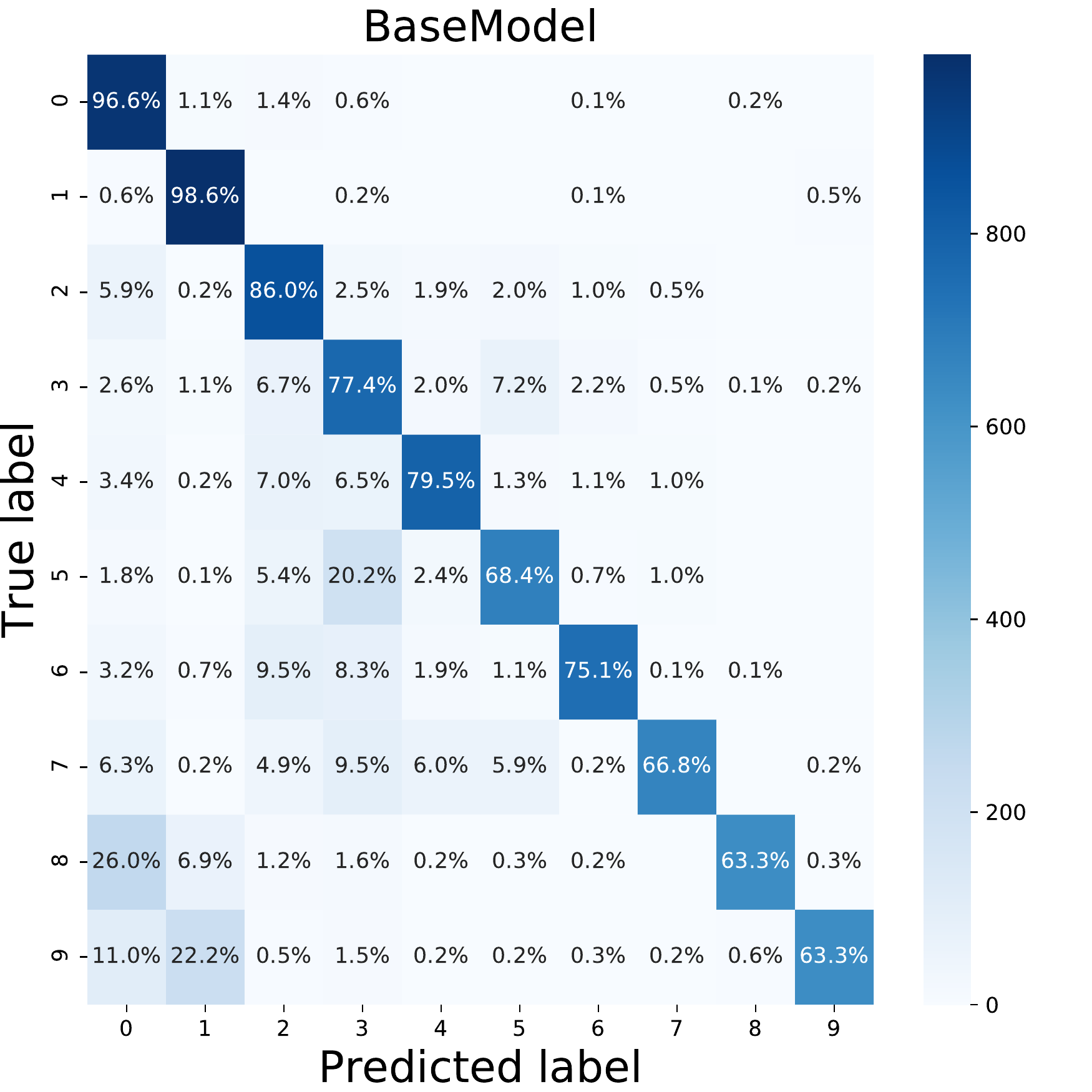}}
		\subfigure[50 for Ours]{
			\includegraphics[width=0.23\textwidth]{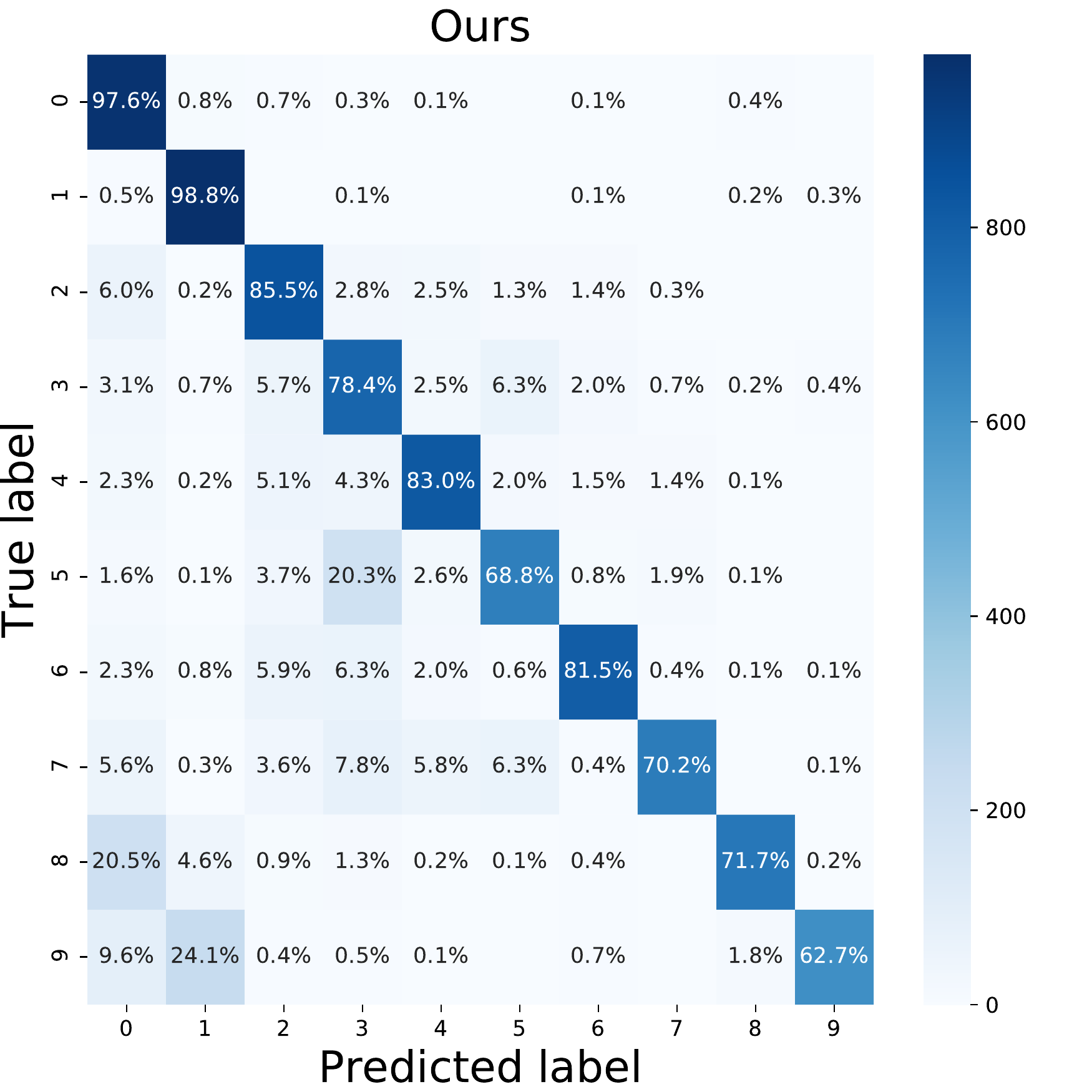}}\\\vspace{-0.4cm}
		\subfigure[100 for Baseline]{
			\includegraphics[width=0.23\textwidth]{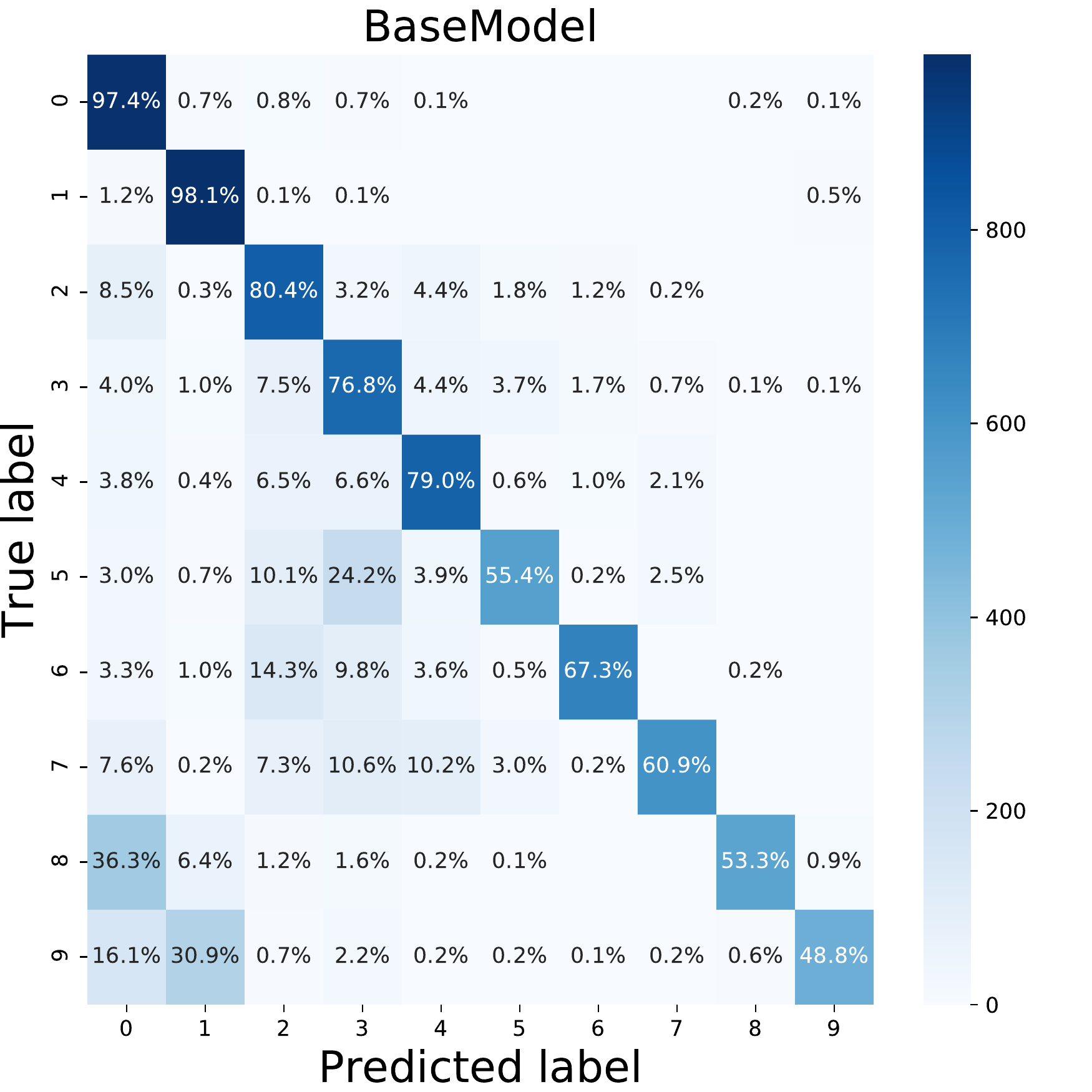}}
		\subfigure[100 for Ours]{
			\includegraphics[width=0.23\textwidth]{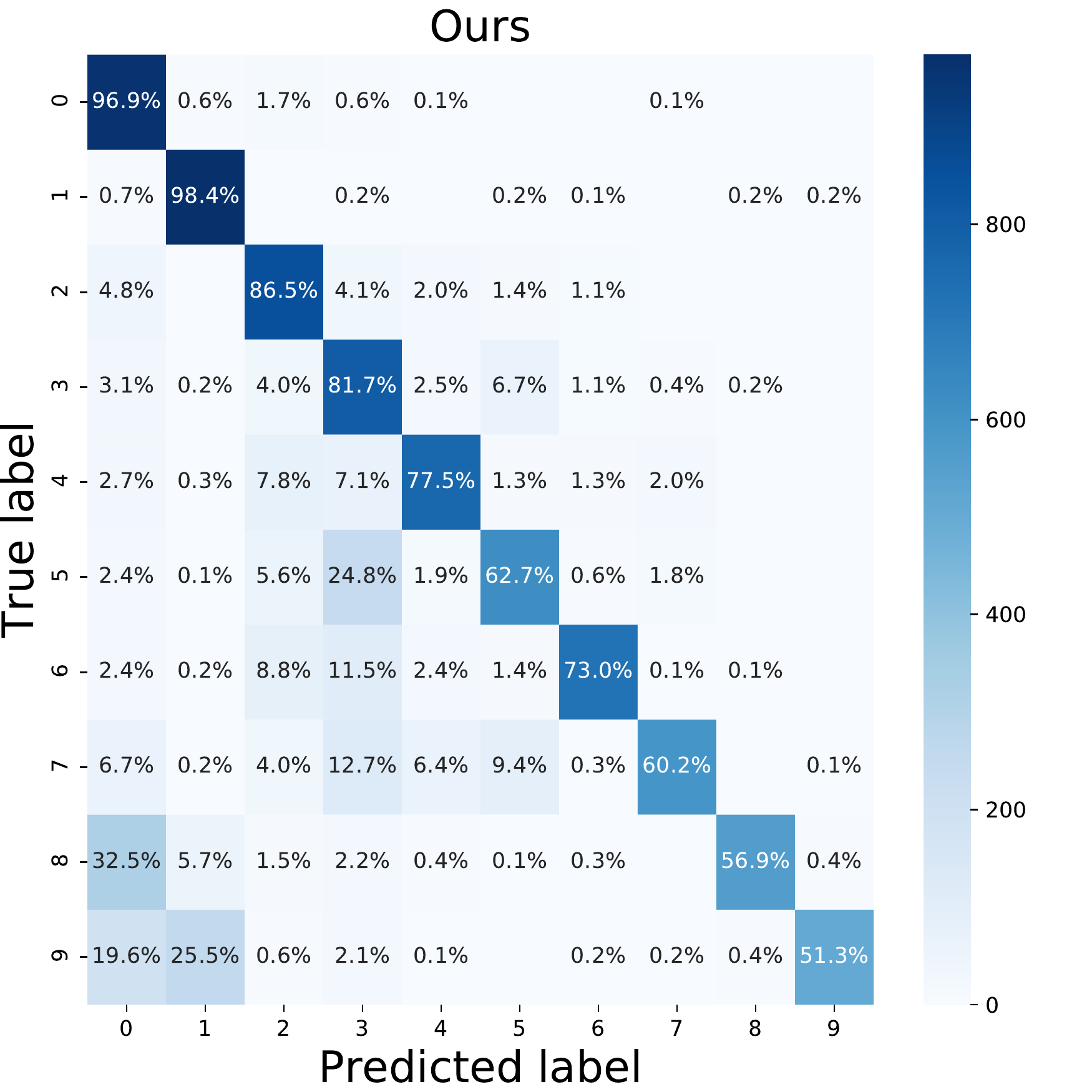}}
		\subfigure[200 for Baseline]{
			\includegraphics[width=0.23\textwidth]{Baseline_cifar10_200.pdf}}
		\subfigure[200 for Ours]{
			\includegraphics[width=0.23\textwidth]{ours_cifar10_200.pdf}}
		\vspace{0mm} \vspace{0mm}
		\caption{Confusion matrices on long-tailed CIFAR-10 with imbalance factors ranging from 1 to 200.} \vspace{7mm}\label{ff1}
		\label{f1} 
	\end{figure}
	\begin{figure}[h]
		\setlength{\abovecaptionskip}{0.cm}
		\setlength{\belowcaptionskip}{-1.cm}
		\centering
		\subfigure[40\% for Baseline]{
			\includegraphics[width=0.23\textwidth]{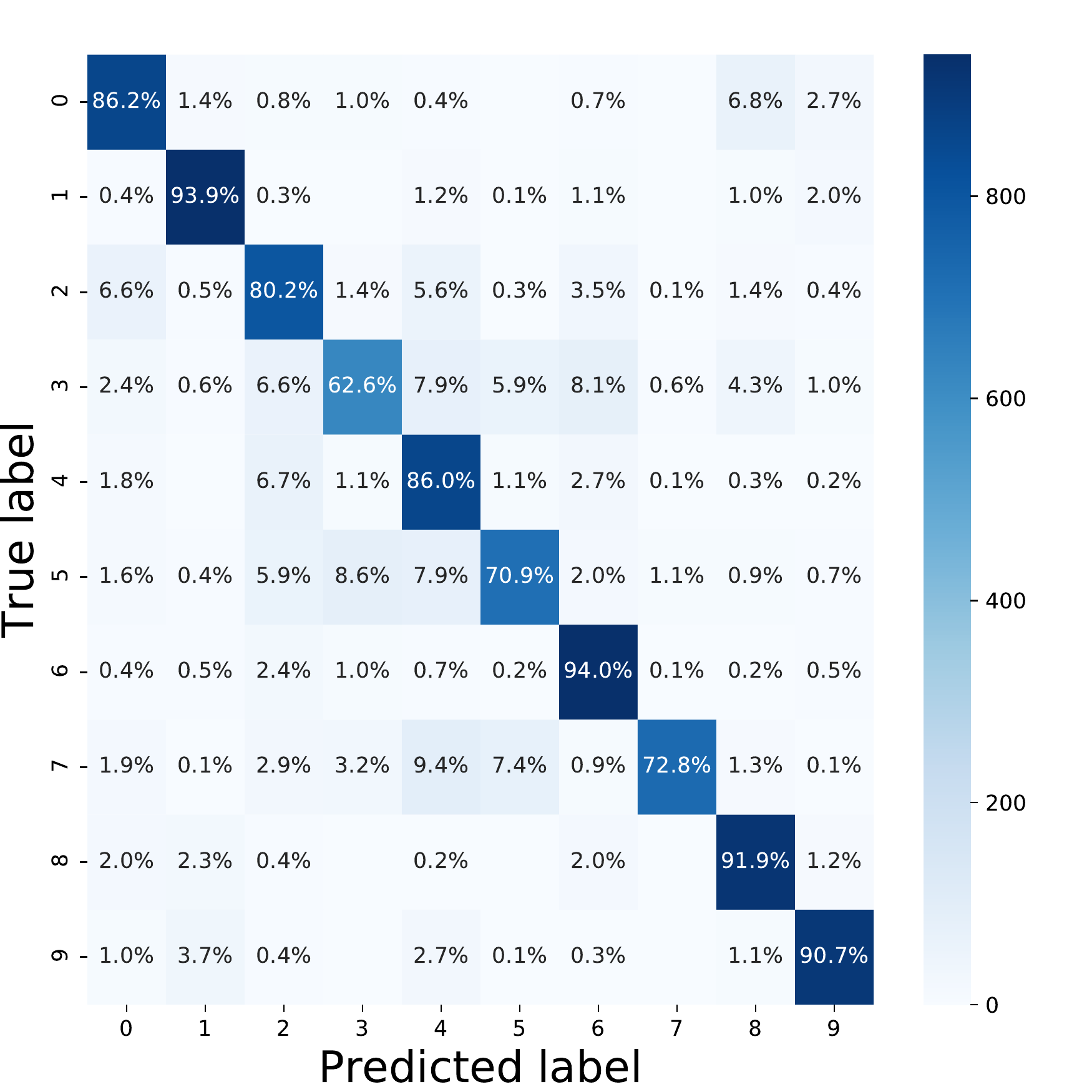}}
		\subfigure[40\% for Ours]{
			\includegraphics[width=0.23\textwidth]{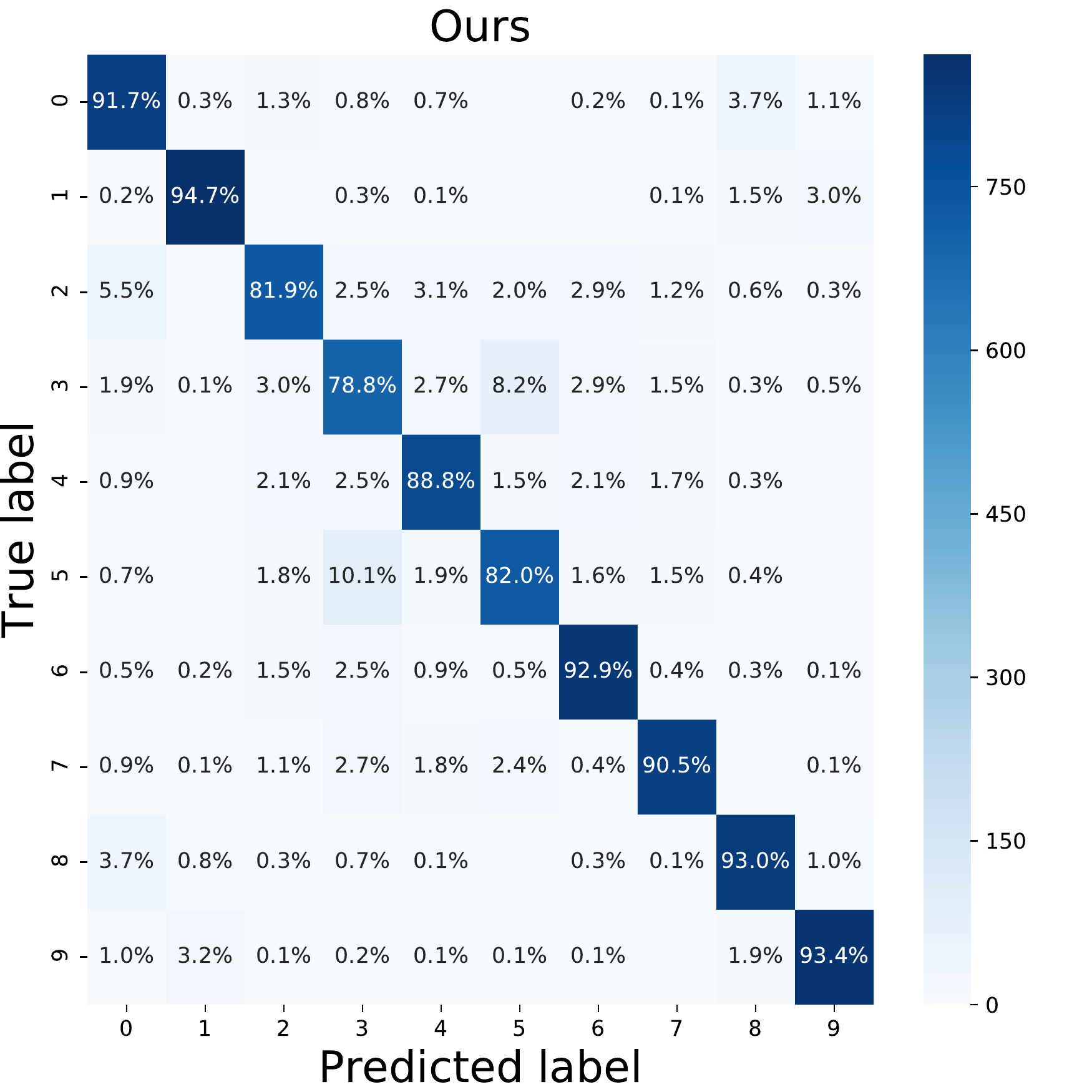}}
		\subfigure[60\% for Baseline]{
			\includegraphics[width=0.23\textwidth]{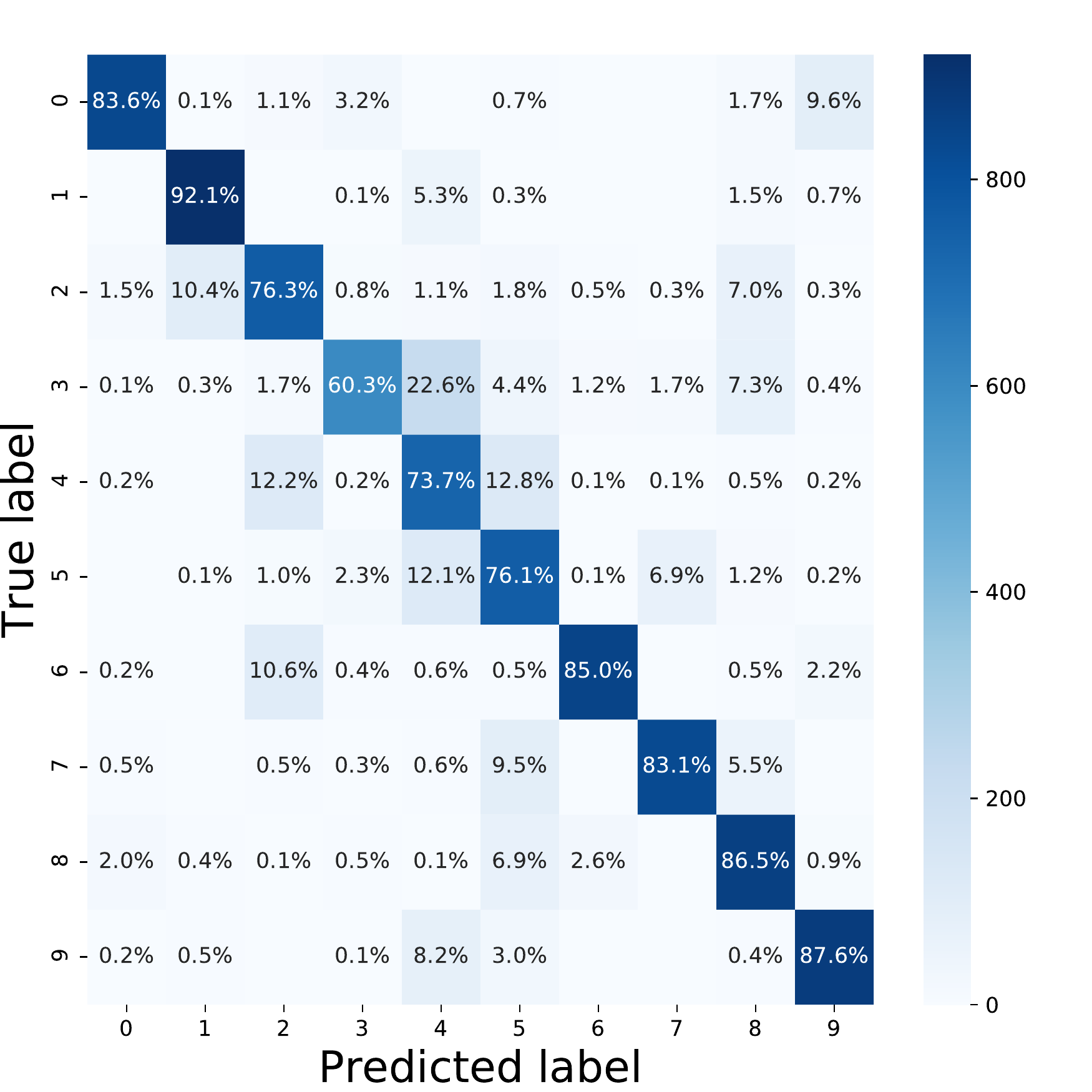}}
		\subfigure[60\% for Ours]{
			\includegraphics[width=0.23\textwidth]{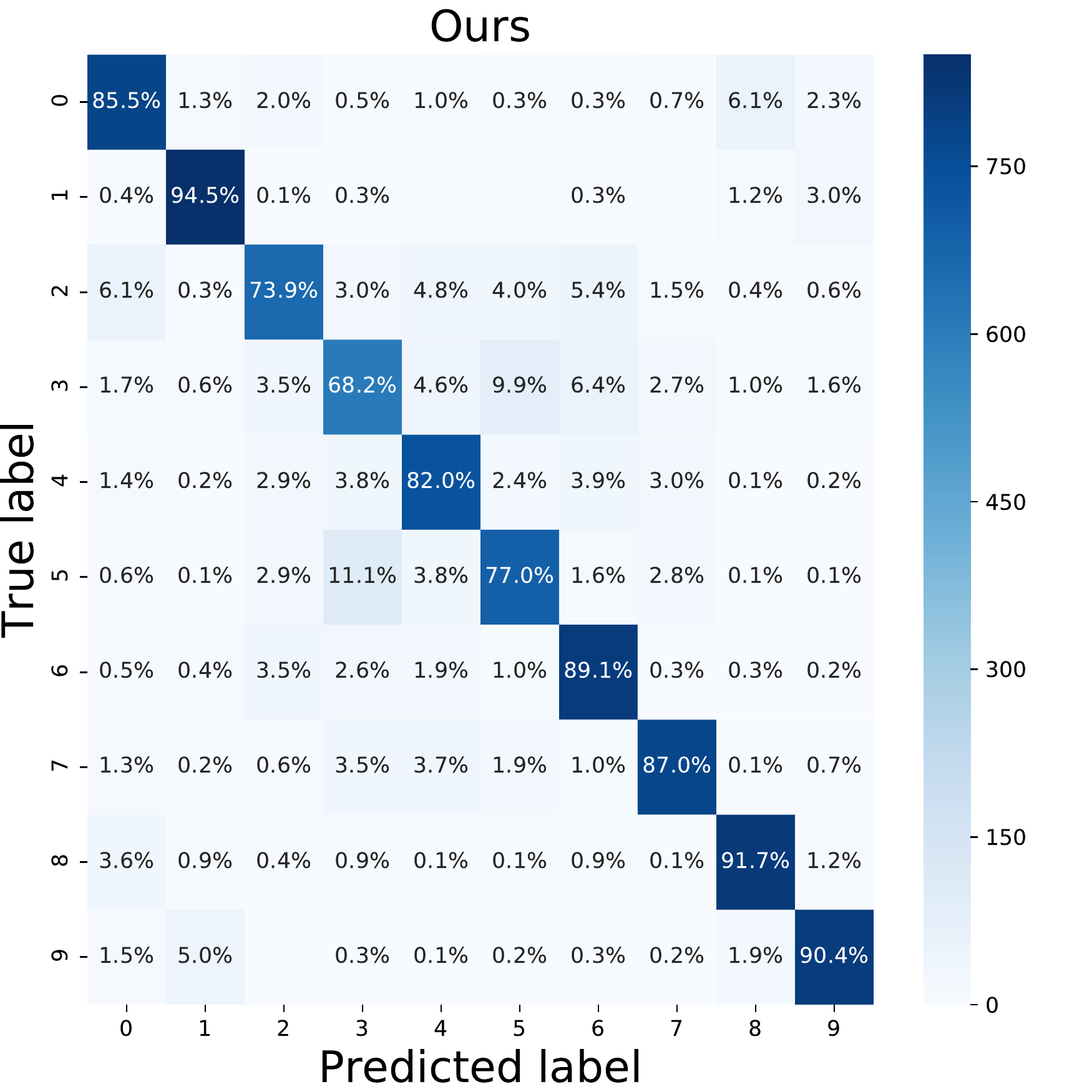}}
		\vspace{0mm} \vspace{0mm}
		\caption{Confusion matrices on CIFAR-10 dataset with varying noise rates under \textbf{uniform noise}.} \vspace{7mm}
		\label{f1} 
	\end{figure}
	
	\begin{figure}[htb]
		\setlength{\abovecaptionskip}{0.cm}
		\setlength{\belowcaptionskip}{-1.cm}
		\centering
		\subfigure[20\% for Baseline]{
			\includegraphics[width=0.23\textwidth]{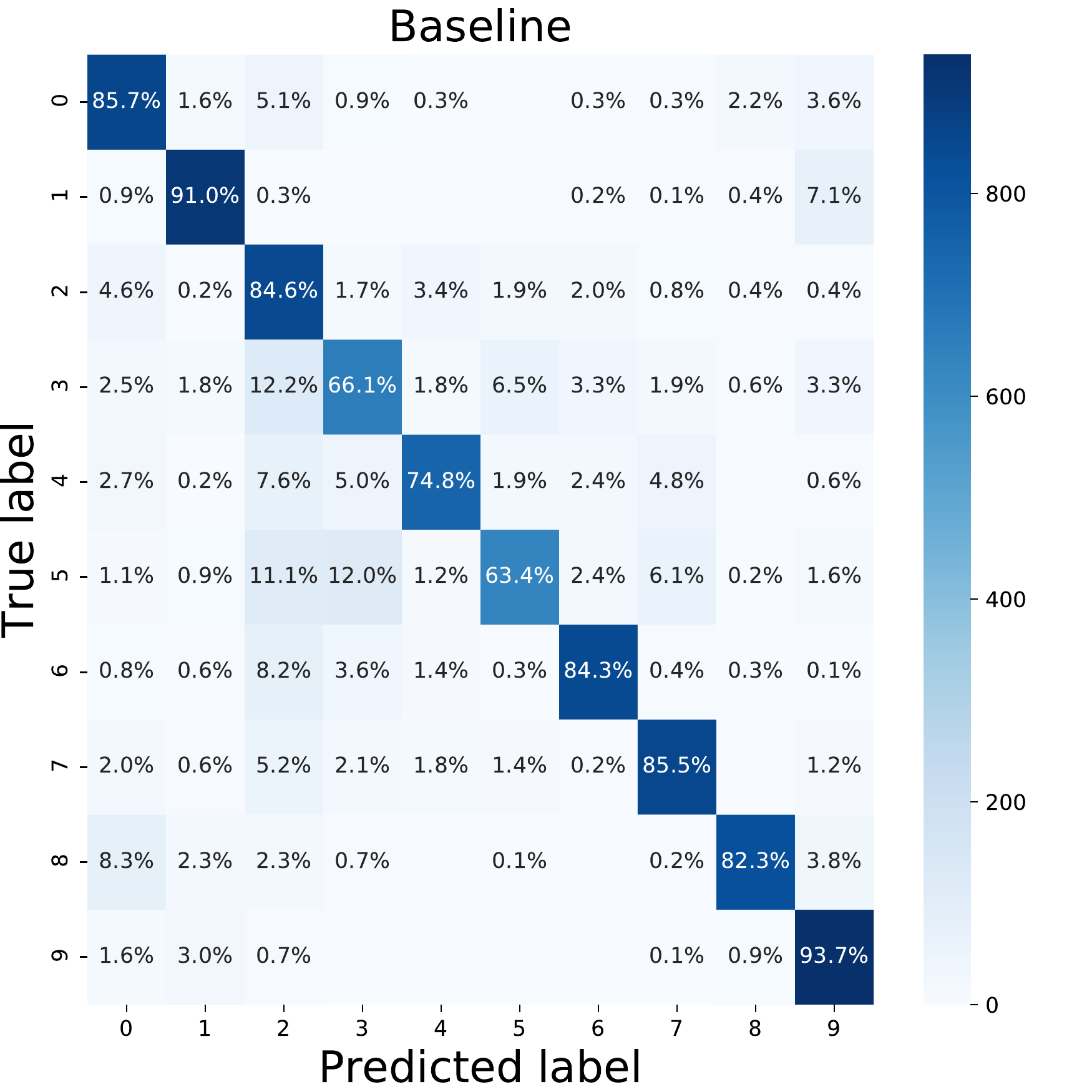}}
		\subfigure[20\% for Ours]{
			\includegraphics[width=0.23\textwidth]{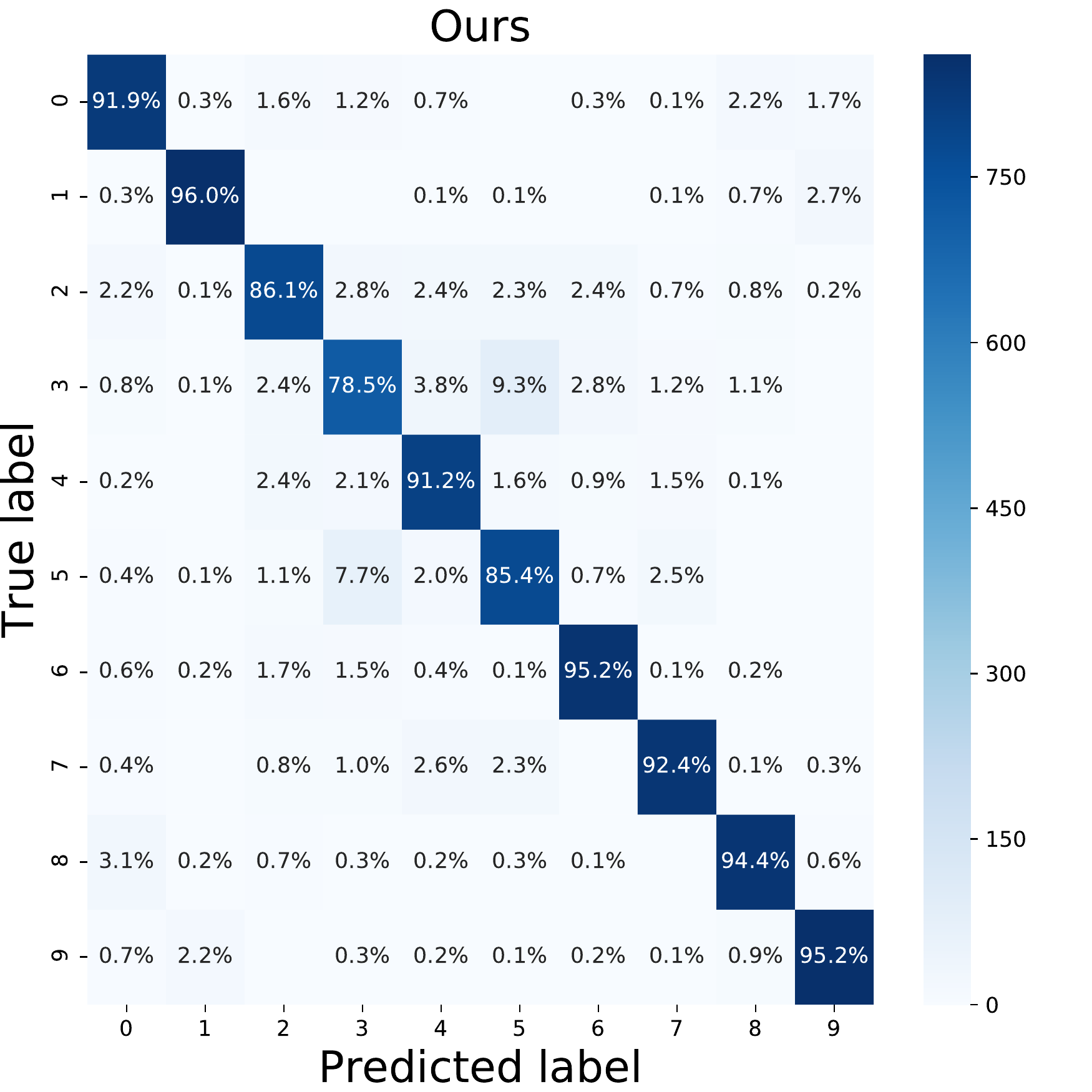}}
		\subfigure[40\% for Baseline]{
			\includegraphics[width=0.23\textwidth]{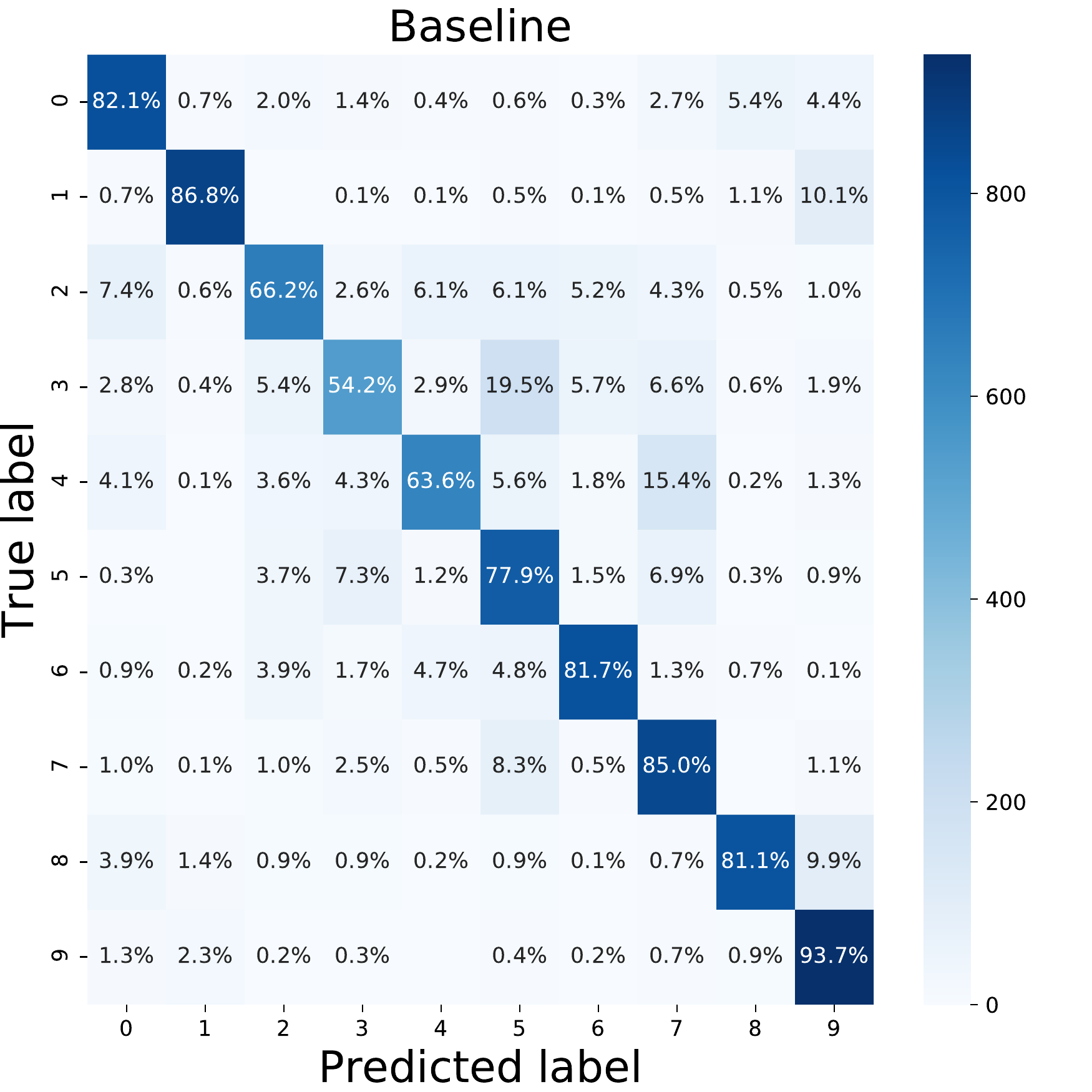}}
		\subfigure[40\% for Ours]{
			\includegraphics[width=0.23\textwidth]{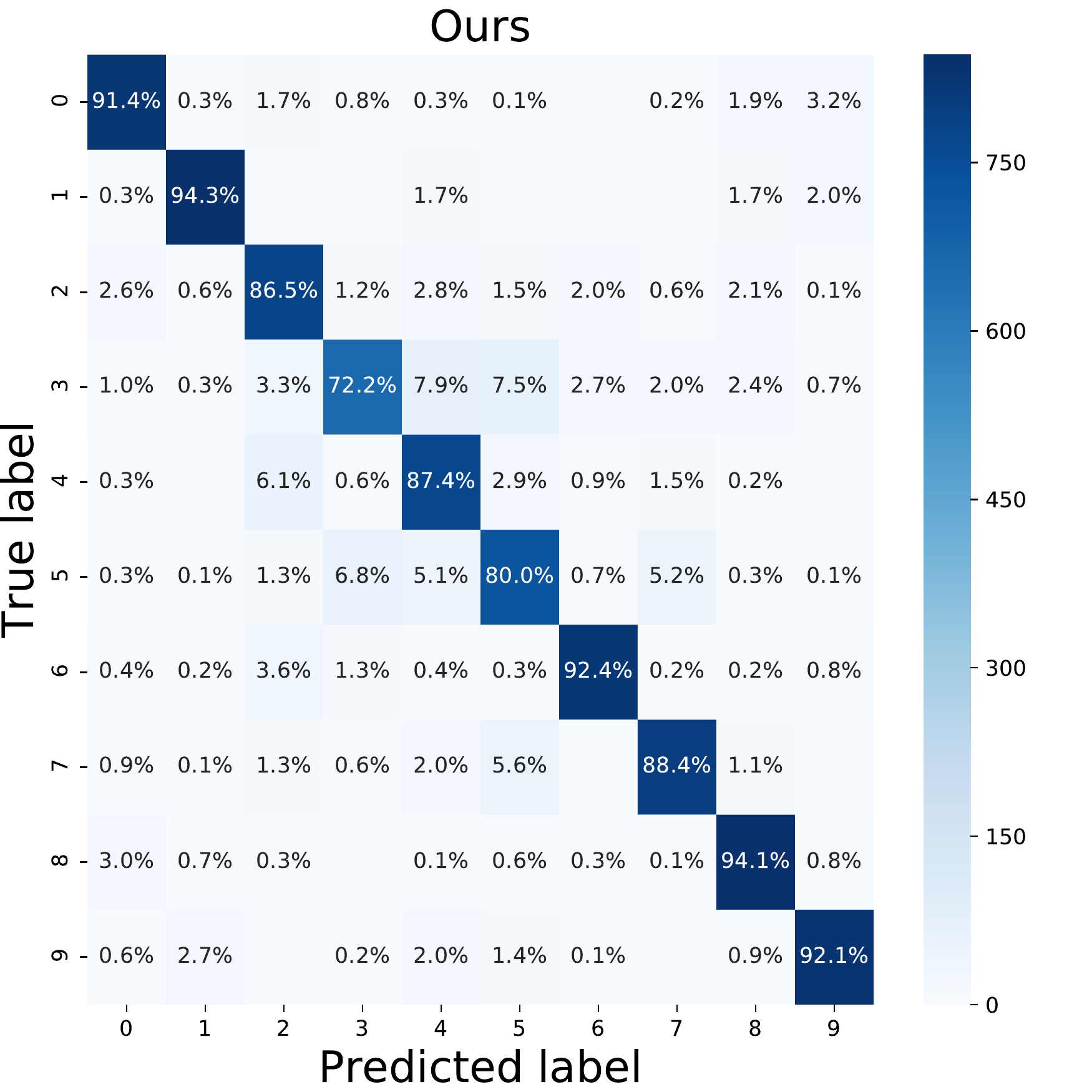}}
		\caption{Confusion matrices on CIFAR-10 dataset with varying noise rates under \textbf{flip noise}.}
		\label{f1} 
	\end{figure}

	
	\newpage
	\section{Convergence Verification for The Training loss and Meta Loss}
	To validate the convergence results obtained in Theorem 1 and 2 in the paper, we plot the changing tendency curves of training and meta losses with the number of epochs in our experiments, as shown in Fig. \ref{supp4}\ - \ref{supp6}. The convergence tendency can be easily observed in the figures, substantiating the properness of the theoretical results in two theorems.
	
	\begin{figure}[htb]
		\setlength{\abovecaptionskip}{0.cm}
		\setlength{\belowcaptionskip}{-1.cm}
		\centering
		\subfigure[1 for CIFAR-10]{
			\includegraphics[width=0.23\textwidth]{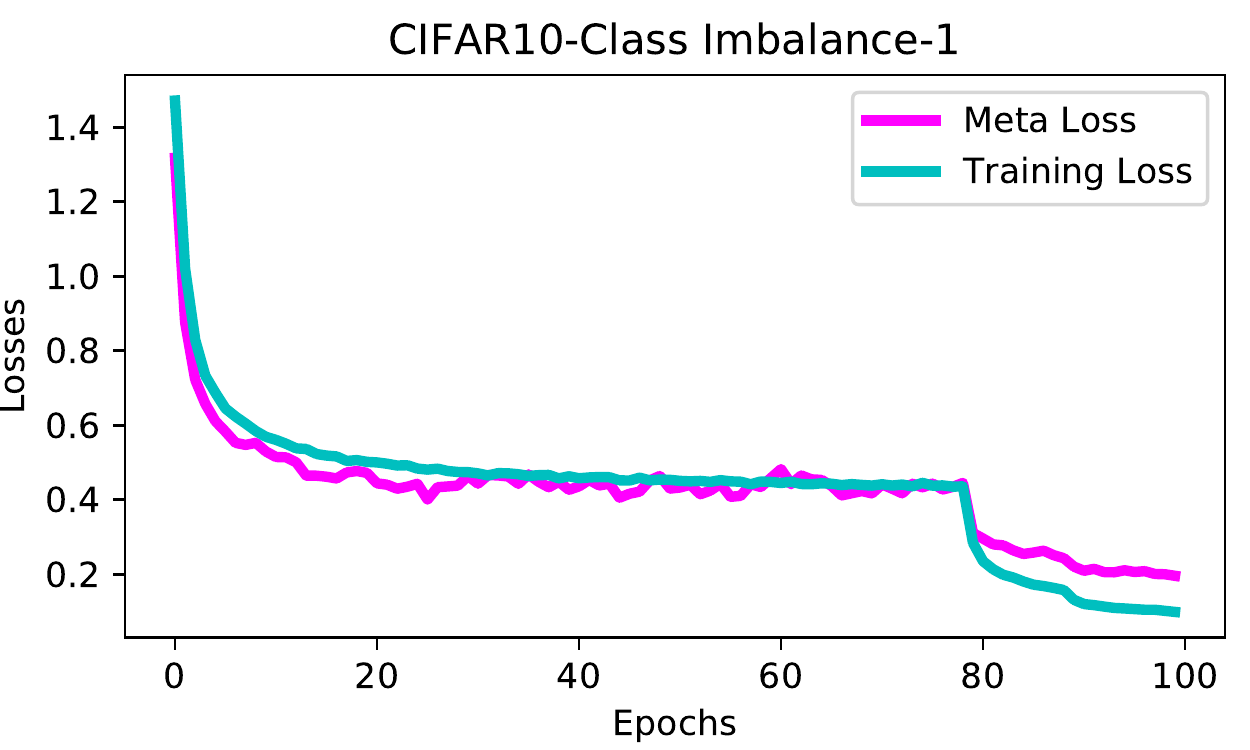}}
		\subfigure[10 for CIFAR-10]{
			\includegraphics[width=0.23\textwidth]{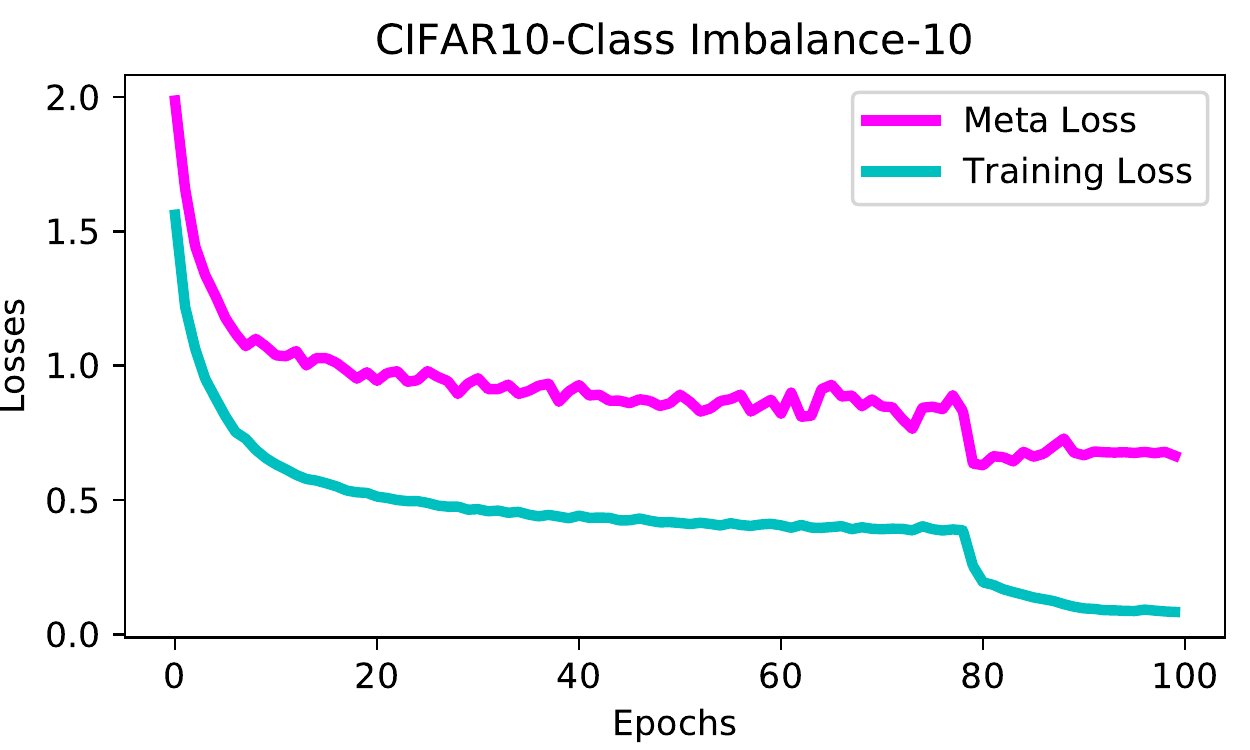}}
		\subfigure[20 for CIFAR-10]{
			\includegraphics[width=0.23\textwidth]{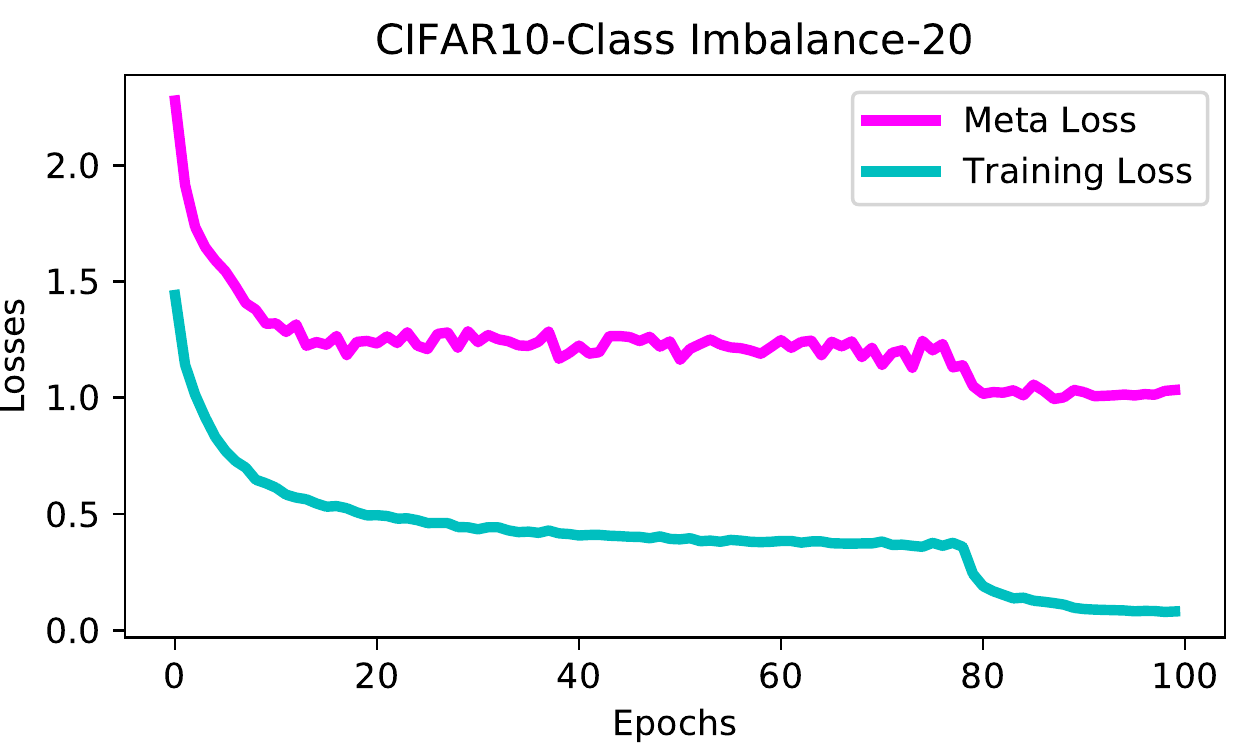}}
		\subfigure[50 for CIFAR-10]{
			\includegraphics[width=0.23\textwidth]{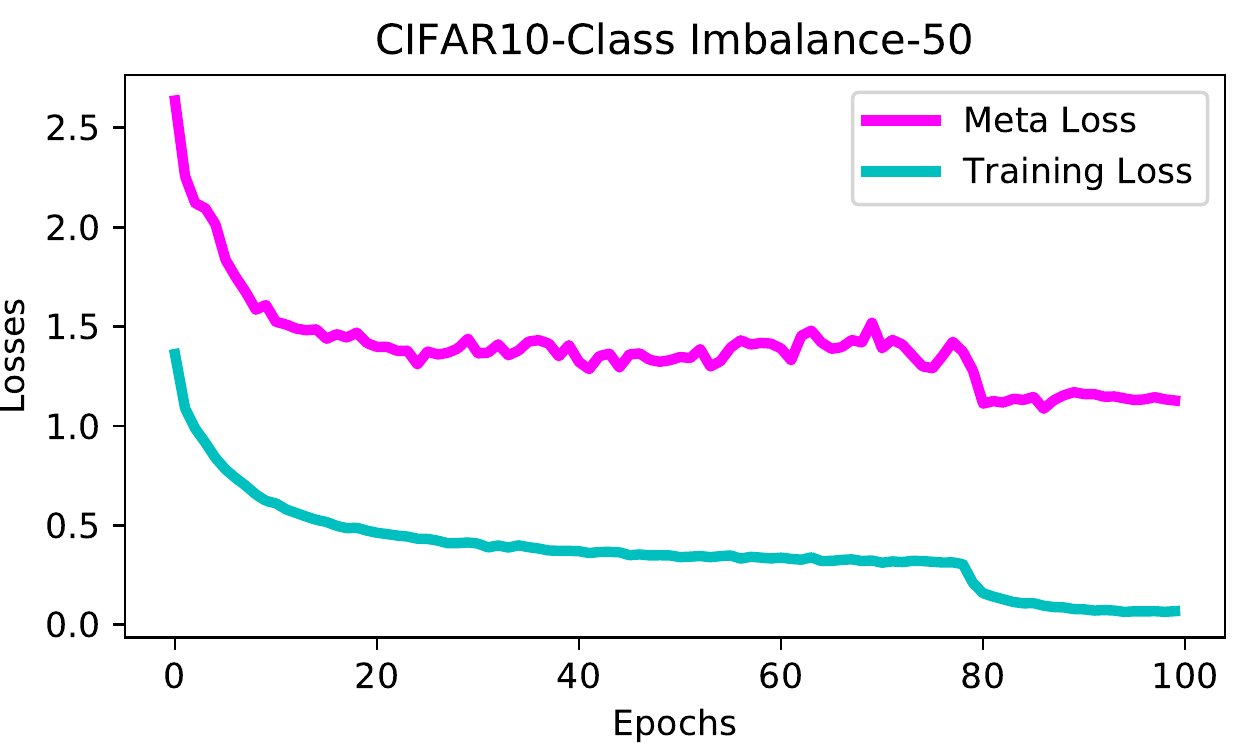}}\\
		\subfigure[100 for CIFAR-10]{
			\includegraphics[width=0.23\textwidth]{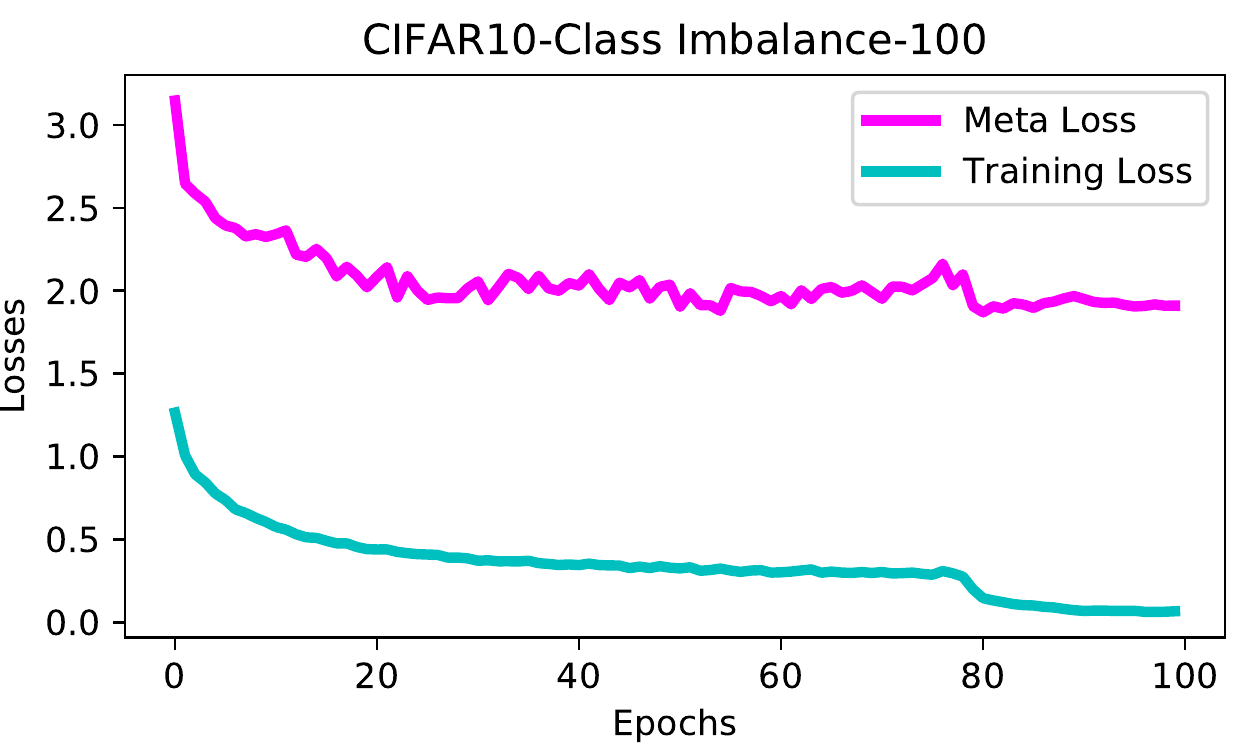}}
		\subfigure[200 for CIFAR-10]{
			\includegraphics[width=0.23\textwidth]{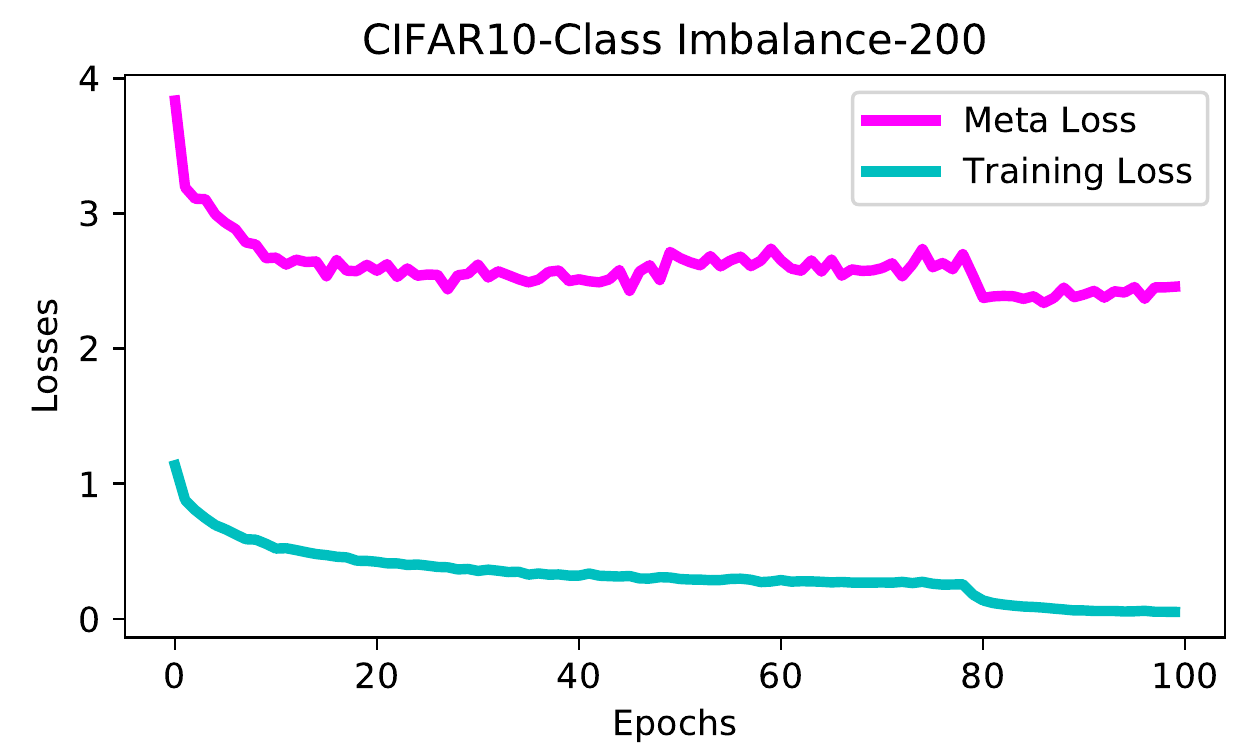}}
		\subfigure[1 for CIFAR-100]{
			\includegraphics[width=0.23\textwidth]{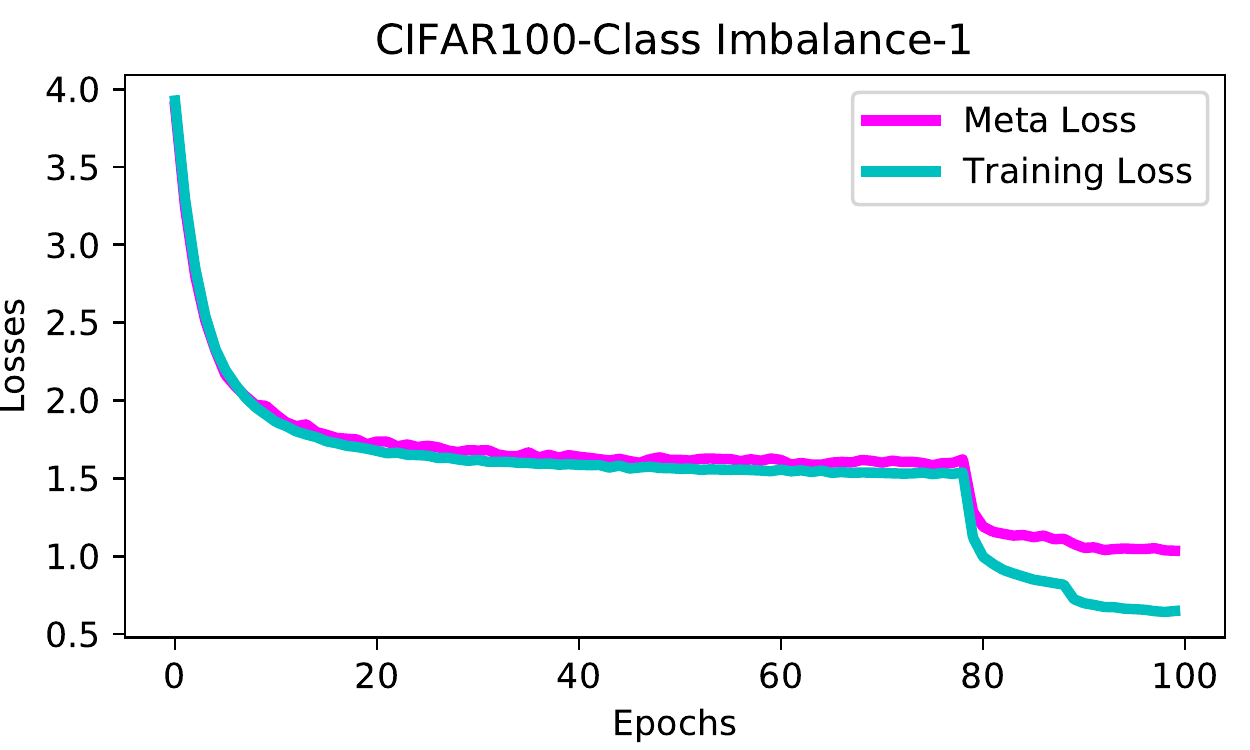}}
		\subfigure[10 for CIFAR-100]{
			\includegraphics[width=0.23\textwidth]{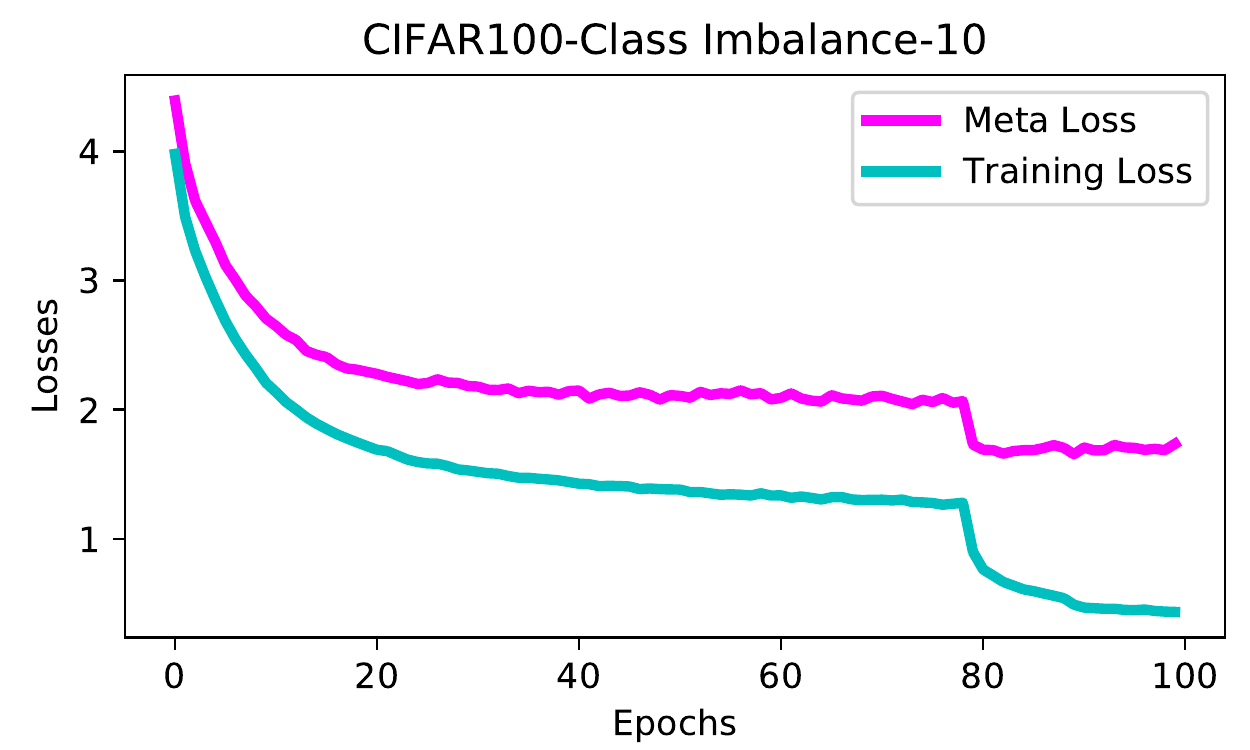}}\\
		\subfigure[20 for CIFAR-100]{
			\includegraphics[width=0.23\textwidth]{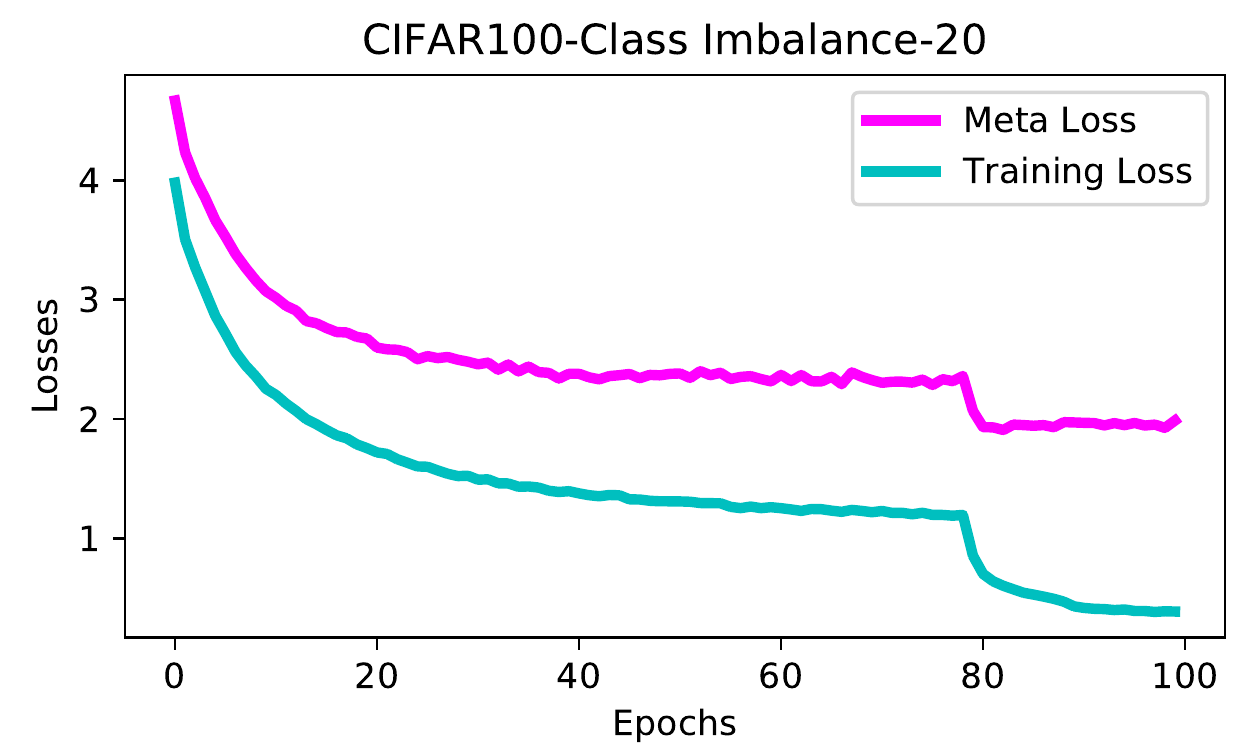}}
		\subfigure[50 for CIFAR-100]{
			\includegraphics[width=0.23\textwidth]{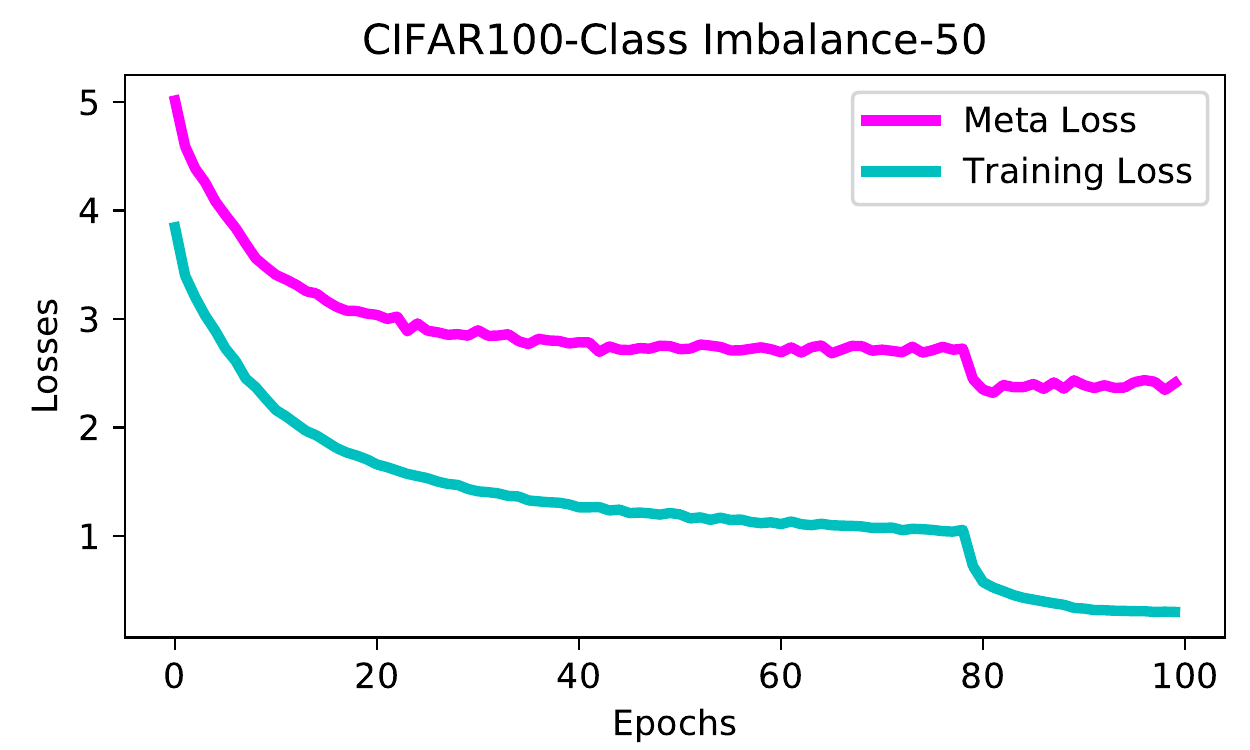}}
		\subfigure[100 for CIFAR-100]{
			\includegraphics[width=0.23\textwidth]{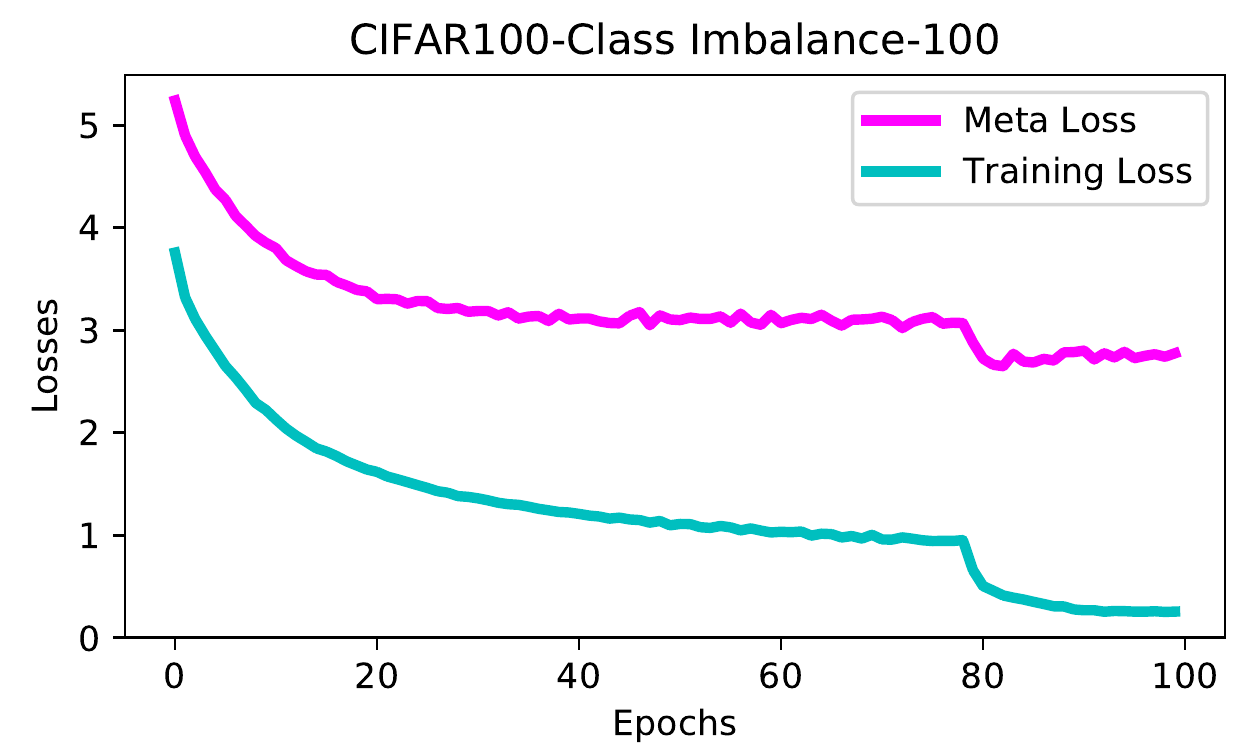}}
		\subfigure[200 for CIFAR-100]{
			\includegraphics[width=0.23\textwidth]{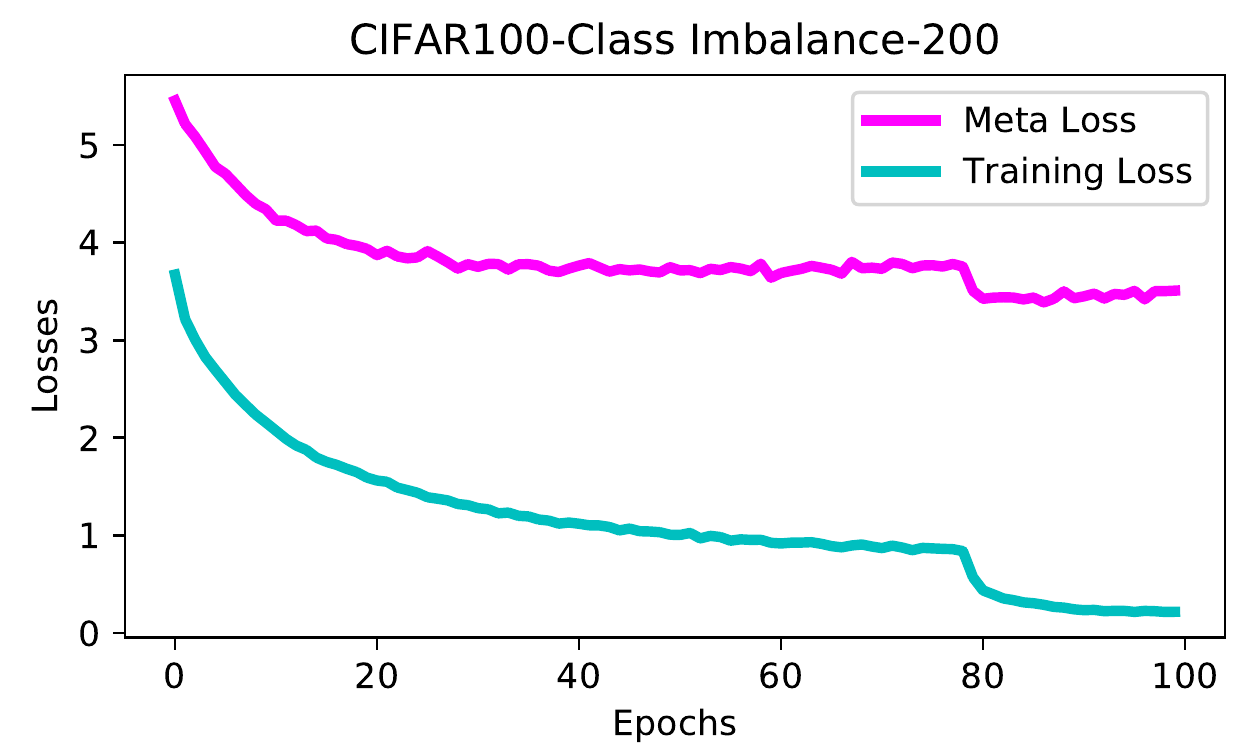}}
		\caption{Training and meta loss tendency curves on long-tailed CIFAR with imbalance factors ranging from 1 to 200.}\vspace{8mm}
		\label{supp4} 
	\end{figure}

	\begin{figure}[htb]
		\setlength{\abovecaptionskip}{0.cm}
		\setlength{\belowcaptionskip}{-1.cm}
		\centering
		\subfigure[CIFAR-10 40\% ]{
			\label{fig:subfig:a} 
			\includegraphics[width=0.23\textwidth]{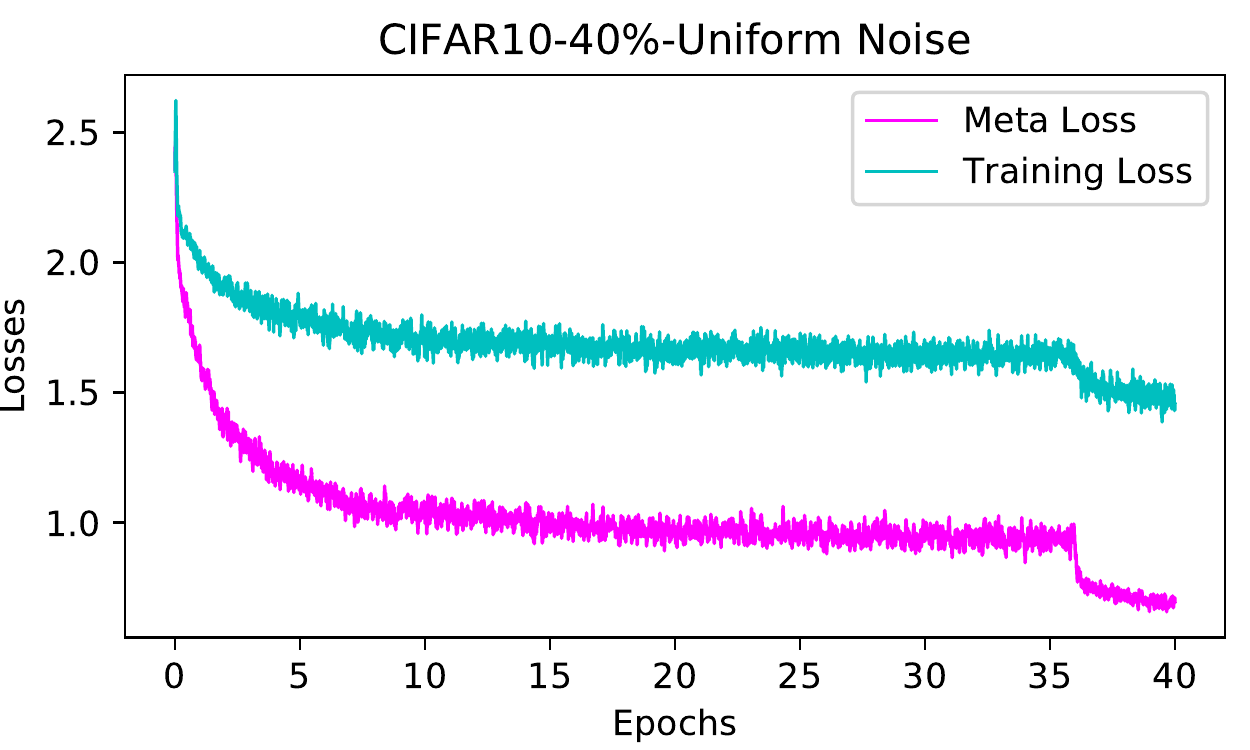}}
		\subfigure[CIFAR-10 60\% ]{
			\label{fig:subfig:b} 
			\includegraphics[width=0.23\textwidth]{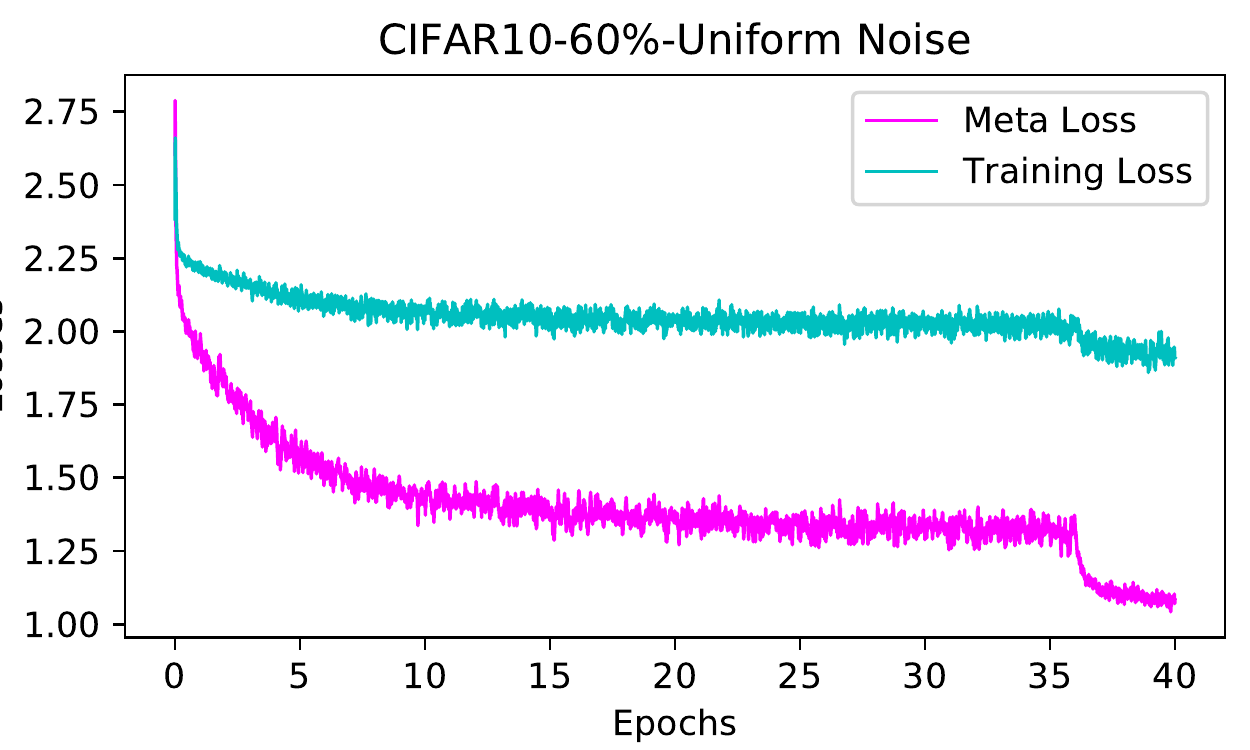}}
		\subfigure[CIFAR-100 40\% ]{
			\label{fig:subfig:a} 
			\includegraphics[width=0.23\textwidth]{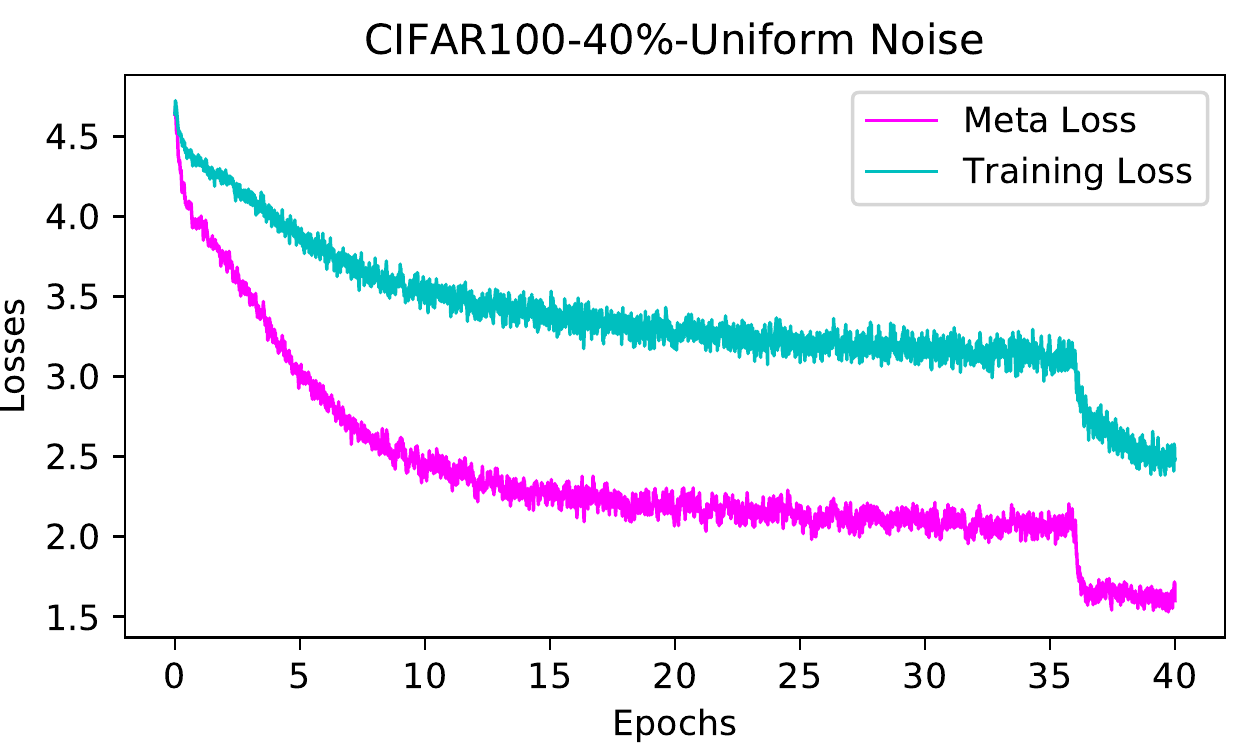}}
		\subfigure[CIFAR-100 60\%]{
			\label{fig:subfig:b} 
			\includegraphics[width=0.23\textwidth]{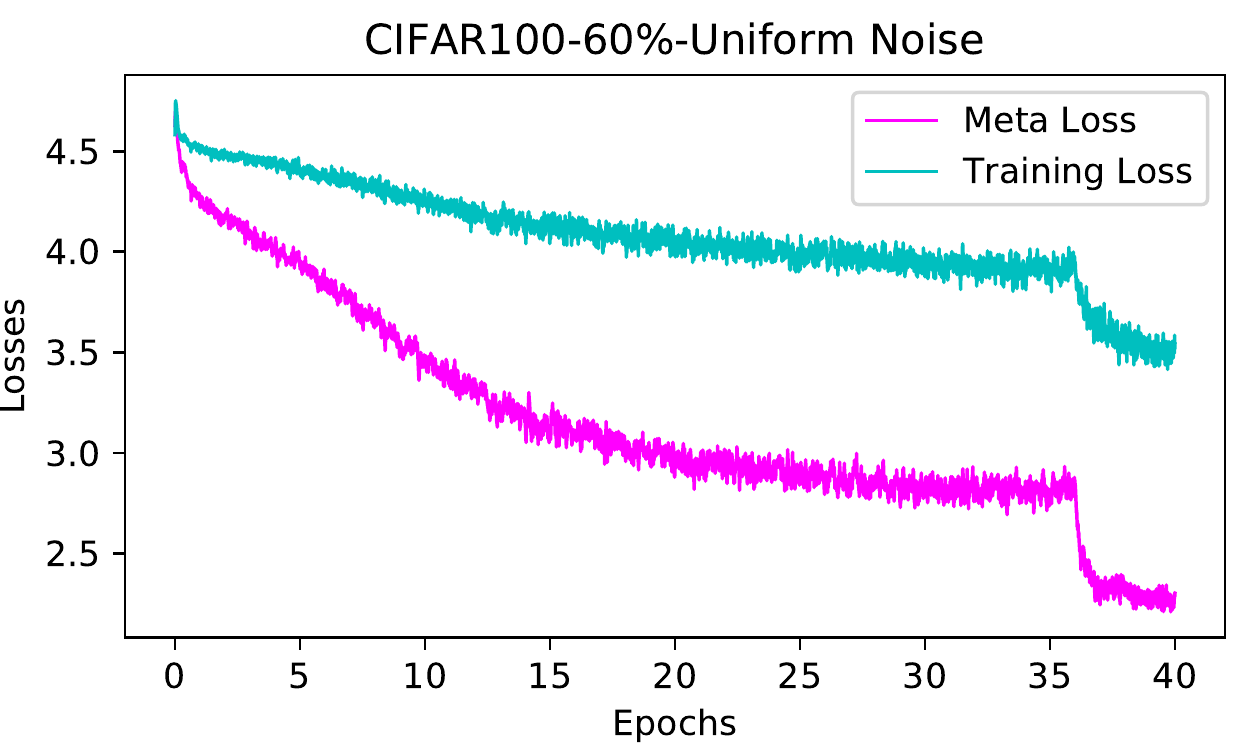}}
		\caption{Training and meta loss tendency curves on CIFAR dataset with varying noise rates under \textbf{uniform noise}.}\vspace{8mm}
		\label{supp5} 
	\end{figure}

	\begin{figure}[htb]
		\setlength{\abovecaptionskip}{0.cm}
		\setlength{\belowcaptionskip}{-1.cm}
		\centering
		\subfigure[CIFAR-10 20\% ]{
			\label{fig:subfig:a} 
			\includegraphics[width=0.23\textwidth]{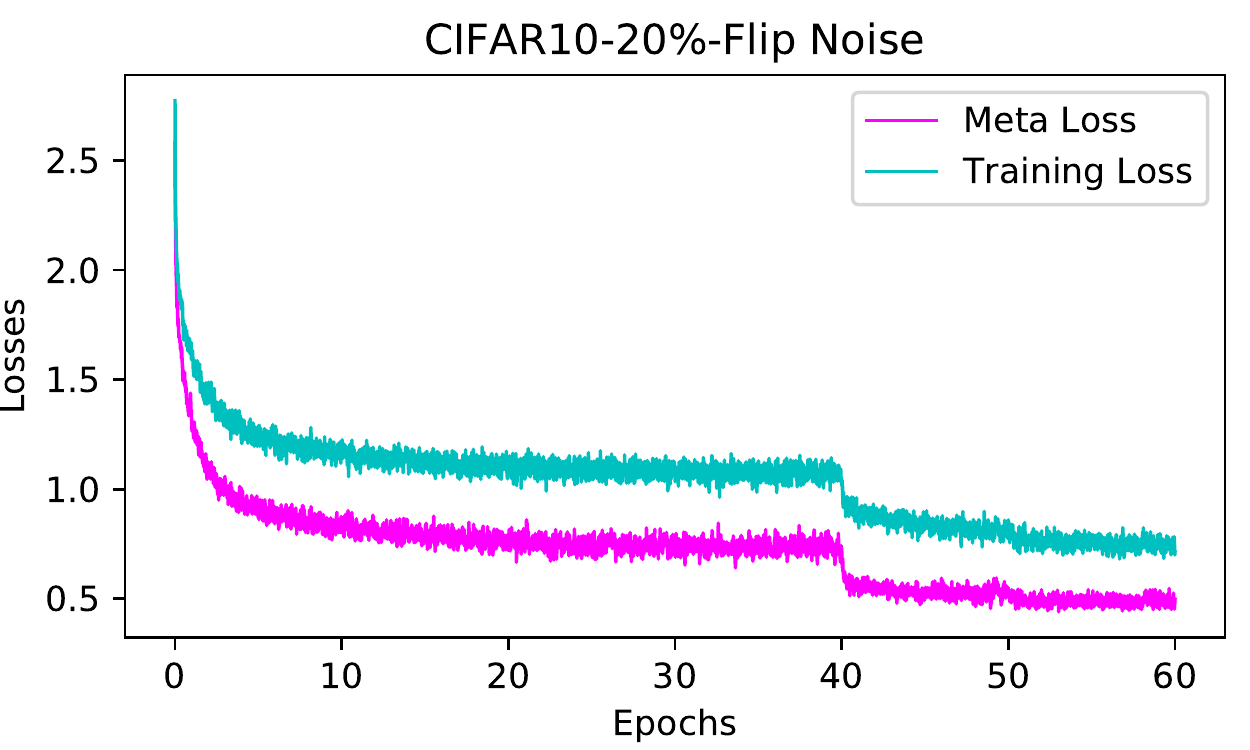}}
		\subfigure[CIFAR-10 40\% ]{
			\label{fig:subfig:b} 
			\includegraphics[width=0.23\textwidth]{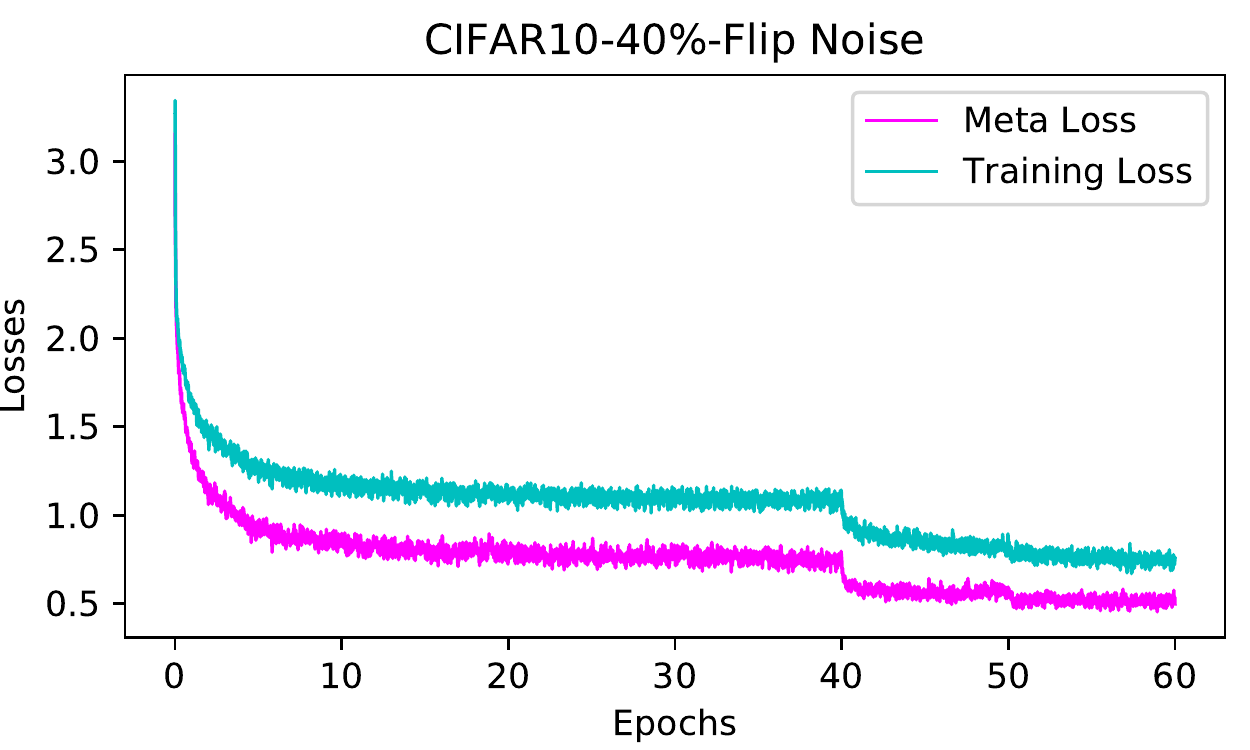}}
		\subfigure[CIFAR-100 20\% ]{
			\label{fig:subfig:a} 
			\includegraphics[width=0.23\textwidth]{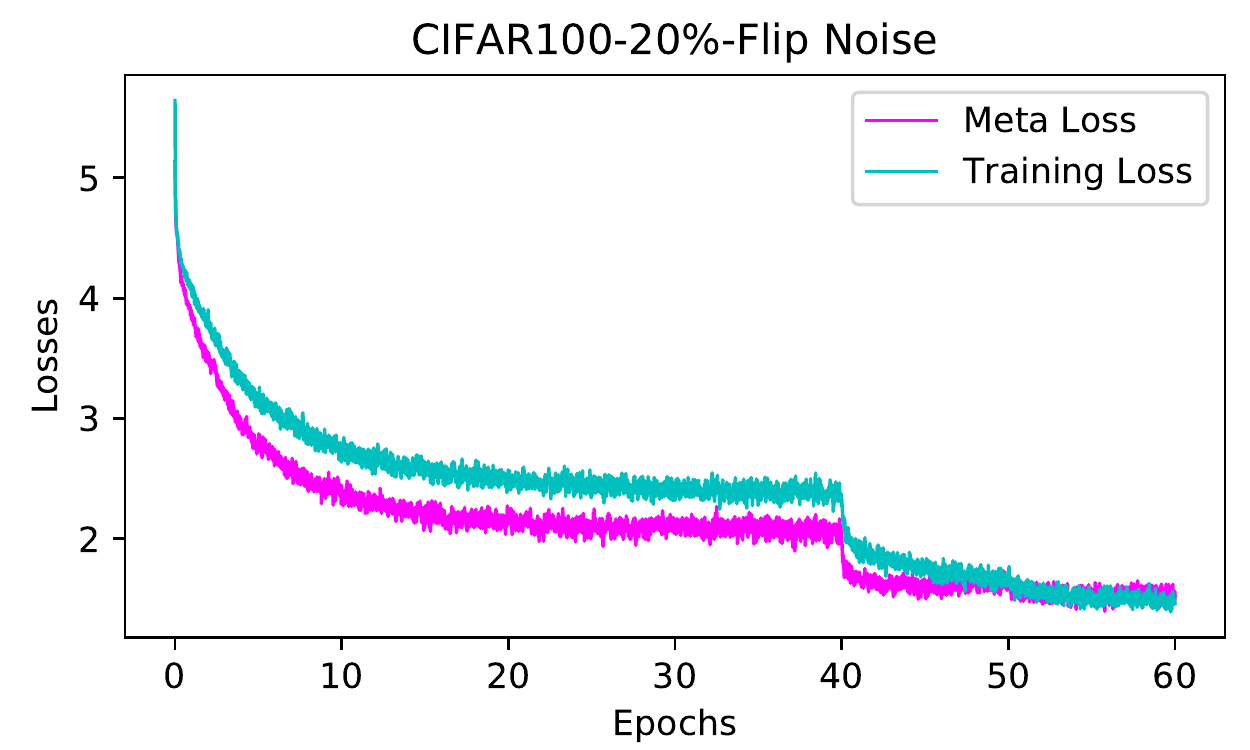}}
		\subfigure[CIFAR-100 40\%]{
			\label{fig:subfig:b} 
			\includegraphics[width=0.23\textwidth]{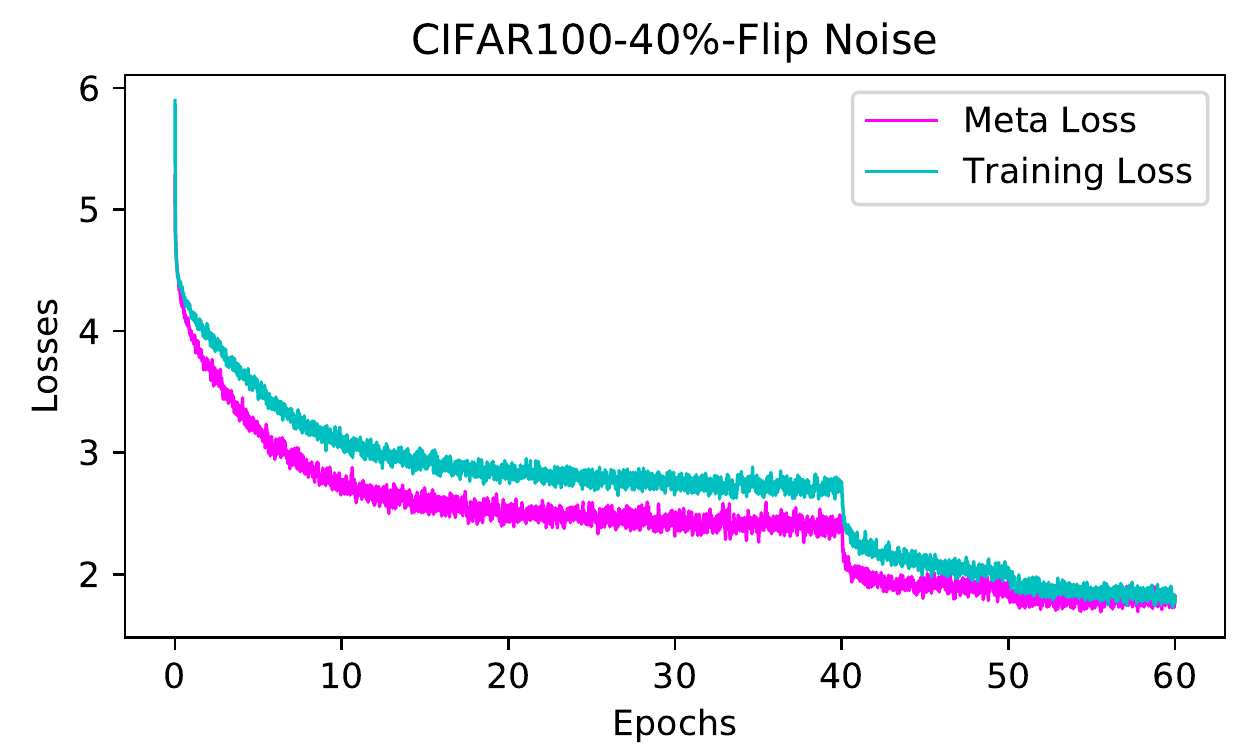}}
		\caption{Training and meta loss tendency curves on CIFAR dataset with varying noise rates under \textbf{flip noise}.}
		\label{supp6} 
	\end{figure}

\end{document}